\documentclass{article}

\usepackage{microtype}
\usepackage{graphicx}
\usepackage{booktabs} 

\usepackage{hyperref}
\usepackage[accepted]{icml2019}

\usepackage{amsmath,amsfonts,amssymb,amsthm,array}
\usepackage{amsmath}
\usepackage{xspace}
\usepackage{bm}
\usepackage{subfig}
\usepackage{mathtools}
\usepackage{multirow}
\usepackage{caption}
\usepackage{chngcntr}

\usepackage{blkarray}

\usepackage{tikz}
\usetikzlibrary{arrows,patterns}
\usetikzlibrary{positioning}
\usetikzlibrary{backgrounds}
\usetikzlibrary{arrows.meta}


\usepackage{amsmath,amsfonts,bm}









\def\eqref#1{equation~\ref{#1}}









\def\1{\bm{1}}








\def\vx{{\bm{x}}}
\def\vy{{\bm{y}}}
\def\vz{{\bm{z}}}



\DeclareMathAlphabet{\mathsfit}{\encodingdefault}{\sfdefault}{m}{sl}
\SetMathAlphabet{\mathsfit}{bold}{\encodingdefault}{\sfdefault}{bx}{n}











\newcommand{\E}{\mathbb{E}}

\newcommand{\R}{\mathbb{R}}



\newcommand{\itr}[2]{{#1}^{(#2)}}
\newcommand{\Nin}[1]{\mathcal{N}^{\text{in}}_{#1}}
\newcommand{\Nout}[1]{\mathcal{N}^{\text{out}}_{#1}}
\newcommand{\defeq}{\coloneqq}
\newcommand{\ones}{\bm{1}}


\newcommand{\Exp}{\mathbb{E}}

\newcommand{\bP}{\mathbf{P}}

\newcommand{\bX}{\mathbf{X}}

\newcommand{\bZ}{\mathbf{Z}}

\newcommand{\cD}{{\cal D}}

\newcommand{\cF}{{\cal F}}

\DeclareGraphicsRule{.tif}{png}{.png}{`convert #1 `dirname #1`/`basename #1 .tif`.png}
\newcommand\numberthis{\addtocounter{equation}{1}\tag{\theequation}}

\newcommand{\ie}{\textit{i.e.}}

\newcommand{\norm}[1]{\left\Vert #1 \right\Vert}
\newcommand{\abs}[1]{\left\vert #1 \right\vert}
\newcommand{\T}{\top}
\newcommand{\order}[1]{\mathcal{O}\left( #1 \right)}

\newcommand{\xbar}{\overline{\bm{x}}}
\newcommand{\nbar}{\overline{\bm{n}}}
\newcommand{\ybar}{\overline{\bm{y}}}

\newcommand{\ssgrad}[4]{\nabla F_{#1}( {#2}^{(#4)}_{#1}; {#3}^{(#4)}_{#1} )}

\theoremstyle{plain}
\newtheorem{thm}{Theorem}[]
\newtheorem{lem}[thm]{Lemma}

\theoremstyle{remark}

\newcommand{\AllReduce}{\textsc{AllReduce}\xspace}
\newcommand{\PushSum}{\textsc{PushSum}\xspace}


\icmltitlerunning{Stochastic Gradient Push for Distributed Deep Learning}

\begin{document}
\twocolumn[
\icmltitle{Stochastic Gradient Push for Distributed Deep Learning}



\icmlsetsymbol{equal}{*}

\begin{icmlauthorlist}
\icmlauthor{Mahmoud Assran}{fb,mg}
\icmlauthor{Nicolas Loizou}{fb,ed}
\icmlauthor{Nicolas Ballas}{fb}
\icmlauthor{Mike Rabbat}{fb}
\end{icmlauthorlist}

\icmlaffiliation{fb}{Facebook AI Research, Montr\'{e}al, QC, Canada}
\icmlaffiliation{mg}{Department of Electrical and Computer Engineering, McGill University, Montr\'{e}al, QC, Canada}
\icmlaffiliation{ed}{School of Mathematics, University of Edinburgh, Edinburgh, Scotland}

\icmlcorrespondingauthor{Mahmoud Assran}{mahmoud.assran@mail.mcgill.ca}

\icmlkeywords{optimization,distributed,large scale,deep learning}

\vskip 0.3in
]



\printAffiliationsAndNotice{}  

\begin{abstract}
Distributed data-parallel algorithms aim to accelerate the training of deep neural networks by parallelizing the computation of large mini-batch gradient updates across multiple nodes. Approaches that synchronize nodes using exact distributed averaging (e.g., via \AllReduce) are sensitive to stragglers and communication delays. The \PushSum gossip algorithm is robust to these issues, but only performs approximate distributed averaging. This paper studies Stochastic Gradient Push (SGP), which combines \PushSum with stochastic gradient updates. We prove that SGP converges to a stationary point of smooth, non-convex objectives at the same sub-linear rate as SGD, and that all nodes achieve consensus.
We empirically validate the performance of SGP on image classification (ResNet-50, ImageNet) and machine translation (Transformer, WMT'16 En-De) workloads.
Our code will be made publicly available.
\end{abstract}

\section{Introduction}
\label{sec:intro}

Deep Neural Networks (DNNs) are the state-of-the art machine learning approach in many application areas, including computer vision~\citep{he2016deep} and natural language processing~\citep{vaswani2017attention}. Stochastic Gradient Descent (SGD) is the current workhorse for training neural networks. The algorithm optimizes the network parameters, $\bm{x}$, to minimize a loss function, $f(\cdot)$, through gradient descent, where the loss function's gradients are approximated using a subset of training examples (a mini-batch). 
DNNs often require large amounts of training data and trainable parameters, necessitating non-trivial computational requirements~\citep{wu2016google,mahajan2018exploring}.

Large mini-batch parallel SGD is usually adopted for distributed training of deep networks~\cite{goyal2017accurate,li2014scaling}.
Worker nodes compute local mini-batch gradients of the loss function on different subsets of the data, and then calculate an exact inter-node average gradient using either the \AllReduce communication primitive, in synchronous implementations~\cite{goyal2017accurate,akiba2017extremely}, or using a central parameter server, in asynchronous implementations~\cite{dean2012large}. Using a parameter server to aggregate gradients introduces a potential bottleneck and a central point of failure~\cite{lian2017can}. The \AllReduce primitive computes the exact average gradient at all workers in a decentralized manner, avoiding issues associated with centralized communication and computation.
However, exact averaging algorithms like \AllReduce are not robust in communication-constrained settings, \ie, where the network bandwidth is a significant bottleneck.

This issue motivates the investigation of a decentralized and inexact version of SGD to reduce the communication overhead associated with distributed training.
There have been numerous decentralized optimization algorithms proposed and studied in the control-systems literature that leverage gossip-based approaches for the computation of aggregate information; see the survey of~\citet{Nedic2018network} and references therein.
State-of-the-art gossip-based optimization methods build on the \PushSum algorithm for distributed averaging~\cite{kempe2003gossip, Nedic2018network}.
Rather than computing exact averages (as with \AllReduce), this line of work uses less-coupled message passing and computes approximate averages. The tradeoff is that approximate distributed averaging also injects additional noise in the average gradient estimate.


In this work we study Stochastic Gradient Push (SGP), an algorithm blending parallel SGD and \PushSum.
SGP enables the use of generic communication topologies that may be directed (asymmetric), sparse, and time-varying. 
In contrast, existing gossip-based approaches explored in the context of training DNNs~\cite{lian2017can,Jiang2017collaborative} are constrained to use symmetric communication (\ie, if node $i$ sends to $j$, then $i$ must also receive from $j$ before proceeding) and thus inherently require deadlock-avoidance, and more synchronization, making them slower and more sensitive to stragglers. Moreover, SGP can be seen as a generalization of parallel SGD and these previous approaches.

SGP was first proposed in the control systems literature for minimizing the sum of \textit{strongly-convex} functions~\cite{Nedic2016stochastic}.
We make three main contributions.
1)~We propose and analyze a novel variant of SGP, called Overlap SGP, which overlaps communication and computation to hide communication overhead. 
2)~We provide novel theoretical guarantees, proving that SGP (and Overlap SGP) converges to a stationary point of smooth \textit{non-convex} functions at an $\mathcal{O}(1/\sqrt{nK})$ rate,
for an appropriately chosen step-size,
where $n$ is the number of nodes and $K$ is the number of iterations. 
3)~We conduct experiments on image classification (ResNet50, ImageNet) and neural machine translation tasks (Transformer, WMT16 En-De), demonstrating that SGP and Overlap SGP can substantially accelerate training of deep neural networks, by reducing communication overhead and mitigating the effects of stragglers. Given a fixed runtime budget, we find that SGP and Overlap SGP can train models that achieve better final train/validation accuracies than \AllReduce SGD in communication-constrained settings.

For example, we train a ResNet-50 on ImageNet using 256 GPUs spread across 32 compute nodes (8 GPUs / node), where communication between nodes is over 10~Gbps Ethernet.
In this setting \AllReduce SGD achieves $76.2\%$ top-1 validation accuracy, while Overlap SGP achieves $76.2\%$ accuracy in only 1/3 of the time, and $77.1\%$ accuracy in 1/2 the time.
Similarly, when training a Transformer network on the WMT'16 En-De translation task, SGP runs approximately 1.5$\times$ faster than \AllReduce when using 8 GPUs and achieves a BLEU score that is 0.6 points higher.
While our theory focuses on analyzing SGP, which combines \PushSum with SGD, our experiments illustrate that \PushSum can similarly be efficiently combined with other optimizers like Nesterov's accelerated gradient method~\cite{nesterov1983accelerated} and Adam~\cite{kingma2014adam}.

\section{Preliminaries}

\textbf{Problem formulation.} We consider the setting where a network of $n$ nodes cooperates to solve the stochastic consensus optimization problem
\begin{equation}
\begin{array}{l l}
\min_{\bm{x}_i \in \R^d, i=1,\dots,n} &\frac{1}{n} \sum_{i=1}^n \Exp_{\xi_i \sim D_i} F_i(\bm{x}_i; \xi_i) \\
\text{subject to } &\bm{x}_i = \bm{x}_j, \forall i, j = 1,\dots,n.
\end{array}
\label{eq:prob}
\end{equation}
Each node has local data following a distribution $D_i$, and the nodes wish to cooperate to find the parameters $\bm{x}$ of a DNN that minimizes the average loss with respect to their data, where $F_i$ is the loss function at node $i$.
Moreover, the goal codified in the constraints is for the nodes to reach agreement (\ie, consensus) on the solution they report. We assume that nodes can locally evaluate stochastic gradients $\nabla F_i(\bm{x}_i; \xi_i)$, $\xi_i \sim D_i$, but they must communicate to access information about the objective functions at other nodes.

\textbf{Distributed averaging.} The problem described above encompasses distributed training based on data parallelism, where the canonical approach is parallel stochastic gradient descent: for an overall mini-batch of size $nb$, each node computes a local stochastic mini-batch gradient using $b$ samples, and then the nodes use the \textsc{AllReduce} communication primitive to compute the average gradient at every node. Let $f_i(\bm{x}_i) = \Exp_{\xi_i \sim D_i} F_i(\bm{x}_i; \xi_i)$ denote the objective at node $i$, and let $f(\bm{x}) = \frac{1}{n} \sum_{i=1}^n f_i(\bm{x})$ denote the overall objective. Since $\nabla f(\bm{x}) = \frac{1}{n} \sum_{i=1}^n \nabla f_i(\bm{x})$, averaging gradients via \textsc{AllReduce} provides an exact stochastic gradient of $f$.

\textbf{Approximate distributed averaging.} In this work we explore the alternative approach of using a gossip algorithm for approximate distributed averaging---specifically, the \PushSum algorithm~\cite{kempe2003gossip}. 
Let $\bm{y}_i^{(0)} \in \R^d$ be a vector at node $i$, and consider the goal of computing the average vector $\frac{1}{n} \sum_{i=1}^n \bm{y}_i^{(0)}$ at all nodes. Concatenate the initial vectors into a matrix $\bm{Y}^{(0)} \in \R^{n \times d}$ with one row per node. Typical gossip iterations have the form $\bm{Y}^{(k+1)} = \bm{P}^{(k)} \bm{Y}^{(k)}$ where $\bm{P}^{(k)} \in \R^{n \times n}$ is referred to as the mixing matrix, and conforms to the underlying communication topology. This corresponds to the update $\bm{y}_i^{(k+1)} = \sum_{j=1}^n p_{i,j}^{(k)} \bm{y}_j^{(k)}$ at node $i$. To implement this update, node $i$ only needs to receive messages from other nodes $j$ for which $p_{i,j}^{(k)} \ne 0$, so sparser matrices $\bm{P}^{(k)}$ correspond to less communications.

Drawing inspiration from the theory of Markov chains~\citep{Seneta1981}, the mixing matrices $\bm{P}^{(k)}$ are designed to be column stochastic (each column must sum to 1). Then, under mild conditions (ensuring that information from every node eventually reaches all other nodes) one can show that $\lim_{K \rightarrow \infty} \prod_{k=0}^{K} \bm{P}^{(k)} = \bm{\pi} \ones^\T$, where $\bm{\pi}$ is the ergodic limit of the chain and $\ones$ is a vector with all entries equal to $1$. Consequently, the gossip iterations converge to a limit $\bm{Y}^{(\infty)} = \bm{\pi} (\ones^\T \bm{Y}^{(0)})$; \ie, the value at node $i$ converges to $\bm{y}_i^{(\infty)} = \pi_i \sum_{j=1}^n \bm{y}_j^{(0)}$. 

When the matrices $\bm{P}^{(k)}$ are symmetric, it is straightforward to design the algorithm so that $\pi_i = 1/n$ for all $i$ by making $\bm{P}^{(k)}$ doubly-stochastic (each row and each column must sum to 1). However, symmetric $\bm{P}^{(k)}$ has strong practical ramifications, such as requiring care in the implementation to avoid deadlocks. The \PushSum algorithm only requires that $\bm{P}^{(k)}$ be column-stochastic, and not necessarily symmetric (so node $i$ may send to node $j$, but not necessarily vice versa). However, when the matrices $\bm{P}^{(k)}$ are asymmetric, it is very difficult, and often not possible, to design the algorithm so that $\pi_i = 1/n$. Instead, one additional scalar parameter $w_i^{(k)}$ is maintained at each node. The parameter is initialized to $w_i^{(0)} = 1$ for all $i$, and updated using the same linear iteration, $\bm{w}^{(k+1)} = \bm{P}^{(k)} \bm{w}^{(k)}$. Consequently, the parameter converges to $\bm{w}^{(\infty)} = \bm{\pi} (\ones^\T \bm{w}^{(0)})$, or $w_i^{(\infty)} = \pi_i n$ at node $i$. Thus each node can recover the average of the initial vectors by computing the \emph{de-biased} ratio $\bm{y}_i^{(\infty)} / w_i^{(\infty)}$. In practice, we stop after a finite number of gossip iterations $K$ and compute $\bm{y}_i^{(K)} / w_i^{(K)}$. 


\PushSum provides a mechanism to approximately synchronize parameters across a network of nodes. In the next section we describe the Stochastic Gradient Push algorithm, where \PushSum is used to approximately synchronize (average) parameters across nodes running stochastic gradient descent locally. The same approach can easily be modified to obtain decentralized versions of other popular optimizers such as SGD with momentum or Adam, as illustrated in the experimental section.

\section{Stochastic Gradient Push}
\label{sec:sgp}
\begin{algorithm}[t]
    \small
	\caption{Stochastic Gradient Push (SGP) \label{SGPalg}}
  	\begin{algorithmic}[1]
  	    \REQUIRE{Initialize $\gamma > 0$, $\itr{\bm{x}}{0}_i = \itr{\bm{z}}{0}_i \in \R^d$ and $\itr{w}{0}_i=1$ for all nodes $i \in \{1, 2, \ldots, n \}$}\\
    	\FOR{$k=0,1,2,\cdots, K$, at node $i$,}
        	\STATE {Sample new mini-batch $\itr{\xi}{k}_i \sim {\cD_i}$ from local distribution}
            \STATE {Compute mini-batch gradient at $\itr{\bm{z}}{k}_i$: $\nabla \bm{F}_i(\itr{\bm{z}}{k}_i; \itr{\xi}{k}_i)$}
            \STATE {$\itr{\bm{x}}{k + \frac{1}{2}}_i = \itr{\bm{x}}{k}_i -\gamma \nabla \bm{F}_i(\itr{\bm{z}}{k}_i; \itr{\xi}{k}_i)$}
            \STATE {Send $\big(p_{j,i}^{(k)} \bm{x}_i^{(k+\frac{1}{2})}, p_{j,i}^{(k)} w_i^{(k)}\big)$ to out-neighbors;\par receive $\big(p_{i,j}^{(k)} \bm{x}_j^{(k + \frac{1}{2})}, p_{i,j}^{(k)} w_j^{(k)}\big)$ from in-neighbors}
            \STATE {$\itr{\bm{x}}{k + 1}_i = \sum_{j} \itr{p}{k}_{i,j} \itr{\bm{x}}{k + \frac{1}{2}}_j$}
            \STATE {$\itr{w}{k + 1}_i = \sum_{j} \itr{p}{k}_{i,j} \itr{w}{k}_j$}
            \STATE {$\itr{\bm{z}}{k + 1}_i = \itr{\bm{x}}{k + 1}_i / \itr{w}{k + 1}_i$}
 		\ENDFOR
	\end{algorithmic}
\end{algorithm}

The \emph{Stochastic Gradient Push} (SGP) method \cite{Nedic2016stochastic} for solving \eqref{eq:prob} is obtained by interleaving one local stochastic gradient descent update at each node with one iteration of \PushSum. Pseudocode is shown in Alg.~\ref{SGPalg}. Each node maintains three variables: the model parameters $\bm{x}_i^{(k)}$ at node $i$, the scalar \PushSum weight $w_i^{(k)}$, and the de-biased parameters $\bm{z}_i^{(k)} = \bm{x}_i^{(k)} / w_i^{(k)}$. The vector $\itr{\bm{x}}{0}_i$ can be initialized arbitrarily.  At each iteration, every node performs a local SGD step (lines 2--4) followed by one step of \PushSum for approximate distributed averaging (lines 5--8). Note that the gradients are evaluated at the de-biased parameters $\bm{z}_i^{(k)}$ in line~3, and they are then used to update $\bm{x}_i^{(k)}$, in line~4. All communication takes place in line~5, and each message contains two parts, the \PushSum numerator $\bm{x}_i^{(k)}$ and \PushSum weight $w_i^{(k)}$. 


The non-zero entries in the mixing matrix $\bm{P}^{(k)}$ define the communication topology at each iteration $k$.  SGP can leverage various communication topologies including sparse, asymmetric or time-varying networks. We are mainly interested in the case where the mixing matrices $\bm{P}^{(k)}$ are sparse in order to have low communication overhead. However, we point out that when the nodes' initial values are identical, $\bm{x}_i^{(0)} = \bm{x}_j^{(0)}$ for all $i,j \in [n]$, and every entry of $\bm{P}^{(k)}$ is equal to $1/n$, then SGP is mathematically equivalent to parallel SGD using \textsc{AllReduce}. 

Individual nodes do not need to  know the entire mixing matrix at each time step. Each node $i$ must only know/choose its outgoing mixing weights, which correspond to the $i^{\text{th}}$ column of $\bm{P}^{(k)}$. Each node can consequently choose its mixing weights independently of the other nodes in the network. In Appendix~\ref{sec:implementation_details} we describe one way to design a sequence of mixing matrices that satisfies the requirements of our theory (described in the next section) and for which each node sends and receives exactly one message at every iteration; all appendices are in the Supplementary Material.

\textbf{Overlapping communication and computation.}
Although SGP does not use network-wide collective communication primitives like \AllReduce, the implementation of Alg.~\ref{SGPalg} requires using blocking sends and receives; \ie, nodes do not proceed to line~6 until they have received messages from all in-neighbors at that iteration. 
To hide the communication overhead, we can overlap gradient computation with communication.
For a given $\tau \geq 0$, nodes send messages to their out-neighbours every $\tau$ iterations (non-blocking), and can receive incoming messages at any time in-between communication intervals.
If a node hasn't received messages from its in-neighbors after $\tau$ iterations, then it blocks and waits to receive the messages.

Specifically, the communication in line 5 in Alg.~\ref{SGPalg} is made non-blocking, and each node may perform $\tau$ gradient update steps while it occurs.
This may result in the gossip updates in lines 6 and 7 incorporating outdated messages, $(p_{j,i}^{(k^\prime)} \bm{x}_i^{(k^\prime +\frac{1}{2})}, p_{j,i}^{(k^\prime )} w_i^{(k^\prime )})$, where $k - k^\prime \le \tau$.
However, as long as the delay $k - k'$ remains bounded, SGP is still guaranteed to converge (see Theorem~2 below).
We refer to this method, with delay bound $\tau$ as $\tau$-overlap SGP ($\tau$-OSGP). SGP is equivalent to $\tau$-OSGP with $\tau=0$.
In practice we find that taking $\tau$ to be 1 or 2 is sufficient to hide effectively all of the communication overhead. Pseudocode for $\tau$-OSGP is provided in Appendix~\ref{appendix:osgp}.




\section{Theoretical guarantees.}
SGP was first proposed and analyzed in \citep{Nedic2016stochastic} assuming the local objectives $f_i(\bm{x})$ are strongly convex. Here we provide convergence results in the more general setting of smooth, non-convex objectives with arbitrary, but bounded, message delays. We make the following four assumptions:
\begin{enumerate}
    \item ($L$-smooth) There exists a constant $L > 0$ such that $\| \nabla f_i(\bm{x}) - \nabla f_i(\bm{y})\| \le L \|\bm{x} - \bm{y}\|$.
    \item (Bounded variance) There exist finite positive constants $\sigma^2$ and $\zeta^2$ such that
    \begin{align*}
    \Exp_{\xi \sim D_i} \|\nabla F_i(\bm{x}; \xi) -\nabla f_i(\bm{x})\|^2 &\leq \sigma^2 \quad \forall i, \forall \bm{x}, \text{ and } \\
    \label{boundedVariance2}
    \frac{1}{n} \sum_{i=1}^n \| \nabla f_i(\bm{x}) - \nabla f(\bm{x}) \|^2 &\leq \zeta^2  \quad \forall \bm{x}.
    \end{align*}
    Thus $\sigma^2$ bounds the variance of stochastic gradients at each node, and $\zeta^2$ quantifies the similarity of data distributions at different nodes.

    \item (Bounded delay) There exists a finite constant $\tau \in \mathbb{Z}_{+}$, such that the delay, if overlapping communication and computation, satisfies $k^\prime - k \leq \tau$.
    
    \item (Mixing connectivity) To each mixing matrix $\bm{P}^{(k)}$ we can associate a graph with vertex set $\{1,\dots,n\}$ and edge set $E^{(k)} = \{(i,j) \colon p_{i,j}^{(k)} > 0\}$; \ie, with edges $(i,j)$ from $j$ to $i$ if $i$ receives a message from $j$ at iteration $k$. By convention, we take each node to be an in-neighbor of itself (each node in the graph has a self-loop), and we assume that there exists finite, positive integers, $B$ and $\Delta$, such that the graph with edge set $\bigcup_{k = lB}^{(l+1)B - 1} E^{(k)}$ is strongly connected and has diameter at most $\Delta$ for every $l \ge 0$.
    \footnote{For the purpose of analysis, we model delays by augmenting the mixing-matrices $\bm{P}^{(k)}$, and the corresponding graph topologies, with virtual nodes and edges that store the state of information that was transmitted, but not yet received. We omit this description from the main text to simplify the discussion, and relegate this discussion to the supplementary material \textit{Modeling message delays} in Appendix~\ref{sec:proofs}.}
\end{enumerate}

\smallskip
Let $\xbar^{(k)} = \frac{1}{n}\sum_{i=1}^n \bm{x}_i^{(k)}$. \citet{lian2017can} define that a decentralized algorithm for solving \eqref{eq:prob} converges if, for any $\epsilon > 0$, it eventually satisfies
\begin{equation}
\frac{1}{K} \sum_{k=1}^K \Exp{} \| \nabla f(\xbar^{(k)}) \|^2 \le \epsilon. \label{eq:convergence-criterion1}
\end{equation}
We show that SGP converges in this sense.



\begin{thm} \label{thm:avg-convergence}
Suppose that Assumptions~1--4 hold, and run SGP for $K$ iterations with step-size $\gamma = \sqrt{n / K}$. Let $f^* = \inf_{\bm{x}} f(\bm{x})$ and assume that $f^* > -\infty$.
There exist constants $C$ and $q \in [0,1)$, which depend on the diameter of the network $\Delta$, the upper bound on the delays $\tau$, and the sequence of mixing matrices $\bm{P}^{(k)}$, such that when the total number of iterations satisfies
\begin{align*}
	K \geq \max \Bigg\{& \frac{nL^4C^4 60^2}{(1-q)^4}, \frac{L^4C^4 P_1^2 n}{(1-q)^4 ( f( \itr{\xbar}{0} ) - f^*+ \frac{L\sigma^2}{2})^2}, \\
	&\frac{L^2 C^2 n P_2}{(1-q)^2 ( f( \itr{\xbar}{0} ) - f^*+ \frac{L\sigma^2}{2})}, n \Bigg\}
	\numberthis \label{BoundK}
\end{align*}
where $P_1= 4(\sigma^2+3\zeta^2)n+\frac{\sum^n_{i=1} \norm{ \itr{\bm{x}_i}{0} }^2}{n}$ and $P_2=\sigma^2+3\zeta^2L^2C^2 +2\frac{\sum^n_{i=1} \norm{ \itr{\bm{x}_i}{0} }^2}{n}$,
then
$$
\frac{\sum^{K-1}_{k=0} \E \norm{ \nabla f(\itr{\xbar}{k}) }^2 }{K} \leq \frac{12 ( f( \itr{\xbar}{0} ) - f^*+ \frac{L\sigma^2}{2})}{\sqrt{nK}}.
$$
\end{thm}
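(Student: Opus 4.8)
The plan is to decouple the analysis into a descent estimate for the network-average iterate $\itr{\xbar}{k}=\frac1n\sum_{i=1}^n\itr{\bm{x}}{k}_i$ and a consensus estimate bounding how far the individual de-biased iterates $\itr{\bm{z}}{k}_i$ drift away from $\itr{\xbar}{k}$, then to combine the two and substitute $\gamma=\sqrt{n/K}$. The two leading terms in the claimed bound, $\tfrac{2(f(\itr{\xbar}{0})-f^*)}{\sqrt{nK}}$ and $\tfrac{L\sigma^2}{\sqrt{nK}}$, come from the descent estimate; the conditions on $K$ in \eqref{BoundK} are exactly what is needed to push the consensus contribution below $\tfrac{10(f(\itr{\xbar}{0})-f^*+L\sigma^2/2)}{\sqrt{nK}}$.

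\textbf{Descent estimate.} Since each $\bm{P}^{(k)}$ is column-stochastic ($\ones^\T\bm{P}^{(k)}=\ones^\T$), the gossip step preserves $\sum_i\itr{\bm{x}}{k}_i$, so $\itr{\xbar}{k+1}=\itr{\xbar}{k}-\tfrac{\gamma}{n}\sum_{i=1}^n\nabla\bm{F}_i(\itr{\bm{z}}{k}_i;\itr{\xi}{k}_i)$. Apply $L$-smoothness of $f=\tfrac1n\sum_i f_i$ along this trajectory, take conditional expectations (the stochastic gradients are unbiased for $\nabla f_i(\itr{\bm{z}}{k}_i)$ with variance $\le\sigma^2$, and the per-node noise is independent so its average has variance $\le\sigma^2/n$), split the first-order term with the polarization identity $\langle\va,\vb\rangle=\tfrac12(\norm{\va}^2+\norm{\vb}^2-\norm{\va-\vb}^2)$, and use $L$-smoothness once more to get $\norm{\nabla f(\itr{\xbar}{k})-\tfrac1n\sum_i\nabla f_i(\itr{\bm{z}}{k}_i)}^2\le\tfrac{L^2}{n}\sum_i\norm{\itr{\bm{z}}{k}_i-\itr{\xbar}{k}}^2$. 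For $\gamma\le1/L$ the $\norm{\tfrac1n\sum_i\nabla f_i(\itr{\bm{z}}{k}_i)}^2$ terms cancel, leaving
\begin{align*}
\Exp f(\itr{\xbar}{k+1}) &\le \Exp f(\itr{\xbar}{k}) - \tfrac{\gamma}{2}\Exp\norm{\nabla f(\itr{\xbar}{k})}^2 \\
&\quad + \tfrac{\gamma L^2}{2n}\sum_{i=1}^n\Exp\norm{\itr{\bm{z}}{k}_i-\itr{\xbar}{k}}^2 + \tfrac{L\gamma^2\sigma^2}{2n}.
\end{align*}
Telescoping over $k=0,\dots,K-1$, using $f^*\le\Exp f(\itr{\xbar}{K})$, and dividing by $\gamma K/2$,
\begin{align*}
\tfrac1K\sum_{k=0}^{K-1}\Exp\norm{\nabla f(\itr{\xbar}{k})}^2 &\le \tfrac{2(f(\itr{\xbar}{0})-f^*)}{\gamma K} + \tfrac{L\gamma\sigma^2}{n} \\
&\quad + \tfrac{L^2}{nK}\sum_{k=0}^{K-1}\sum_{i=1}^n\Exp\norm{\itr{\bm{z}}{k}_i-\itr{\xbar}{k}}^2.
\end{align*}

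\textbf{Consensus estimate.} Unrolling the coupled linear recursions for the stacked iterates and the \PushSum weights writes $\itr{\bm{x}}{k}_i$ and $\itr{w}{k}_i$ as combinations, through the products $\bm{P}^{(k-1)}\cdots\bm{P}^{(\ell)}$, of the initial data and the past gradient perturbations $\gamma\nabla\bm{F}_j(\itr{\bm{z}}{\ell}_j;\itr{\xi}{\ell}_j)$, while $\itr{\xbar}{k}$ is the same combination with all weights equal to $1/n$. Assumption~4, with the message delays of Assumption~3 folded in via the virtual-node augmentation of the appendix, yields two classical \PushSum facts: (i) these products approach a rank-one limit geometrically, i.e.\ there are $C$ and $q\in[0,1)$ (depending on $\Delta$, $B$, $\tau$ and the $\bm{P}^{(k)}$) such that the $(i,j)$-entry of $\bm{P}^{(k-1)}\cdots\bm{P}^{(\ell)}$ lies within $Cq^{k-\ell}$ of $\itr{w}{k}_i/n$; and (ii) the weights stay uniformly bounded below, $\itr{w}{k}_i\ge w_{\min}>0$ with $w_{\min}$ depending only on $n,\Delta,B,\tau$. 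Fact~(ii) converts $\norm{\itr{\bm{x}}{k}_i-\itr{w}{k}_i\itr{\xbar}{k}}$ into $\norm{\itr{\bm{z}}{k}_i-\itr{\xbar}{k}}$, and fact~(i) bounds the former by $\sum_{\ell<k}Cq^{k-\ell}\big(\gamma\sum_j\norm{\nabla\bm{F}_j(\itr{\bm{z}}{\ell}_j;\itr{\xi}{\ell}_j)}\big)+Cq^k\sum_j\norm{\itr{\bm{x}}{0}_j}$. Squaring via a weighted Cauchy--Schwarz step (weights $\propto q^{k-\ell}$), summing over $i$ and $k$, and exchanging the order of summation turns the geometric decay into a factor $\mathcal{O}(C^2/(1-q)^2)$, so (absorbing $w_{\min}$ and $n$ into the constant)
\begin{align*}
\sum_{k=0}^{K-1}\sum_{i=1}^n\Exp\norm{\itr{\bm{z}}{k}_i-\itr{\xbar}{k}}^2 &\lesssim \frac{C^2}{(1-q)^2}\bigg(\gamma^2\sum_{\ell=0}^{K-1}\sum_{j=1}^n\Exp\norm{\nabla\bm{F}_j(\itr{\bm{z}}{\ell}_j;\itr{\xi}{\ell}_j)}^2 \\
&\quad + \sum_{j=1}^n\norm{\itr{\bm{x}}{0}_j}^2\bigg).
\end{align*}

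\textbf{Closing the loop, and the main obstacle.} The stochastic-gradient second moments above are bounded, using the variance assumption and $L$-smoothness, by $\mathcal{O}\big(\sigma^2+\zeta^2+L^2\norm{\itr{\bm{z}}{\ell}_j-\itr{\xbar}{\ell}}^2+\norm{\nabla f(\itr{\xbar}{\ell})}^2\big)$ (split $\nabla f_j(\itr{\bm{z}}{\ell}_j)$ as $(\nabla f_j(\itr{\bm{z}}{\ell}_j)-\nabla f_j(\itr{\xbar}{\ell}))+(\nabla f_j(\itr{\xbar}{\ell})-\nabla f(\itr{\xbar}{\ell}))+\nabla f(\itr{\xbar}{\ell})$). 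This re-introduces both the consensus error and the averaged gradient norm on the right, so we substitute the consensus estimate into the descent estimate; since $\gamma^2=n/K$, demanding that the coefficients multiplying the re-introduced $\sum_k\sum_i\Exp\norm{\itr{\bm{z}}{k}_i-\itr{\xbar}{k}}^2$ and $\sum_k\Exp\norm{\nabla f(\itr{\xbar}{k})}^2$ terms be at most $\tfrac12$ (so they can be absorbed on the left), together with $\gamma\le1/L$, becomes precisely a collection of lower bounds on $K$ of the form in \eqref{BoundK} --- each one polynomial in $L$, $C$, $1/(1-q)$, $n$ and the problem constants packaged as $P_1$ and $P_2$ in the statement. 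Under those conditions the consensus contribution to $\tfrac1K\sum_k\Exp\norm{\nabla f(\itr{\xbar}{k})}^2$ is at most $\tfrac{10(f(\itr{\xbar}{0})-f^*+L\sigma^2/2)}{\sqrt{nK}}$, and adding the two descent terms yields the claimed bound. The main obstacle is exactly the consensus estimate together with this loop-closing bookkeeping: establishing the geometric-mixing and weight-lower-bound lemmas uniformly over a directed, time-varying, delayed topology (this is where the virtual-node construction and the dependence of $C$, $q$, $w_{\min}$ on $\Delta$, $B$, $\tau$, $n$ enter), and then tracking all constants carefully enough through the mutual substitution that the self-referential terms really are absorbed under the stated threshold on $K$.
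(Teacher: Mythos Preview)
Your proposal is correct and follows essentially the same route as the paper: the paper's Lemma~\ref{MainLemma} is precisely your descent estimate (the paper retains the $\tfrac{1-L\gamma}{2}\norm{\tfrac1n\sum_i\nabla f_i(\itr{\bm{z}}{k}_i)}^2$ term before dropping it, while you drop it immediately under $\gamma\le1/L$), your consensus estimate is the content of the paper's Lemmas~\ref{MidosTheorem}--\ref{BoundMk}, and your loop-closing step is exactly how the paper handles the self-referential terms via the constants $D_1,D_2$ and the threshold \eqref{BoundK}. The only cosmetic difference is that the paper's per-node mixing bound (Lemma~\ref{MidosTheorem}) is stated with node~$i$'s own gradient history rather than the sum over all nodes, but after averaging over $i$ this coincides with your formulation.
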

The proof is given in Appendix~\ref{sec:proofs}, where we also provide precise expressions for the constants $C$ and $q$.

Theorem~\ref{thm:avg-convergence} shows that the average of the nodes' parameters, $\xbar^{(k)}$, converges, but it does not directly say anything about the parameters at each node. In fact, we can show that:

\begin{thm} 
\label{secondTheorem}
Under the same assumptions as in Theorem~\ref{thm:avg-convergence},
\[
	\frac{1}{nK} \sum_{k=0}^{K-1} \sum_{i=1}^n \Exp{} \norm{\xbar^{(k)} - \bm{z}_i^{(k)}}^2 \le \order{\frac{1}{K} + \frac{1}{K^{3/2}}},
\]
and
\[
    \frac{1}{nK}\sum_{k=0}^{K-1}\sum^n_{i=1} \Exp \norm{\nabla f(\bm{z}_i^k)}^2 \leq O\left(\frac{1}{\sqrt{nK}} + \frac{1}{K} + \frac{1}{K^{3/2}}\right).
\]
\end{thm}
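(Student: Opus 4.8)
Both bounds are corollaries of the estimates assembled to prove Theorem~\ref{thm:avg-convergence}. The plan is: (i) extract the \PushSum consensus-contraction bound from Appendix~\ref{sec:proofs}; (ii) combine it with the second-moment bound on the stochastic gradients and a self-referential absorption to control the averaged consensus error; and (iii) transfer the gradient bound at $\xbar^{(k)}$ to the individual iterates $\bm{z}_i^{(k)}$ via $L$-smoothness.

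For step (i), I would write the $\bm{x}$-recursion as $\bm{X}^{(k+1)}=\bm{P}^{(k)}(\bm{X}^{(k)}-\gamma\bm{G}^{(k)})$, where the $i$-th row of $\bm{G}^{(k)}$ is $\nabla F_i(\bm{z}_i^{(k)};\xi_i^{(k)})$; column-stochasticity of $\bm{P}^{(k)}$ gives $\xbar^{(k+1)}=\xbar^{(k)}-\frac{\gamma}{n}\sum_j\nabla F_j(\bm{z}_j^{(k)};\xi_j^{(k)})$. Under Assumptions~3--4, after augmenting the topology with virtual delay-nodes, products of the mixing matrices contract geometrically, and Appendix~\ref{sec:proofs} already establishes (with the same $C$, $q\in[0,1)$ as in Theorem~\ref{thm:avg-convergence}) a bound of the form
\[
\Exp\bigl\|\bm{z}_i^{(k)}-\xbar^{(k)}\bigr\|^2 \;\le\; C q^{2k} D_0 \;+\; C\gamma^2\sum_{t=0}^{k-1} q^{2(k-1-t)}\,\frac1n\sum_{j=1}^n\Exp\bigl\|\nabla F_j(\bm{z}_j^{(t)};\xi_j^{(t)})\bigr\|^2,
\]
with $D_0$ collecting the initial data $\tfrac1n\sum_i\|\bm{x}_i^{(0)}\|^2$. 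Summing over $k=0,\dots,K-1$ and exchanging the order of summation bounds $\tfrac1{nK}\sum_{k,i}\Exp\|\bm{z}_i^{(k)}-\xbar^{(k)}\|^2$ by $\tfrac{CD_0}{K(1-q^2)}+\tfrac{C\gamma^2}{K(1-q^2)}\sum_t\tfrac1n\sum_j\Exp\|\nabla F_j(\bm{z}_j^{(t)};\xi_j^{(t)})\|^2$.

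For step (ii), bounded variance, $L$-smoothness and the $\zeta^2$ assumption give $\Exp\|\nabla F_j(\bm{z}_j^{(t)};\xi_j^{(t)})\|^2\le\sigma^2+3\zeta^2+3\Exp\|\nabla f(\xbar^{(t)})\|^2+3L^2\Exp\|\bm{z}_j^{(t)}-\xbar^{(t)}\|^2$. Substituting, the last term reproduces the left-hand side with coefficient $\tfrac{3CL^2\gamma^2}{1-q^2}=\tfrac{3CL^2n}{K(1-q^2)}$, which the $K$-threshold of Theorem~\ref{thm:avg-convergence} makes $\le\tfrac12$, so it can be absorbed. What remains is $\tfrac1{nK}\sum_{k,i}\Exp\|\bm{z}_i^{(k)}-\xbar^{(k)}\|^2\le\tfrac{O(1)}{K}+\tfrac{O(\gamma^2)}{K}\sum_{t=0}^{K-1}(\sigma^2+\zeta^2+\Exp\|\nabla f(\xbar^{(t)})\|^2)$, where $O(\cdot)$ hides $C,q,L,\sigma,\zeta,n$ and the initial data. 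Since $\gamma^2K=n$, the $\sigma^2+\zeta^2$ part is $O(1/K)$, and Theorem~\ref{thm:avg-convergence} gives $\sum_t\Exp\|\nabla f(\xbar^{(t)})\|^2\le\tfrac{12(f(\xbar^{(0)})-f^*+L\sigma^2/2)}{\sqrt{nK}}\,K=O(\sqrt K)$, so the remaining part is $O(\gamma^2\sqrt K/K)=O(K^{-3/2})$ --- this is the first claimed bound. For step (iii), $L$-smoothness of $f$ yields $\|\nabla f(\bm{z}_i^{(k)})\|^2\le 2\|\nabla f(\xbar^{(k)})\|^2+2L^2\|\bm{z}_i^{(k)}-\xbar^{(k)}\|^2$; averaging over $i,k$ and using Theorem~\ref{thm:avg-convergence} ($O(1/\sqrt{nK})$) together with step (ii) ($O(1/K+1/K^{3/2})$) gives the second claimed bound.

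The main obstacle is the geometric consensus-contraction inequality of step (i): establishing it requires the delay-augmented-graph construction and the analysis of inhomogeneous products of column-stochastic matrices under the $B$-connectivity Assumption~4 (which is where the dependence of $C$ and $q$ on $\Delta$ and $\tau$ enters), and this is precisely the work done in Appendix~\ref{sec:proofs} on the way to Theorem~\ref{thm:avg-convergence}. Given that, the rest is routine; the one point to verify is that the $K$-threshold in Theorem~\ref{thm:avg-convergence} is large enough for the absorption in step (ii).
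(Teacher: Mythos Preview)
Your proposal is correct and follows essentially the same route as the paper: the paper's proof likewise starts from the \PushSum consensus-contraction estimate (its Lemma~\ref{MidosTheorem}, squared and summed into Lemma~\ref{BoundMk}), bounds the gradient second moments via $L$-smoothness and the variance assumptions (its Lemma~\ref{boundonnormGradient}), absorbs the self-referential consensus term using the $K$-threshold (encoded there as $D_2\ge 1/2$), then plugs in $\gamma=\sqrt{n/K}$ and Theorem~\ref{thm:avg-convergence} to get the $O(1/K+1/K^{3/2})$ bound, and finally uses exactly your step (iii) inequality $\|\nabla f(\bm{z}_i^{(k)})\|^2\le 2\|\nabla f(\xbar^{(k)})\|^2+2L^2\|\bm{z}_i^{(k)}-\xbar^{(k)}\|^2$ for the second claim. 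The only cosmetic difference is that the paper's per-node contraction bound (Lemma~\ref{MidosTheorem}) involves only node $i$'s own gradient history rather than the node-average you wrote, and squaring it produces additional $O(\gamma q^k)$ cross-terms; these do not change the final orders once you average over $i$ and sum over $k$, so your simplified form is adequate for the $O(\cdot)$ conclusion.
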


The proof is also given in Appendix~\ref{sec:proofs}. This result shows that as $K$ grows, the de-biased variables $\bm{z}_i^{(k)}$ converge to the node-wise average $\xbar^{(k)}$, and hence the de-biased variables at each node also converge to a stationary point. Note that for fixed $n$ and large $K$, the $1/\sqrt{nK}$ term will dominate the other factors.

\section{Related Work}

A variety of approaches have been proposed to accelerate distributed training of DNNs in the communcation bound setting, including 
quantizing gradients \citep{Alistarh2017qsgd,Wen2017terngrad,jia2018highly} and performing multiple local SGD steps at each node before averaging \citep{McMahan2017federated}.These approaches are complementary to the approach considered in this paper, which advocates for using approximate rather than exact distributed averaging. Quantizing gradients, performing multiple local SGD steps between averaging, and using approximate distributed averaging can all be seen as injecting additional noise (due to approximations) into SGD, leading to a tradeoff between reducing communication (towards training faster) and potentially obtaining worse predictive accuracy (due to approximations). Combining these approaches (quantized, infrequent, and inexact averaging) is an interesting direction for future work.

\citet{Blot2016gossip} report initial experimental results on small-scale experiments with an SGP-like algorithm. \citet{Jin2016how} make a theoretical connection between \PushSum-based methods and Elastic Averaging SGD \citep{Zhang2015elasticsgd}. Relative to those previous works, we provide the first convergence analysis for a \PushSum-based method in the smooth non-convex case. Moreover, our analysis also holds in the presence of bounded message delays. 

\citet{lian2017can} and \citet{Jiang2017collaborative} study synchronous gossip-based versions of SGD. Those methods involve symmetric message passing which inherently involves blocking (if $i$ sends to $j$ at iteration $k$, then $j$ also sends to $i$ before both nodes update). Consequently, they are slower in communication-constrained settings, in comparison to \PushSum-based SGP which may use directed message passing ($i$ can send to $j$ without needing a response). 
The work of~\citet{jakovetic2014fast} studies gossip-based versions of the Nesterov gradient method for smooth strongly-convex functions with deterministic gradients.
The method requires performing two rounds of symmetric message passing per gradient update, and consequently is also slower in communication-constrained settings.

\emph{Decentralized parallel SGD} (D-PSGD) \cite{lian2017can} produces iterates whose node-wise average, $\xbar^{(k)}$, converges in the sense of \eqref{eq:convergence-criterion1}. Our results in Sec.~\ref{sec:sgp} show that SGP converges in the same sense and go beyond to show that the individual values at each node also converge to a stationary point, since the values at each node converge to the network-wide average.
In fact, SGP is a generalization of D-PSGD: when the communication topology is static, undirected, and connected at every iteration, and when nodes use symmetric mixing weights ($p_{j,i}^{(k)} = p_{i,j}^{(k)}$ for all $(i,j)$), then the push-sum weights $w_i^{(k)}$ are always equal to $1$ and SGP is mathematically equivalent to D-PSGD.
We compare SGP with D-PSGD experimentally in Section~\ref{sec:experiments} and find that SGP is consistently faster and the two methods find solutions of comparable accuracy.

\citet{Jin2016how} and \citet{Lian2018asynchronous} study asynchronous gossip-based methods for training DNNs. \citet{Lian2018asynchronous} analyzes an asynchronous version of D-PSGD and proves that its node-wise averages also converge to a stationary point.
In general, these contributions focusing on asynchrony can be seen as orthogonal to the use of a \PushSum based protocol for approximate distributed averaging.
Moreover, we find that synchronous Overlap SGP runs faster than asynchronous state-of-the-art AD-PSGD, and produces models with better training/validation performance.

\section{Experiments}
\label{sec:experiments}
Next we experimentally compare SGP with \AllReduce SGD (AR-SGD), D-PSGD, and asynchronous D-PSGD (AD-PSGD).
We aim to study the relationship between runtime and predictive accuracy as a function of the number of nodes used and the network bandwidth. In low-bandwidth experiments the servers communicate over 10 Gbps Ethernet links (typical in data centers) and in high-bandwidth experiments they communicate over 100 Gbps InfiniBand (typical in high-performance computing clusters). To illustrate the versatility of SGP, we consider two typical workloads: image classification and machine translation. All algorithms are implemented in PyTorch~\citep{paszkepytorch}. 

While our SGP analysis focuses solely on the combination of \PushSum with SGD, we leverage Nesterov momentum or Adam in practice.
Each node sends and receives one message per iteration in our SGP baseline implementation, and the destination and source of these messages changes from one iteration to the next.
We refer the reader to Appendix~\ref{sec:implementation_details} for implementation details, including how we design the sequence of mixing matrices~$\bm{P}^{(k)}$.

\subsection{Image classification}
We train a ResNet-50~\citep{he2016deep} on the ImageNet classification task~\citep{russakovsky2015imagenet}.
\footnote{ImageNet was only used for the non-commercial research purposes of this paper and not for training networks deployed in production or for other commercial purposes.}
Our experiments use 32 NVIDIA DGX-1 servers. Each server has 8 V100 GPUs. To investigate scaling we run experiments using 4, 8, 16, and 32 servers (\ie, 32, 64, 128, and 256 GPUs).
We follow the experimental protocol of~\citet{goyal2017accurate}. Every node uses a mini-batch size of 256, so using more nodes corresponds to larger effective mini-batch size. Unless indicated otherwise, all experiments are run for 90 epochs, the learning rate warms up to $n \times 0.1$ during the first five epochs following~\citet{goyal2017accurate} and is decayed by a factor of 10 at epochs 30, 60, and 80. All methods use Nesterov momentum.
\begin{figure*}[t]
	\centering
	\subfloat[]{\includegraphics[width=0.25\textwidth]{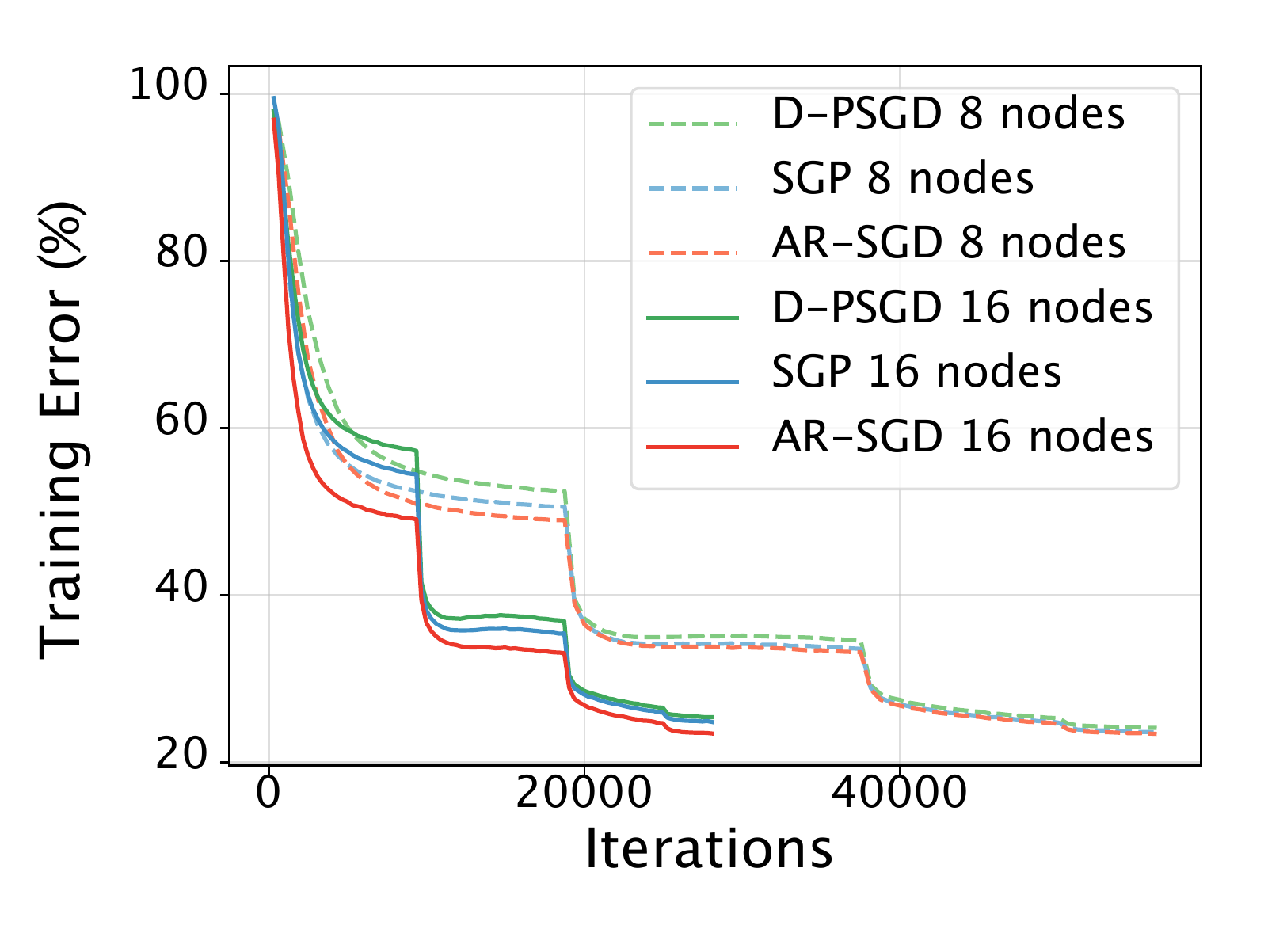}}
	\subfloat[]{\includegraphics[width=0.25\textwidth]{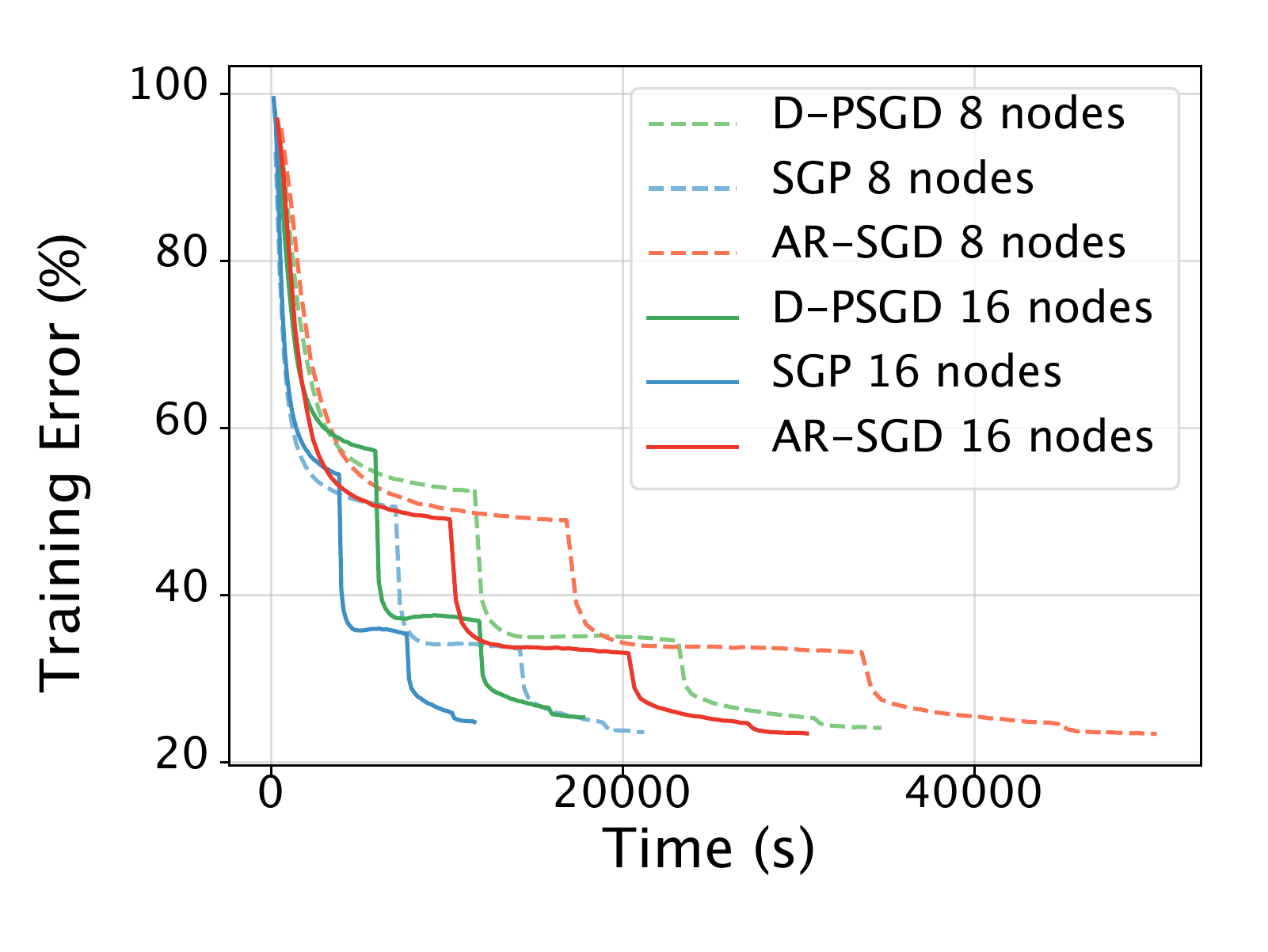}}
 	\subfloat[]{\includegraphics[width=0.25\textwidth]{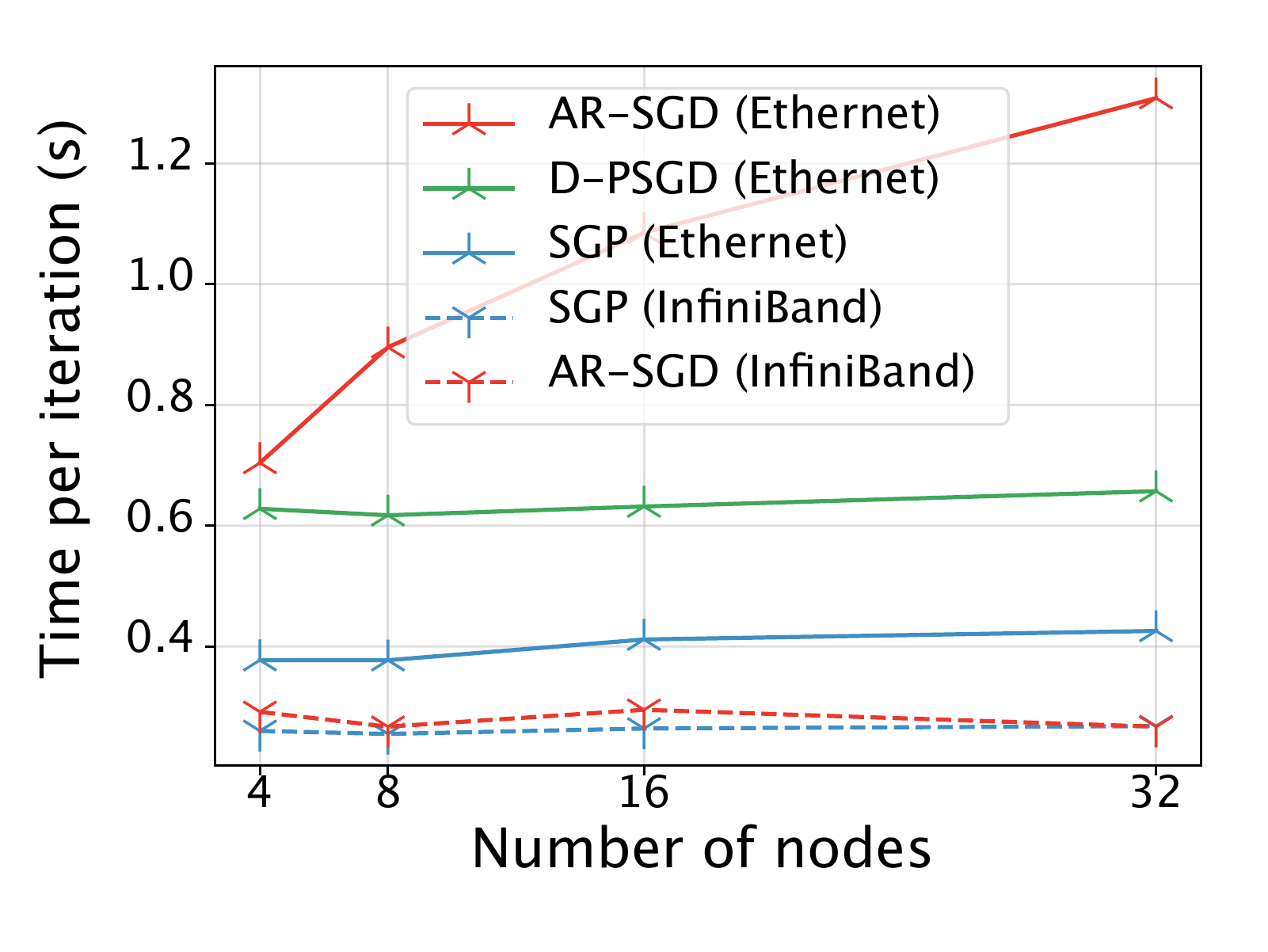}}
	\subfloat[]{\includegraphics[width=0.25\textwidth]{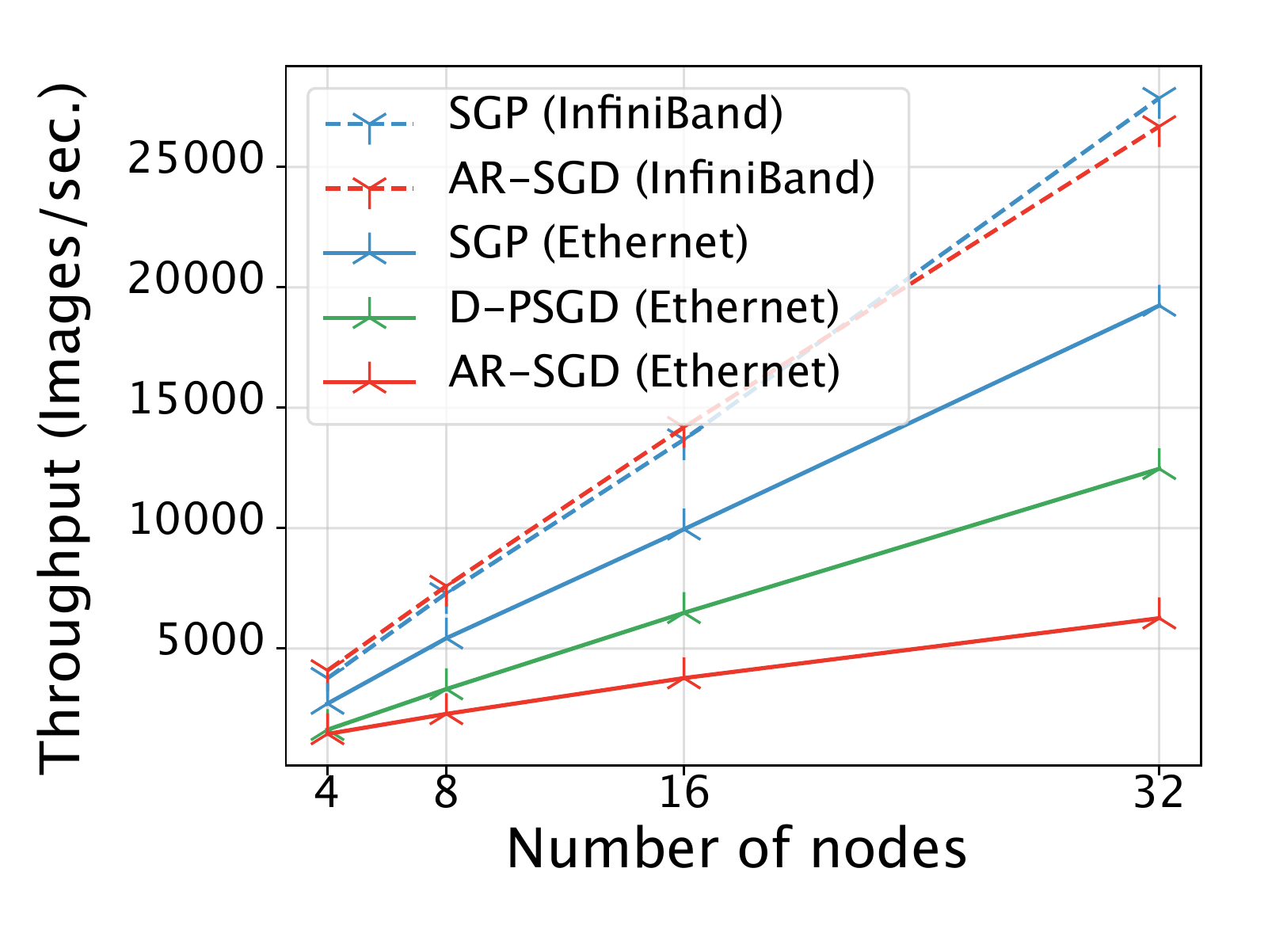}}
	\caption{Scaling and convergence results on 4--32 nodes interconnected via 10 Gbps Ethernet and 100Gbps-InfiniBand for \AllReduce-SGD (AR-SGD), SGP and D-PSGD. (a)/(b): Iteration-wise convergences and time-wise convergence over 10 Gbps Ethernet. (c)/(d): Time-wise scaling efficiency over Ethernet and InfiniBand networks.}
	\label{fig:img_convergence}
\end{figure*}
\begin{table*}[t]
\centering
 \small
 \begin{tabular}{l c c c c} \toprule
  & \textbf{4 nodes (32 GPUs)} & \textbf{8 nodes (64 GPUs)} & \textbf{16 nodes (128 GPUs)} & \textbf{32 nodes (256 GPUs)} \\ \midrule
 AR-SGD &
    \begin{tabular}{l l} $76.2$\% & $22.0$ hrs. \end{tabular} &
    \begin{tabular}{r l} $76.4$\% & $14.0$ hrs. \end{tabular} &
    \begin{tabular}{r l} $76.3$\% & $8.5$ hrs. \end{tabular} &
    \begin{tabular}{r l} $76.2$\% & $5.1$ hrs. \end{tabular} \\
 D-PSGD & 
     \begin{tabular}{r l} $76.4$\% & $19.7$ hrs. \end{tabular} &
    \begin{tabular}{r l} $76.1$\% & $9.7$ hrs. \end{tabular} &
    \begin{tabular}{r l} $75.9$\% & $5.0$ hrs. \end{tabular} &
    \begin{tabular}{r l} $74.4$\% & $2.6$ hrs. \end{tabular} \\ \midrule
 SGP &
    \begin{tabular}{r l} $76.3$\% & $11.8$ hrs. \end{tabular} &
    \begin{tabular}{r l} $76.4$\% & $5.9$ hrs. \end{tabular} &
    \begin{tabular}{r l} $75.9$\% & $3.2$ hrs. \end{tabular} &
    \begin{tabular}{r l} $75.0$\% & $1.7$ hrs. \end{tabular} \\ \bottomrule
\end{tabular}
 \caption{Top-1 validation accuracy (\%) and training time (hours), when communicating over 10 Gbps Ethernet for \AllReduce-SGD (AR-SGD), SGP and D-PSGD. SGP and D-PSGD are using 1-peer communication topologies.}
 \label{tb:img_convergence_eth}
\end{table*}

\paragraph{Scaling and convergence.}
The first set of experiments studies the scaling and convergence properties of our baseline SGP implementation, where every node sends and receives one message at every iteration (1-peer) and we do not overlap communication and computation (\ie, $\tau = 0$).

Figure~\ref{fig:img_convergence}~(a) shows the iteration-wise training convergence of SGP, \AllReduce-SGD, and D-PSGD.
Note that when we increase the number of nodes by a factor of 2, we also decrease the total number of iterations by the same factor.
Figure~\ref{fig:img_convergence}~(b) shows the time-wise training convergence over 8 and 16 node Ethernet networks.
In all cases, SGP completes 90 epochs in less time than \AllReduce-SGD and D-PSGD.
Figures~\ref{fig:img_convergence}~(c) and~(d) show the scaling efficiency of the methods on both 10~Gbps Ethernet and 100~Gbps InfiniBand networks.
In the case of the InfiniBand network, all methods exhibit a near linear scaling (constant time per iteration), which is expected since communication is not a bottleneck in this setting.
On the other hand, over the 10 Gbps Ethernet network, as we increase the number of nodes, the average iteration time stays almost constant for SGP and D-PSGD, while the per-iteration time of \AllReduce-SGD significantly increases, resulting in an overall slower training time. Moreover, although D-PSGD and SGP both exhibit strong scaling, SGP is roughly 1.5$\times$ faster over 10~Gbps Ethernet, supporting the claim that it involves less communication overhead.

Table~\ref{tb:img_convergence_eth} shows the total training time and top-1 validation accuracy of the different runs over the 10~Gbps Ethernet network using the baseline 1-peer topologies. For any number of nodes used in our experiments, we observe that SGP consistently outperforms D-PSGD and \AllReduce-SGD in terms of total training time. In particular, for 32 node networks (256 GPUs), SGP training takes approximately $1.7$ hours, while D-PSGD and \AllReduce-SGD require roughly $2.6$ and $5.1$ hours respectively.

To get a sense of the difference in runtimes between Ethernet and InfiniBand, and also to illustrate the variability, Table~\ref{tb:img_seeds} shows the mean training time and top-1 validation accuracy along with the maximum absolute deviation from the mean, for 4- and 16-node experiments run on 100~Gbps InfiniBand networks. The max.~absolute deviations are calculated based on five runs of each algorithm, using five different seeds. Even with the high-bandwidth InfiniBand network, SGP exhibits less variation in training time across different runs, supporting the claim that SGP helps reduce the effects of stragglers or other sources of latency.

\begin{table}[ht]
\centering
\small
\begin{tabular}{l c c} \toprule
    & \textbf{4 nodes (32 GPUs)} & \textbf{16 nodes (128 GPUs)} \\ \midrule
    AR-SGD &
        \begin{tabular}{r} $76.3 \pm 0.2$\% \\ $8.8 \pm 0.4$ hrs. \end{tabular} &
        \begin{tabular}{r} $76.2 \pm 0.2$\% \\ $2.5 \pm 0.3$ hrs. \end{tabular} \\ \midrule
    SGP &
        \begin{tabular}{r} $76.3 \pm 0.2$\% \\ $8.2 \pm 0.1$ hrs. \end{tabular} &
        \begin{tabular}{r} $75.8 \pm 0.2$\% \\ $2.2 \pm 0.1$ hrs. \end{tabular} \\ \bottomrule
\end{tabular}
\caption{Top-1 validation accuracy and training time statistics (mean $\pm$ max.~abs.~deviation), across 5 different runs of each algorithm using 5 different seeds. All methods complete 90 epochs, communicating over 100~Gbps InfiniBand.}
\label{tb:img_seeds}
\end{table}

All methods achieve roughly the same validation accuracy for smaller 1-peer topologies (4 and 8 nodes), and the accuracy of D-PSGD and SGP degrades for larger topologies (16 and 32 nodes). We hypothesize that this is due to the error introduced by approximate distributed averaging---models at different nodes are only approximately identical, and for larger networks the divergence between models at different nodes is larger. We investigate this further next.

\begin{figure}[t]
	\centering
	\subfloat[Sparse Graph Topology]{\includegraphics[width=0.5\columnwidth]{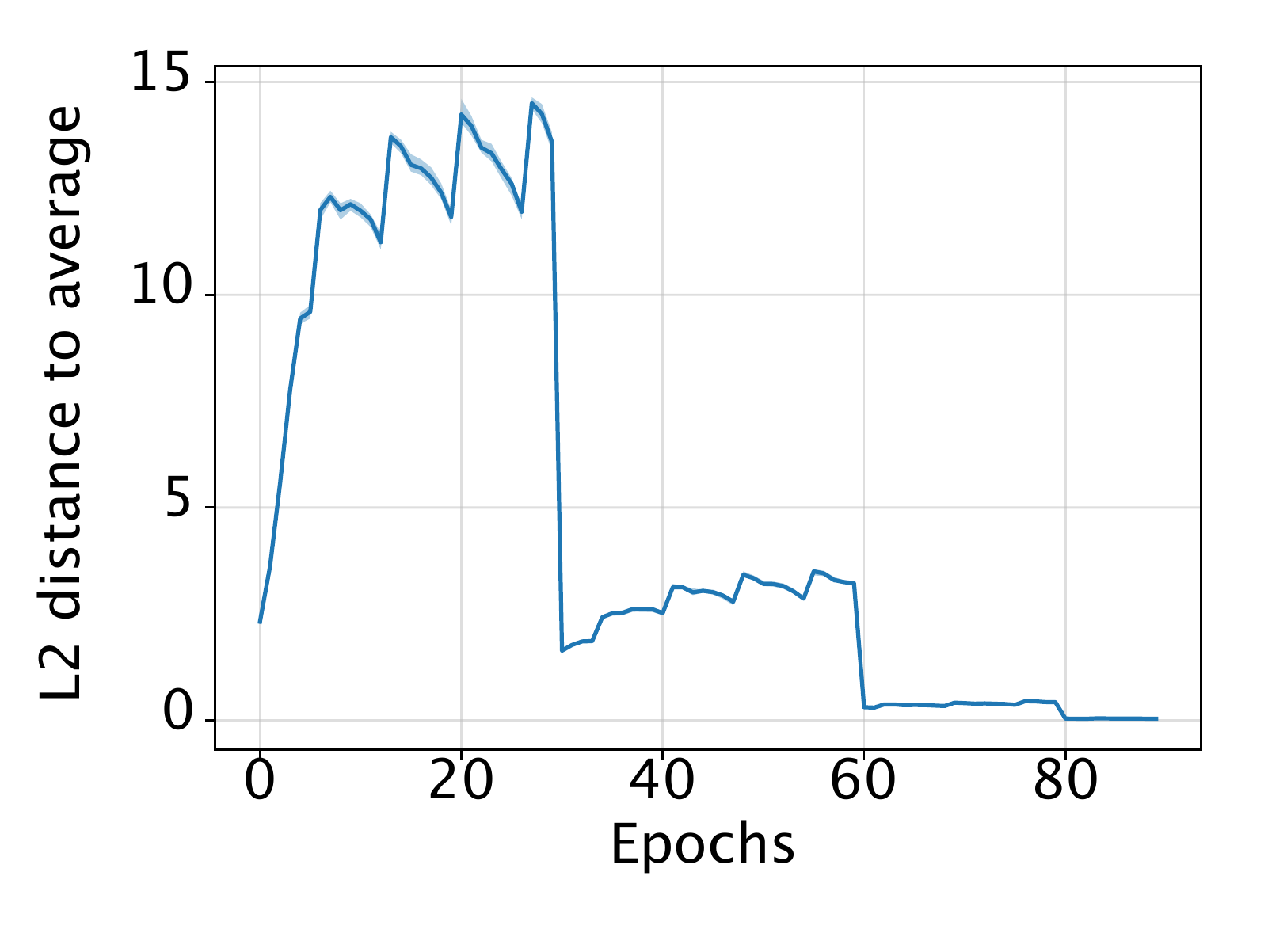}}
	\subfloat[Dense Graph Topology]{\includegraphics[width=0.5\columnwidth]{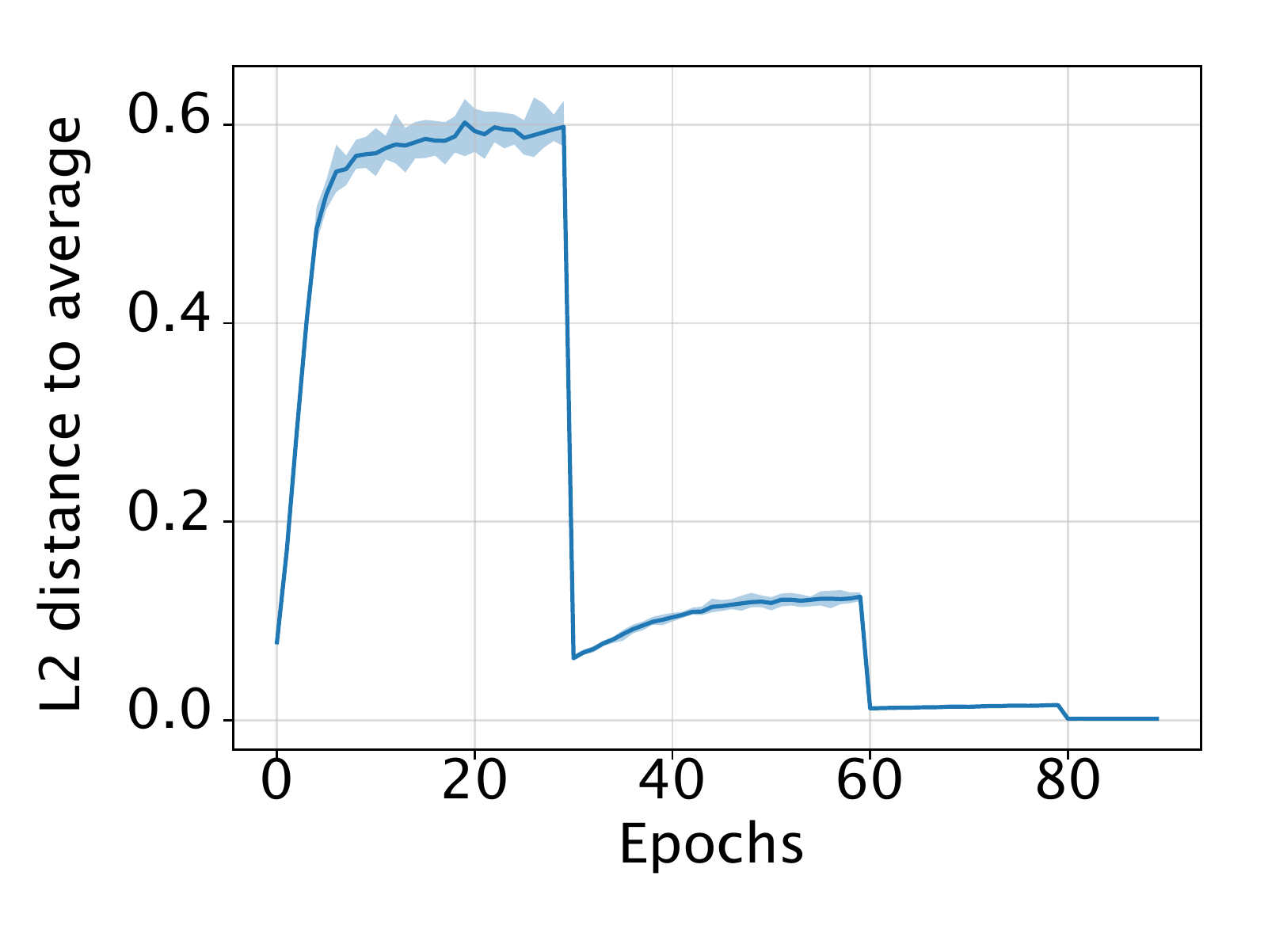}}
	\caption{Parameter Deviations for ResNet-50 trained on ImageNet (using SGP) over 16 node. Figures show the average Euclidean distance between the parameters at individual nodes and the node-wise average; recorded at the end of each training epoch, after the last gradient step, but before the last gossip step. Shaded regions show the max.~and min.~parameter deviations across all nodes.
	Nodes' parameters exhibit much greater deviation from the average when communicating over sparse topologies, than when communicating over dense topologies.}
	\label{fig:param_deviations}
\end{figure}
\paragraph{Parameter deviations.} We track the deviations between the model parameters at different nodes while training with SGP. Our convergence result (cf.~Lemma~\ref{MidosTheorem} in Appendix~\ref{sec:proofs}) states that each node converges exponentially fast to a neighborhood of the average, where the size of the neighborhood is proportional to the step-size and the connectivity of the graph topology.

Figure~\ref{fig:param_deviations} shows the average Euclidean distance between the parameters at individual nodes and the node-wise average, at the end of each epoch, for two different network configurations. The sparse graph corresponds to the time-varying 1-peer communication strategy used in the experiments above. The dense graph is fully-connected (all-to-all), and the deviations are computed just after line~4 in Alg.~\ref{SGPalg}, before communicating.

Indeed, we see that distance from the mean model is proportional to the magnitude of the step-size, and each node's parameters are approximately equidistant from the average.
At epochs 30, 60, and 80, the parameter deviations drop by an order of magnitude, in concert with the learning rate.
We also see the parameter deviations gradually increase in the first five epochs, following the learning rate warm-up.
The relation between the parameter deviations and the communication topology is also evident. Indeed, the dense topology exhibits significantly less parameter deviations than the time-varying sparsely connected topology (1-peer communication topology). Therefore, one can directly control the parameter deviations by adjusting the learning rate and/or graph topology, as predicted by Lemma~\ref{MidosTheorem}.

\paragraph{Communication and the speed-accuracy tradeoff.} Next we explore the effect of the  communication topology on the speed-accuracy tradeoff when training with 16 and 32 nodes (128 and 256 GPUs) over 10~Gbps Ethernet.
Table~\ref{tb:img_convergence_adp} shows the validation accuracy and wall-clock time for SGP using a 1-peer topology (1P-SGP), and SGP using a 2-peer topology (2P-SGP), \ie, each node sends and receives to two peers at each iteration. Using just this one additional neighbor improves the validation accuracy of SGP to 76.2\% in the 16 nodes case and to 75.7\% for 32 nodes, while retaining much of the speed advantages.
\begin{table}[t]
\centering
\small
\begin{tabular}{r c c} \toprule
    & \textbf{16 nodes (128 GPUs)} & \textbf{32 nodes (256 GPUs)} \\ \midrule
    AR-SGD &
        \begin{tabular}{l r} $76.3$\% & $8.5$ hrs. \end{tabular} &
        \begin{tabular}{l r} $76.2$\% & $5.2$ hrs. \end{tabular} \\
    2P-SGP &
        \begin{tabular}{l r} $76.2$\% & $5.1$ hrs. \end{tabular} &
        \begin{tabular}{l r} $75.7$\% & $2.5$ hrs. \end{tabular} \\
    1P-SGP &
        \begin{tabular}{l r} $75.9$\% & $3.2$ hrs. \end{tabular} &
        \begin{tabular}{l r} $75.0$\% & $1.7$ hrs. \end{tabular} \\ \midrule
    AR/1P-SGP &
        \begin{tabular}{l r} $76.2$\% & $4.8$ hrs. \end{tabular} &
        \begin{tabular}{lr} $75.4$\% & $2.8$ hrs. \end{tabular} \\
    2P/1P-SGP &
        \begin{tabular}{l r} $76.0$\% & $3.5$ hrs. \end{tabular} &
        \begin{tabular}{l r} $75.1$\% & $1.8$ hrs. \end{tabular} \\ \bottomrule
\end{tabular}
\caption{Top-1 validation accuracies (\%) and training time (hours) for 1P-SGP (1-peer topology); 2P-SGP (2-peer topology), AR-SGD (\AllReduce SGD), AR/1P-SGP (\AllReduce first 30 epochs, 1-peer topology last 60 epochs), and 2P/1P-SGP (2-peer topology first 30 epochs, 1-peer topology last 60 epochs), all communicating over 10~Gbps Ethernet.}
\label{tb:img_convergence_adp}
\end{table}
We also experiment with hybrid communication schemes that use more communication at the start of training, to mitigate the parameter deviations when they are largest (cf. Figure~\ref{fig:param_deviations}).
Table~\ref{tb:img_convergence_adp} compares 1P-SGP, 2P-SGP, and \AllReduce SGD (mathematically equivalent to running SGP with a fully-connected topology) with two hybrid methods: AR/1P-SGP, which uses \AllReduce for the first 30 epochs and 1-peer SGP for the last 60 epochs, and 2P/1P-SGP, which uses 2-peer SGP for the first 30-epochs and 1-peer SGP for the remainder.
We find that these hybrid communication schemes provide a balance between speed and accuracy, and that communicating more during the first few epochs of training can mitigate accuracy tradeoffs.

\paragraph{Overlap SGP.}
Table~\ref{tb:img_convergence_ovp} compares 1-OSGP using a 1-peer communication topology, with AD-PSGD, D-PSGD, and a biased implementation of 1-OSGP that directly incorporates delayed messages without accounting for the bias in the push-sum weight.
\begin{table}[t]
\centering
\small
\begin{tabular}{r c} \toprule
    & \begin{tabular}{c c c} Train Acc. & Val. Acc. & Train Time\end{tabular} \\ \midrule
    AR-SGD &
        \begin{tabular}{c c c} $76.9\%$ & $76.3$\% & $8.5$ hrs. \end{tabular} \\
    D-PSGD &
        \begin{tabular}{c c c} $75.6$\% & $75.9$\% & $4.9$ hrs. \end{tabular}  \\
    AD-PSGD &
        \begin{tabular}{c c c} $74.7$\% & $75.5$\% & $2.9$ hrs. \end{tabular}  \\
    SGP &
        \begin{tabular}{c c c} $75.6$\% & $75.9$\% & $3.2$ hrs. \end{tabular}  \\
    biased 1-OSGP &
        \begin{tabular}{c c c} $75.4$\% & $75.3$\% & $1.8$ hrs. \end{tabular}  \\ \midrule
    1-OSGP &
        \begin{tabular}{c c c} $77.1$\% & $75.7$\% & $1.8$ hrs. \end{tabular}  \\ \bottomrule
\end{tabular}
\caption{Comparing state-of-the-art synchronous and asynchronous gossip-based approaches to 1-OSGP, an implementation of synchronous SGP where communication is overlapped with 1 gradient step (all messages are always received with 1-iteration of staleness). 1-OSGP is also compared with a biased implementation of 1-OSGP that directly incorporates delayed messages without accounting for the bias in the push-sum weight. Experiments are run for 90 epochs over 16 nodes (128 GPUs) interconnected via 10~Gbps Ethernet.}
\label{tb:img_convergence_ovp}
\end{table}
Overlapping communication and computation greatly speeds up training and results in no accuracy degradation relative to non-overlap SGP. In contrast, the biased implementation of 1-OSGP (not using the \PushSum weight) has significantly worse accuracy, supporting our theoretical development that the bias tracked in the push-sum weight facilitates convergence. Moreover, synchronous 1-OSGP runs faster than the state-of-the-art asynchronous AD-PSGD, and achieves better training and validation accuracy.
\begin{table}[t]
\centering
\small
\begin{tabular}{r c c c} \toprule
    & Train Acc. & Val. Acc. & Train Time \\ \midrule
    AR-SGD & $76.9\%$ & $76.2$\% & $5.1$ hrs. (90 epochs) \\
    AD-PSGD & $80.3\%$ & $76.9$\% & $4.7$ hrs. (270 epochs) \\ \midrule
    SGP & $80.0$\% & $77.1$\% & $4.6$ hrs. (270 epochs) \\
    1-OSGP & $81.8$\% & $77.1$\% & $2.7$ hrs. (270 epochs) \\ \bottomrule
\end{tabular}
\caption{Comparing \AllReduce SGD (AR-SGD) and SGP under a fix runtime budget. Given a similar runtime, SGP outperforms SGD for both training and validation accuracy. Running 1-OSGP for the same number of epochs than SGP outperforms SGD while improving the overal training efficiency. Experiments are run over 1-peer graph topologies, using 32 nodes (256 GPUs) interconnected via 10~Gbps Ethernet.}
\label{tb:fix_runtime_exp}
\end{table}

\paragraph{Fixed runtime budget.} The results so far highlight that SGP completes 90 epochs of training faster than \AllReduce SGD, especially in communication-constrained settings, and this comes at the cost of reduced accuracy. However, if we compare the algorithms based on a runtime budget rather than an epoch budget, SGP achieves superior results. Specifically, since the baseline SGP is roughly 3$\times$ faster than \AllReduce SGD when run on 32 nodes over 10~Gbps Ethernet, we run SGP for 270 epochs (instead of 90 epochs) using a scaled learning rate schedule (similar warm-up, but decaying the learning rate by a factor of 10 at epochs 90, 180, and 240).
With this setup (32 nodes/256 GPUs and 10 Gbps Ethernet), SGP
surpasses the best \AllReduce SGD accuracy ($76.2\%$ after 90 epochs/5.1 hrs.), and achieves a top-1 validation accuracy of $77.1\%$ at then end of the 270 epochs ($4.6$ hrs.).
Similarly, overlap SGP 
achieves a top-1 validation accuracy of $77.1\%$ at then end of the 270 epochs ($2.7$ hrs.).
When run longer, AD-PSGD achieves better accuracy than \AllReduce SGD, but not as fast nor as high of an accuracy as achieved by 1-OSGP.

\subsection{Neural Machine Translation}
We train a transformer network~\cite{vaswani2017attention} on WMT16-En-De using our baseline implementation of SGP, and utilizing the same hyperparameters as~\cite{vaswani2017attention}. We train models using both $25K$ token batches~\cite{vaswani2017attention}, and $400K$ token batches~\cite{ott2018scaling}. 
All methods use Adam. Training is performed on 8 NVIDIA V100 GPUs, located on 8 different machines, interconnected via a 10 Gbps Ethernet link. This setup is similar to the AWS \textsc{p3.2xlarge} machines commonly used for distributed training~\cite{bernstein2018signsgd}.
\begin{figure}[t]
	\centering
	\subfloat[Iteration-wise convergence]{\includegraphics[width=.5\columnwidth]{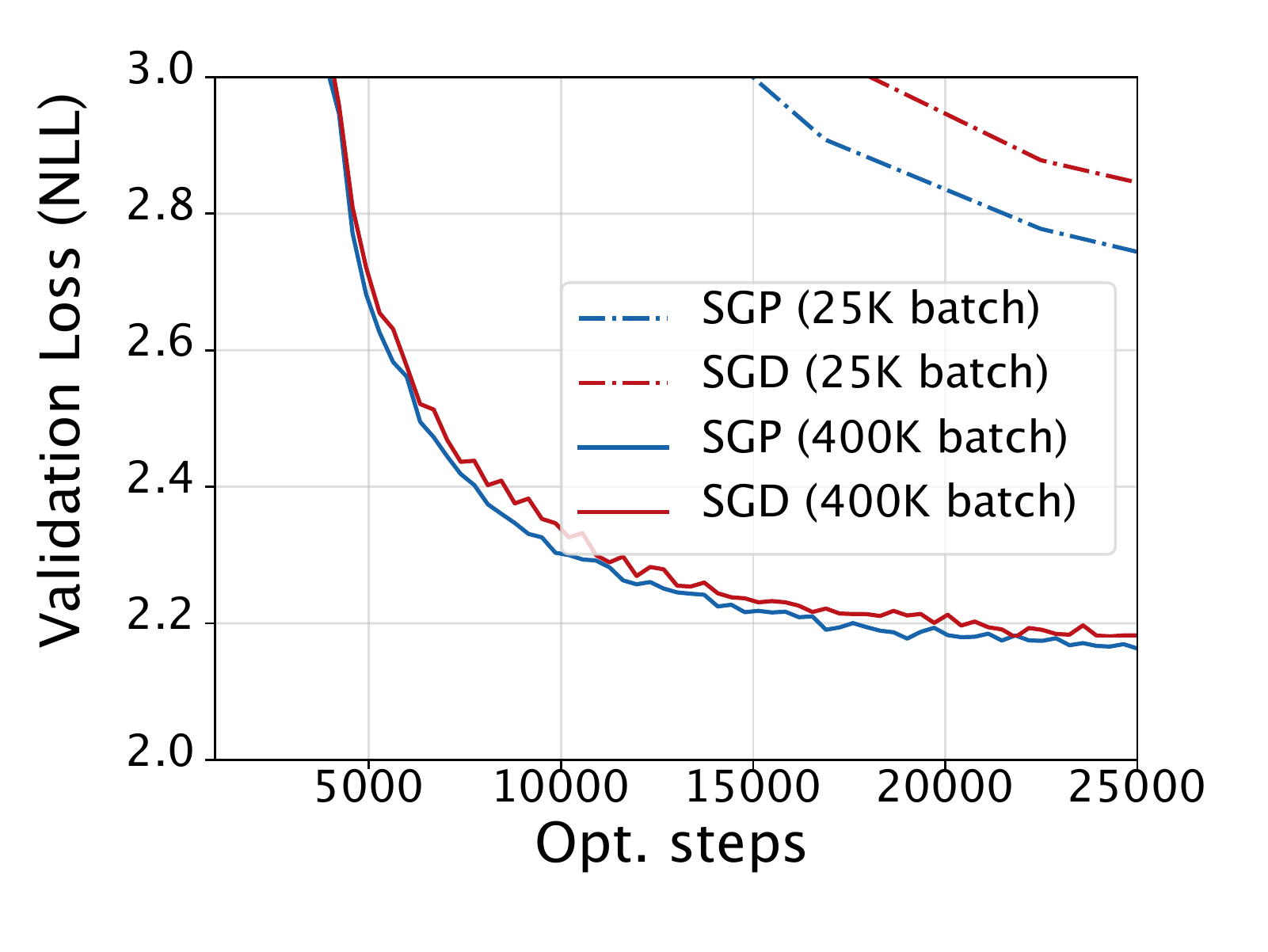}}
	\subfloat[Time-wise convergence]{\includegraphics[width=.5\columnwidth]{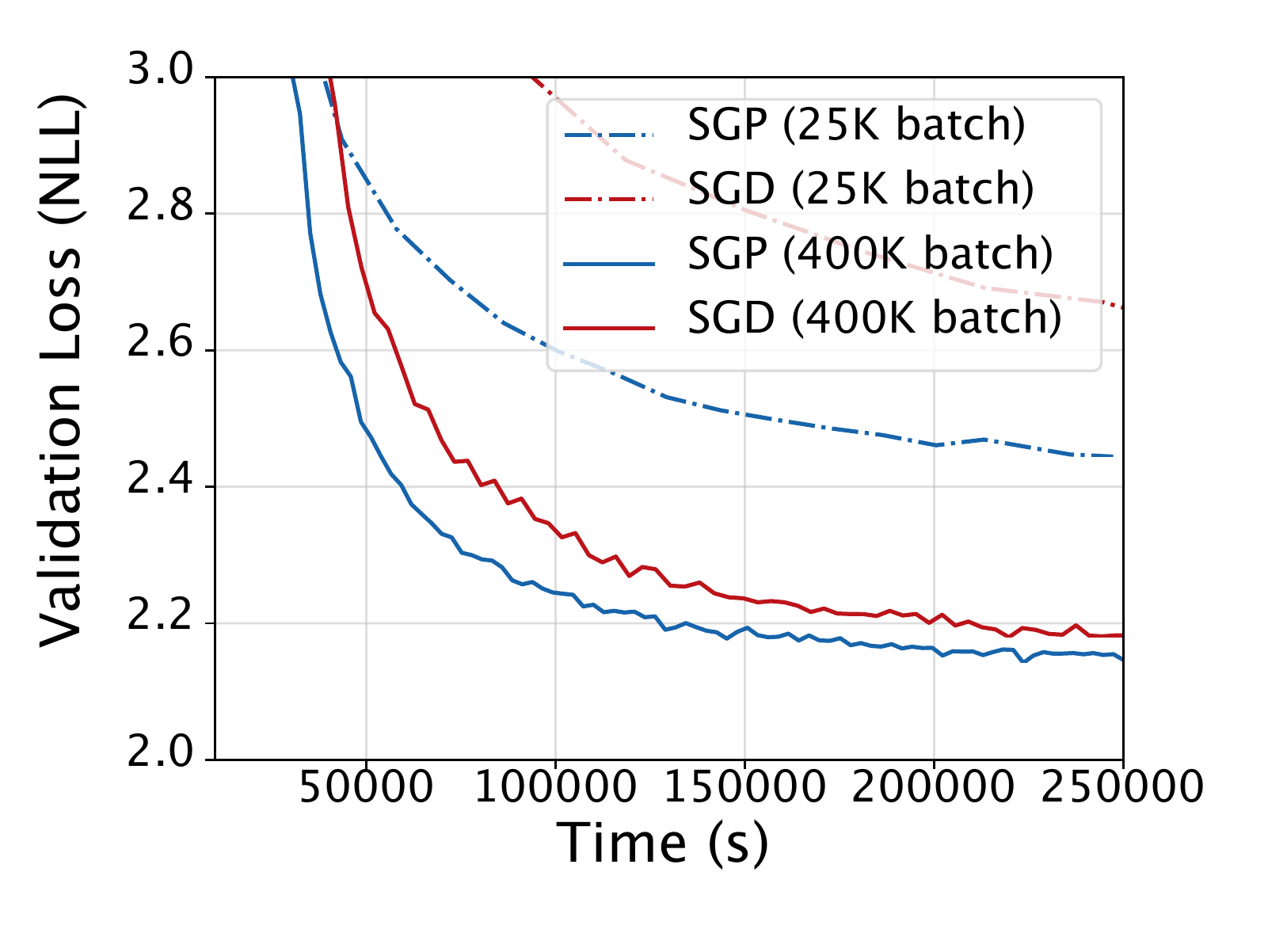}}
	\caption{Neural Machine Translations experiments run over 8$\times$V100 GPUs located on 8 different machines, interconnected via 10 Gbps Ethernet. Adam-SGP and \AllReduce Adam-SGD iteration- and time-wise convergence in both small- and large-batch settings. SGP makes slightly more progress per iteration in both small- and large-batch settings, and runs approximately 1.5$\times$ faster in the large-batch setting and 2$\times$ faster in the small batch setting.}
	\label{fig:nmt_convergence}
\end{figure}

Figures~\ref{fig:nmt_convergence}~(a) and~(b) show the iteration-wise and time-wise validation curves (respectively) of (Adam) SGP and \AllReduce (Adam) SGD using small- and large-batch training. We find that SGP makes slightly more progress per iteration in both small- and large-batch settings, and runs approximately 1.5$\times$ faster than \AllReduce SGD in the large-batch setting, and 2$\times$ faster in the small-batch setting.
We also evaluate the test-set BLEU scores for models trained using 400K tokens batch. We investigate the performance obtained after the same number of iterations ($\sim$14K), and after training for the same amount of time (3 days). We evaluate BLEU scores using a beam search of 4, and a length penalty of 0.6, following~\cite{vaswani2017attention}. SGP achieves a superior BLEU score to \AllReduce SGD, both after a fix number of iterations (26.4 for SGP vs 25.8 for AR-SGD), and for fix runtime budget (27.5 for SGP vs 26.9 for AR-SGD).


\section{Conclusion}
We propose SGP and OSGP for accelerating distributed training of DNNs.
We provide theoretical convergence guarantees in the smooth non-convex setting, matching known convergence rates for parallel SGD.
We also empirically study the methods over several computing infrastructures, and provide assessments on image classification (ImageNet, ResNet-50) and neural machine translation (Transformer, WMT16 EN-DE) tasks.
We find that SGP and OSGP can run significantly faster than parallel SGD in communication-bound settings, and can train better models in less time.


\subsubsection*{Acknowledgments}
We thank Shubho Sengupta and Teng Li for useful discussions and for maintaining the computing infrastructure used to conduct these experiments.

\bibliography{icml2019_conference.bib}
\bibliographystyle{icml2019}

%
\appendix
\
\counterwithin{figure}{section}

\onecolumn
\icmltitle{Supplementary Material \\ Stochastic Gradient Push for Distributed Deep Learning}



\section{Communication Topology}
\label{sec:implementation_details}

\textbf{Directed exponential graph.} For the SGP experiments we use a time-varying directed graph to represent the inter-node connectivity. Thinking of the nodes as being ordered sequentially, according to their rank, $0, \dots, n -1$,\footnote{We use indices $0,\dots,n-1$ rather than $1,\dots,n$ only in this section, to simplify the discussion.} each node periodically communicates with peers that are $2^0, 2^1, \ldots, 2^{\lfloor \log_2(n - 1) \rfloor}$ hops away. Fig.~\ref{fig:graph_topology} shows an example of a directed 8-node exponential graph. Node $0$'s $2^0$-hop neighbour is node $1$; node $0$'s $2^1$-hop neighbour is node $2$; and node $0$'s $2^2$-hop neighbour is node $4$.
\begin{figure*}[ht]
	\centering
	\subfloat[Directed Exponential Graph highlighting node $0$'s out-neighbours]{\includegraphics[width=0.35\textwidth]{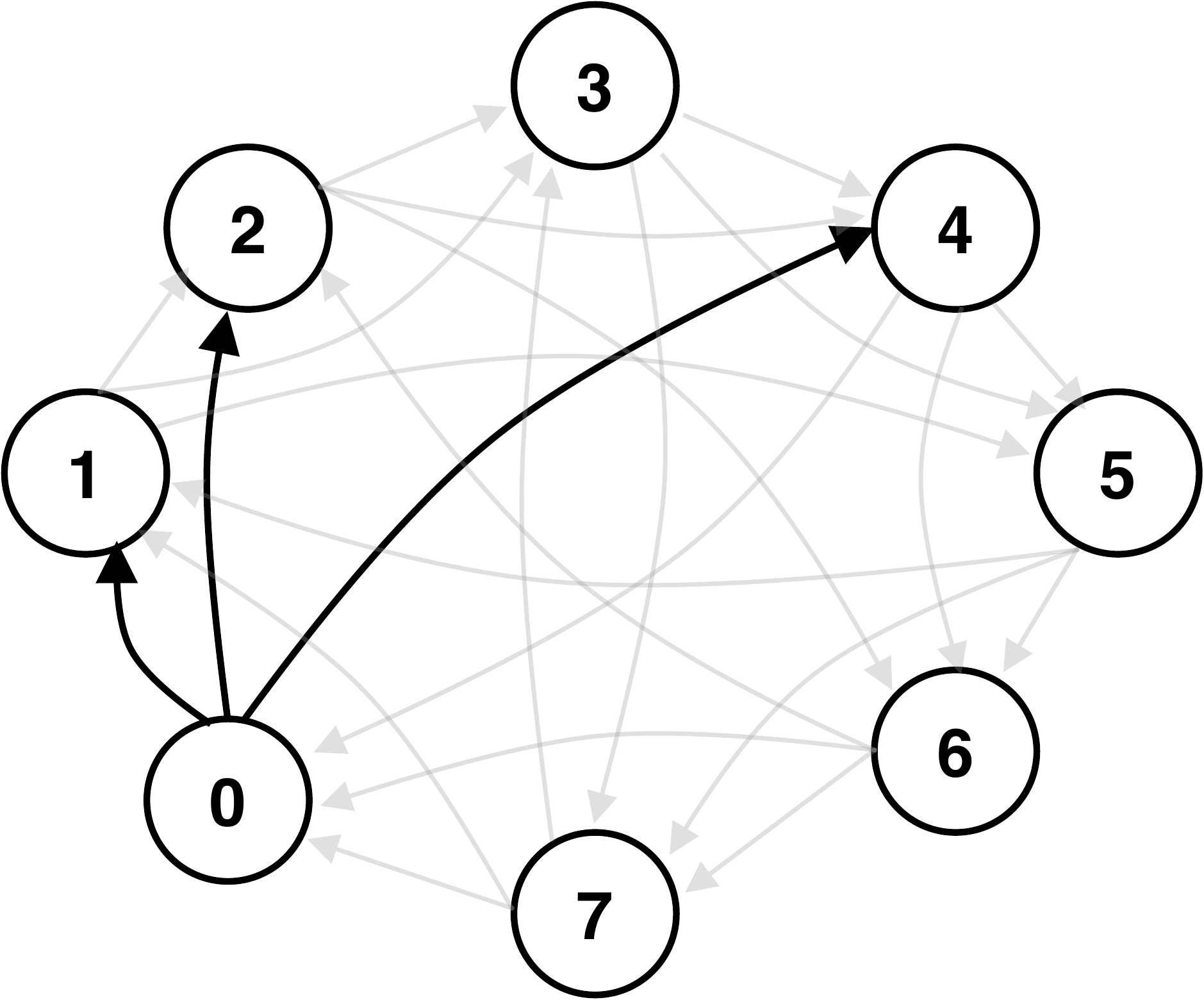}}
	\caption{Example of an 8-node exponential graph used in experiments}
	\label{fig:graph_topology}
\end{figure*}

In the one-peer-per-node experiments, each node cycles through these peers, transmitting, only, to a single peer from this list at each iteration. E.g., at iteration $k$, all nodes transmit messages to their $2^0$-hop neighbours, at iteration $k + 1$ all nodes transmit messages to their $2^1$-hop neighbours, an so on, eventually returning to the beginning of the list before cycling through the peers again. This procedure ensures that each node only sends and receives a single message at each iteration. By using full-duplex communication, sending and receiving can happen in parallel.

In the two-peer-per-node experiments, each node cycles through the same set of peers, transmitting to two peers from the list at each iteration. E.g., at iteration $k$, all nodes transmit messages to their $2^0$-hop and $2^1$-hop neighbours, at iteration $k + 1$ all nodes transmit messages to their $2^1$-hop and $2^2$ neighbours, an so on, eventually returning to the beginning of the list before cycling through the peers again.
Similarly, at each iteration, each node also receives, in a full-duplex manner, two messages from some peers that are unknown to the receiving node ahead of time. Thereby performing the send and receive operations in parallel.

\textbf{Definition of $\bm{P}^{(k)}$.} We choose the mixing matrices such that they are column stochastic (all columns sum to $1$), and conform to the graph topology described above.
Recall that each node $i$ can choose its mixing weights ($i^{th}$ column of $\bm{P}^{(k)}$), independently of the other nodes in the network.
To minimize the number of floating point operations in each iteration, we choose to use uniform mixing weights, meaning that nodes assign uniform message weights to all neighbours.
In the one-peer-per-node experiments, each node sends a message to one neighbor, and ``sends a message'' to itself at every iteration, and so each column of $\bm{P}^{(k)}$ has exactly two non-zero entries, both of which are equal to $1/2$.
The first set of non-zero entries corresponds to the diagonals.
At all time steps $k$, the diagonal entries satisfy $p_{i,i}^{(k)} = 1/2$ for all $i$.
The second set of non-zero entries correspond to the neighbor indices.
At time step $k$, each node sends to a neighbor that is $h_k \defeq 2^{k \text{ mod } \lfloor \log_2(n-1)\rfloor}$ hops away.
That is, at each time step $k$, each node $i$ sends a message to node $(i + h_k) \text{ mod } n$.
Thus, we get that
\[
p_{j,i}^{(k)} = \begin{cases} 1/2, & \text{ if } j = (i + h_k) \text{ mod } n\\ 0, & \text{ otherwise.} \end{cases}
\]
Note that, with this design, in fact each node sends to one peer and receives from one peer at every iteration, so the communication load is balanced across the network.

In the two-peer-per-node experiments, the definition is similar, but now there will be three non-zero entries in each column of $\bm{P}^{(k)}$, all of which will be equal to $1/3$; these are the diagonal, and the entries corresponding to the two neighbors to which the node sends at that iteration. In addition, each node will send two messages and receive two messages at every iteration, so the communication load is again balanced across the network.


\paragraph{Undirected exponential graph.} For the D-PSGD experiments we use a time-varying undirected bipartite exponential graph to represent the inter-node connectivity. Odd-numbered nodes send messages to peers that are $2^1 - 1, 2^2 - 1, \ldots, 2^{\lfloor \log_2(n-1) \rfloor} - 1$ (even-numbered nodes), and wait to a receive a message back in return. Each odd-numbered node cycles through the peers in the list in a similar fashion to the one-peer-per-node SGP experiments. Even-numbered nodes wait to receive a message from some peer (unknown to the receiving node ahead of time), and send a message back in return.

We adopt these graphs to be consistent with the experimental setup used in \citet{lian2017can} and \citet{Lian2018asynchronous}.

Note also that these graphs are all regular, in that all nodes have the same number of in-coming and out-going connections.

\textbf{Decentralized averaging errors.} To further motivate our choice of using the directed exponential graph with SGP, let us forget about optimization for a moment and focus on the problem of distributed averaging, described in Section~2, using the \PushSum algorithm. Recall that each node $i$ starts with a vector $\bm{y}_i^{(0)}$, and the goal of the agents is to compute the average $\ybar = \frac{1}{n} \sum_i \bm{y}_i^{(0)}$. Then, since $\bm{y}_i^{(k+1)} = \sum_{j=1}^n p_{i,j}^{(k)} \bm{y}_j^{(k)}$, after $k$ steps we have
\[
\bm{Y}^{(k)} = \bm{P}^{(k-1)} \bm{P}^{(k-2)} \cdots \bm{P}^{(1)} \bm{P}^{(0)} \bm{Y}^{(0)},
\]
where $\bm{Y}^{(k)}$ is a $n \times d$ matrix with $\bm{y}_i^{(k)}$ as its $i$th row.

Let $\bm{P}^{(k-1:0)} = \bm{P}^{(k-1)} \bm{P}^{(k-2)} \cdots \bm{P}^{(1)} \bm{P}^{(0)}$. The worst-case rate of convergence can be related to the second-largest singular value of $\bm{P}^{(k-1:0)}$ \cite{Nedic2018network}. In particular, after $k$ iterations we have
\[
\sum_{i} \| \bm{y}_i^{(k)} - \ybar \|_2^2 \le \lambda_2(\bm{P}^{(k-1:0)}) \sum_{i} \|\bm{y}_i^{(0)} - \ybar \|_2^2,
\]
where $\lambda_2(\bm{P}^{(k-1:0)})$ denotes the second largest singular value of $\bm{P}^{(k-1:0)}$.

For the scheme proposed above, cycling deterministically through neighbors in the directed exponential graph, one can verify that after $k = \lfloor \log_2(n-1) \rfloor$ iterations, we have $\lambda_2(\bm{P}^{(k-1:0)}) = 0$, so all nodes exactly have the average. Intuitively, this happens because the directed exponential graph has excellent mixing properties: from any starting node in the network, one can get to any other node in at most $\log_2(n)$ hops. For $n=32$ nodes, after $5$ iterations averaging has converged using this strategy. In comparison, if one were to cycle through edges of the complete graph (where every node is connected to every other node), then for $n=32$, after 5 consecutive iterations one would have still have $\lambda_2(\bm{P}^{(k-1:0)}) \approx 0.6$; i.e., nodes could be much further from the average (and hence, much less well-synchronized). 

Similarly, one could consider designing the matrices $\bm{P}^{(k)}$ in a stochastic manner, where each node randomly samples one neighbor to send to at every iteration. If each node samples a destination uniformly from its set of neighbors in the directed exponential graph, then $\mathbb{E} \lambda_2(\bm{P}^{(k-1:0)}) \approx 0.4$, and if each node randomly selected a destination uniformly among all other nodes in the network (i.e., randomly from neighbors in the complete graph), then $\mathbb{E} \lambda_2(\bm{P}^{(k-1:0)}) \approx 0.2$. Thus, random schemes are still not as effective at quickly averaging as deterministically cycling through neighbors in the directed exponential graph. Moreover, with randomized schemes, we are no longer guaranteed that each node receives the same number of messages at every iteration, so the communication load will not be balanced as in the deterministic scheme.

The above discussion focused only on approximate distributed averaging, which is a key step within decentralized optimization. When averaging occurs less quickly, this also impacts optimization. Specifically, since nodes are less well-synchronized (i.e., further from a consensus), each node will be evaluating its local mini-batch gradient at a different point in parameter space. Averaging these points (rather than updates based on mini-batch gradients evaluated at the same point) can be seen as injecting additional noise into the optimization process, and in our experience this can lead to worse performance in terms of train error.

\section{Overlap SGP}
\label{appendix:osgp}

Although SGP does not use network-wide collective communication primitives like \AllReduce, the implementation of Alg.~\ref{SGPalg} requires using blocking sends and receives; \ie, nodes do not proceed to until they have received messages from all neighbors at that iteration. In this section we present the pseudocode of Overlap-SGP (OSGP) in Alg.~\ref{alg:osgp} that overlaps gradient computation with communication to hide the communication cost.
\begin{algorithm}[!t]
    \small
	\caption{Overlap Stochastic Gradient Push (SGP) \label{alg:osgp}}
  	\begin{algorithmic}[1]
      	\REQUIRE{Initialize $\tau \geq 0$, count\_since\_last $=0$, $\gamma > 0$, $\itr{\bm{x}}{0}_i = \itr{\bm{z}}{0}_i \in \R^d$ and $\itr{w}{0}_i=1$ for all nodes $i \in \{1, 2, \ldots, n \}$}\\
    	\FOR{$k=0,1,2,\cdots, K$, at node $i$,}
        	\STATE {Sample new mini-batch $\itr{\xi}{k}_i \sim {\cD_i}$ from local distribution}
            \STATE {Compute mini-batch gradient at $\itr{\bm{z}}{k}_i$: $\nabla \bm{F}_i(\itr{\bm{z}}{k}_i; \itr{\xi}{k}_i)$}
            \STATE {$\itr{\bm{x}}{k + \frac{1}{2}}_i = \itr{\bm{x}}{k}_i -\gamma \nabla \bm{F}_i(\itr{\bm{z}}{k}_i; \itr{\xi}{k}_i)$}
            \IF{$k$ mod $\tau = 0$}
                \STATE {Non-blocking send $\big(p_{j,i}^{(k)} \bm{x}_i^{(k+\frac{1}{2})}, p_{j,i}^{(k)} w_i^{(k)}\big)$ to out-neighbors}
                \STATE {$\itr{\bm{x}}{k + 1}_i = p_{i,i}\itr{\bm{x}}{k + 1/2}_i$}
                \STATE {$\itr{w}{k + 1}_i = p_{i,i}\itr{w}{k}_i$}
            \ELSE
                \STATE {$\itr{\bm{x}}{k + 1}_i = \itr{\bm{x}}{k + 1/2}_i$}
                \STATE {$\itr{w}{k + 1}_i = \itr{w}{k}_i$}
            \ENDIF
            \IF{count\_since\_last $= \tau$}
                \STATE {Block until $\big(p_{i,j}^{(k-\tau)} \bm{x}_j^{(k-\tau + \frac{1}{2})}, p_{i,j}^{(k-\tau)} w_j^{(k-\tau)}\big)$ is received for all in-neighbors $j$}
                \STATE {count\_since\_last $\gets 0$}
            \ELSE
                \STATE {count\_since\_last $\gets$ count\_since\_last $+1$}
            \ENDIF
            \IF{Receive buffer non-empty}
                \FOR{$\big(p_{i,j}^{(k^\prime)} \bm{x}_j^{(k^\prime + \frac{1}{2})}, p_{i,j}^{(k^\prime)} w_j^{(k^\prime)}\big)$ in the receive buffer}
                  \STATE {$\itr{\bm{x}}{k + 1}_i \gets \itr{\bm{x}}{k + 1}_i + \itr{p}{k^\prime}_{i,j} \itr{\bm{x}}{k^\prime + \frac{1}{2}}_j$}
                  \STATE {$\itr{w}{k + 1}_i \gets \itr{p}{k^\prime}_{i,j} \itr{w}{k^\prime}_j$}
                \ENDFOR
            \ENDIF
            \STATE {$\itr{\bm{z}}{k + 1}_i = \itr{\bm{x}}{k + 1}_i / \itr{w}{k + 1}_i$}
 		\ENDFOR
	\end{algorithmic}
\end{algorithm}
\footnotetext{We define $(k$ mod $0) \defeq 0$.}
In line 25 in Algorithm~\ref{alg:osgp}, nodes compute the de-biased estimate of their model parameters.
In lines 19 to 24, nodes aggregate all messages received in that iteration.
Lines 13 to 18 ensure that the message delays are bounded, and that the nodes remain synchronized.
In particular, Algorithm~\ref{alg:osgp} is \emph{synchronous} because of lines 13 to 18.
If a node hasn't received a message from its in-neighbours in $\tau$ iterations, it will block and wait to received said messages.
Note that if $\tau=0$, vanilla SGP, then nodes block and wait to receive all incoming messages in each iteration.
In lines 5 to 6, nodes send messages to their neighbours every $\tau$ iterations.
Once again, note that if $\tau=0$, vanilla SGP, then nodes send messages to their neighbours every iteration.
In lines 2 to 4 the nodes take a stochastic gradient step.
If $\tau=1$ ($1$-overlap SGP), nodes transmit messages to their neighbours in every iteration, but don't wait to receive messages until the subsequent iteration.

We provide a lot of detail in Algorithm~\ref{alg:osgp} to make it easier to implement the method; however, in essence, $\tau$-overlap SGP is simply vanilla SGP with delayed communication. \ie, where nodes only send a message to their neighbours every $\tau$ iterations, and can receive messages at any time in-between communication intervals.

\section{Implementation Details}
\label{appendix:sgpimpl}
In all of our experiments, we minimize the number of floating-point operations performed in each iteration, $k$, by using the mixing weights
\[
	\itr{p}{k}_{j,i} = 1/ \abs{\itr{\Nout{i}}{k}}
\]
for all $i, j = 1, 2, \ldots, n$. In words, each node assigns mixing weights uniformly to all of its out-neighbors in each iteration. Recalling our convention that each node is an in- and out-neighbor of itself, it is easy to see that this choice of mixing-weight satisfies the column-stochasticity property. It may very well be that there is a different choice of mixing-weights that lead to better spectral properties of the gossip algorithm; however we leave this exploration for future work. We denote node $i$'s uniform mixing weights at iteration $k$ by $\itr{p}{k}_i$ --- dropping the other subscript, which identifies the receiving node.

To leverage the highly efficient NVLink interconnect within each server, we treat each machine as one node in all of our experiments. In our implementation of SGP, each node computes a local mini-batch in parallel using all 8 GPUs via a local \AllReduce, which is efficiently implemented via the NVIDIA Collective Communications Library. Then inter-node averaging is accomplished using \PushSum either over Ethernet or InfiniBand. In the InfiniBand experiments, we leverage GPUDirect to directly send/receive messages between GPUs on different nodes and avoid transferring the model back to host memory. In the Ethernet experiments this is not possible, so the model is transferred to host memory after the local \AllReduce, and then \PushSum messages are sent over Ethernet.

To maximize the utility of the resources available on each server, each node (occupying a single server exclusively) runs two threads, a gossip thread and a computation thread. The computation thread executes the main logic used to train the local model on the GPUs available to the node, while the communication thread is used for inter-node network I/O. In particular, the communication thread is used to gossip messages between nodes. When using Ethernet-based communication, the nodes communicate their parameter tensors over CPUs. When using InifiniBand-based communication, the nodes communicate their parameter tensors using GPUDirect RDMA, thereby avoiding superfluous device to pinned-memory transfers of the model parameters.

Each node initializes its model on one of its GPUs, and initializes its scalar push-sum weight to $1$. At the start of training, each node also allocates a \emph{send}- and a \emph{receive}- communication-buffer in pinned memory on the CPU (or equivalently on a GPU in the case of GPUDirect RDMA communication).

In each iteration, the communication thread waits for the send-buffer to be filled by the computation thread; transmits the message in the send-buffer to its out-neighbours; and then aggregates any newly-received messages into the receive-buffer.

In each iteration, the computation thread blocks to retrieve the aggregated messages (in the non-overlap case) in the receive-buffer; directly adds the received parameters to its own model parameters; and directly adds the received push-sum weights to its own push-sum weight. The computation thread then converts the model parameters to the \emph{de-biased} estimate by dividing by the push-sum weight; executes a forward-backward pass of the \emph{de-biased model} in order to compute a stochastic mini-batch gradient; converts the model parameters back to the \emph{biased estimate} by multiplying by the push-sum weight; and applies the newly-computed stochastic gradients to the biased model. The updated model parameters are then multiplied by the mixing weight, $\itr{p}{k}_i$, and asynchronously copied back into the send-buffer for use by the communication thread. The push-sum weight is also multiplied by the same mixing weight and concatenated into the send-buffer.

In short, gossip is performed on the biased model parameters (push-sum numerators); stochastic gradients are computed using the de-biased model parameters; stochastic gradients are applied back to the biased model parameters; and then the biased-model and the push-sum weight are multiplied by the same uniform mixing-weight and copied back into the send-buffer.
\begin{algorithm}[t]
	\caption{Stochastic Gradient Push with Momentum \label{alg:sgp_momentum}}
    \label{SGPalgmom}
    \small
 	\begin{algorithmic}[1]
     	\REQUIRE{Initialize $\gamma > 0$, $m \in (0, 1)$, $\itr{\bm{x}}{0}_i = \itr{\bm{z}}{0}_i \in \R^d$ and $\itr{w}{0}_i=1$ for all nodes $i \in [n]$}\\
    	\FOR{$k=0,1,2,\cdots, K$, at node $i$,}
        	\STATE {Sample new mini-batch $\itr{\xi}{k}_i \sim {\cD_i}$ from local distribution}
            \STATE {Compute mini-batch gradient at $\itr{\bm{z}}{k}_i$: $\nabla \bm{F}_i(\itr{\bm{z}}{k}_i; \itr{\xi}{k}_i)$}
            \STATE {$\itr{\bm{u}}{k + 1}_i = m \itr{\bm{u}}{k}_i + \nabla \bm{F}_i(\itr{\bm{z}}{k}_i; \itr{\xi}{k}_i)$}
            \STATE {$\itr{\bm{x}}{k + \frac{1}{2}}_i = \itr{\bm{x}}{k}_i -\gamma ( m \itr{\bm{u}}{k + 1}_i + \nabla \bm{F}_i(\itr{\bm{z}}{k}_i; \itr{\xi}{k}_i))$}
            \STATE {Send $\big(p_{j,i}^{(k)} \bm{x}_i^{(k+\frac{1}{2})}, p_{j,i}^{(k)} w_i^{(k)}\big)$ to out-neighbors;\par receive $\big(p_{i,j}^{(k)} \bm{x}_j^{(k + \frac{1}{2})}, p_{i,j}^{(k)} w_j^{(k)}\big)$ from in-neighbors}
            \STATE {$\itr{\bm{x}}{k + 1}_i = \sum_{j \in \itr{\Nin{i}}{k}} \itr{p}{k}_{i,j} \itr{\bm{x}}{k + \frac{1}{2}}_j$}
            \STATE {$\itr{w}{k + 1}_i = \sum_{j \in \itr{\Nin{i}}{k}} \itr{p}{k}_{i,j} \itr{w}{k}_j$}
            \STATE {$\itr{\bm{z}}{k + 1}_i = \itr{\bm{x}}{k + 1}_i / \itr{w}{k + 1}_i$}
		\ENDFOR
	\end{algorithmic}
\end{algorithm}

\subsection{Hyperparameters}
\label{appendix:hyperparams}

For the ImageNet experiments, we follow the experimental protocol described in~\citep{goyal2017accurate}.
When we ``apply the stochastic gradients'' to the biased model parameters, we actually carry out an SGD step with nesterov momentum (see Alg.~\ref{SGPalgmom}). For the $32,64$, and $128$ GPU experiments we use the same exact learning-rate, schedule, momentum, and weight decay as those suggested in~\citep{goyal2017accurate} for SGD. In particular, we use a reference learning-rate of $0.1$ with respect to a $256$ sample batch, and scale this linearly with the batch-size; we decay the learning-rate by a factor of $10$ at epochs $30,60,80$; we use a Nesterov momentum parameter of $0.9$, and we use weight decay $0.0001$.

For the machine translation experiment, we follow~\cite{vaswani2017attention} and combine Stochastic Gradient Push with the Adam preconditioner.
In particular, we make use of the \textsc{fairseq} code~\cite{gehring2017convs2s}, and train the transformer networks via SGP by replacing the built-in PyTorch parallel SGD model wrapper with our SGP model wrapper.


\section{Additional Experiments}

\subsection{Additional Training Curves}
\label{appendix:ethernet}
\begin{figure}[!ht]
	\centering
	\subfloat[Train]{\includegraphics[width=0.3\textwidth]{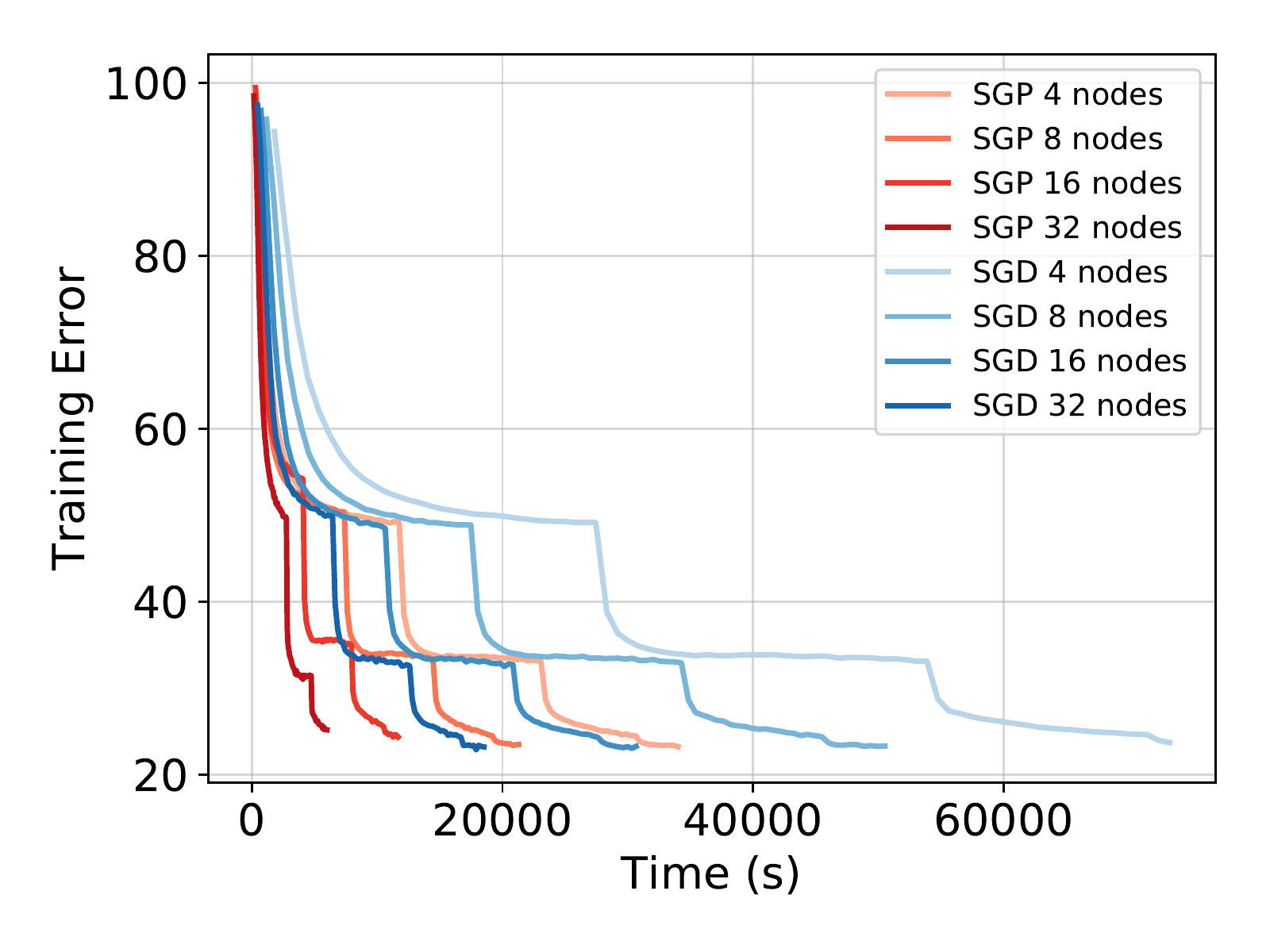}}
	\subfloat[Validation]{\includegraphics[width=0.3\textwidth]{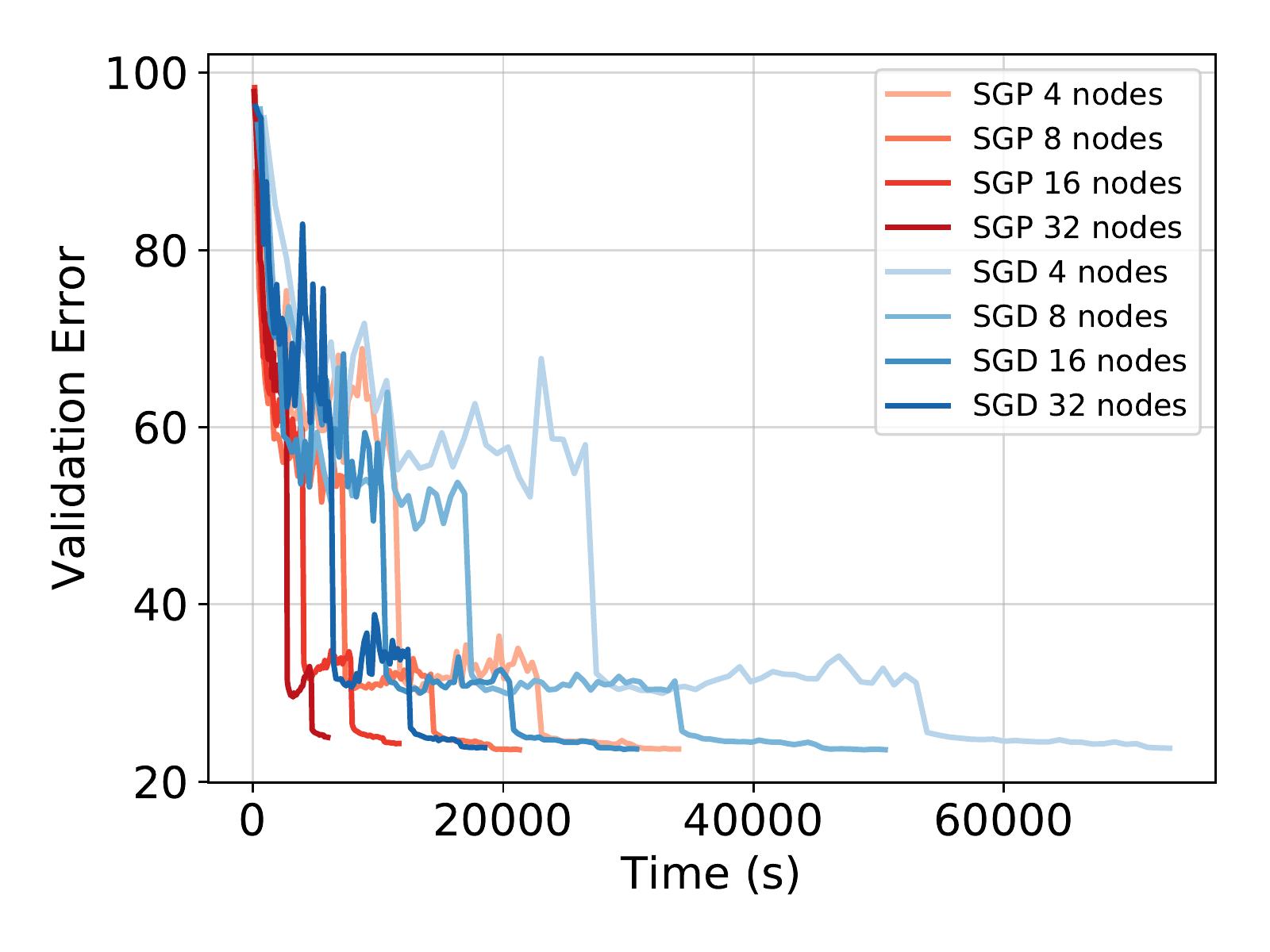}}
	\caption{Training on Ethernet 10Gbit/s}
	\label{fig:eth_train}
\end{figure}
\begin{figure}[!ht]
	\centering
	\subfloat[Train]{\includegraphics[width=0.3\textwidth]{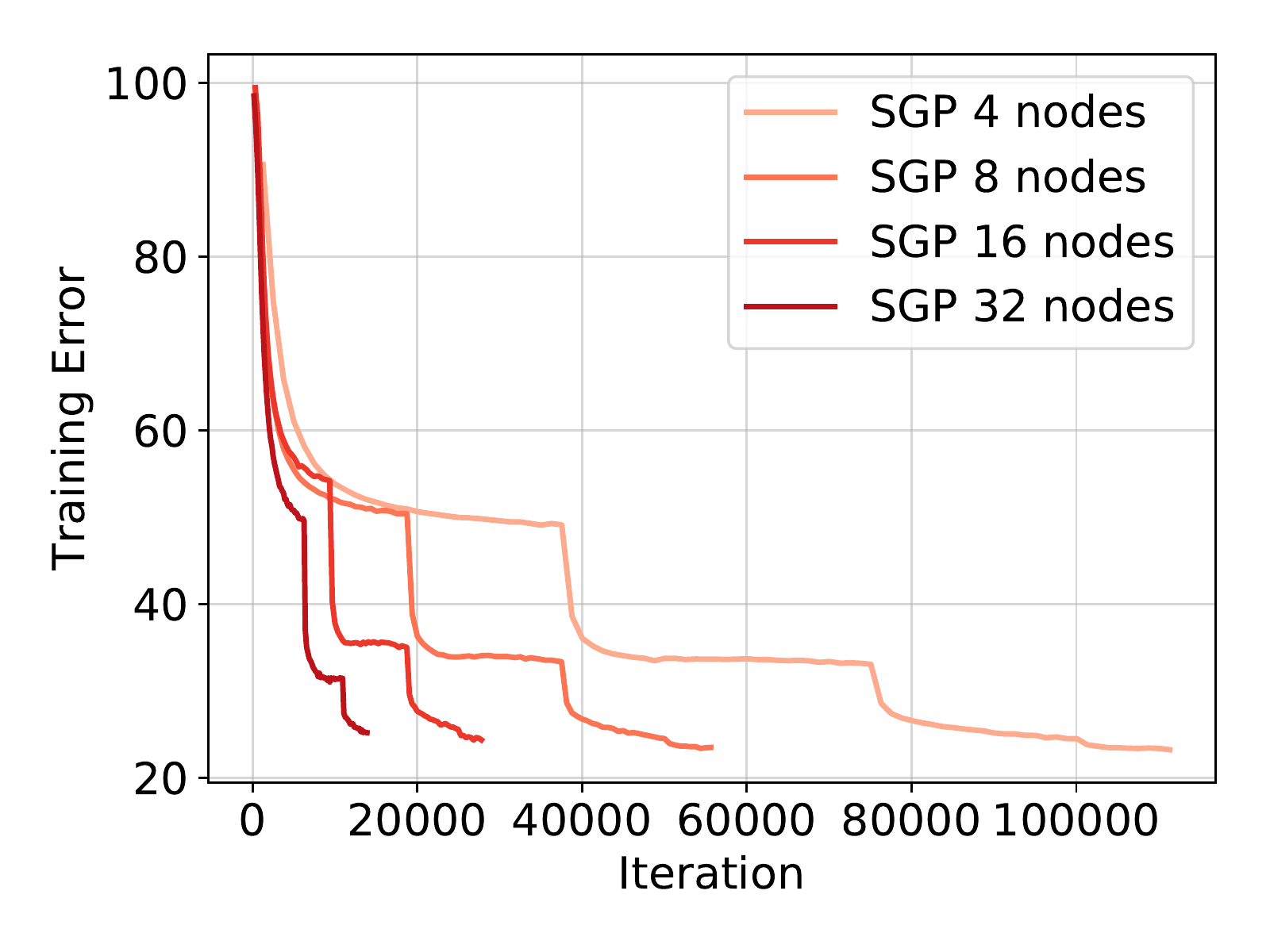}}
	\subfloat[Validation]{\includegraphics[width=0.3\textwidth]{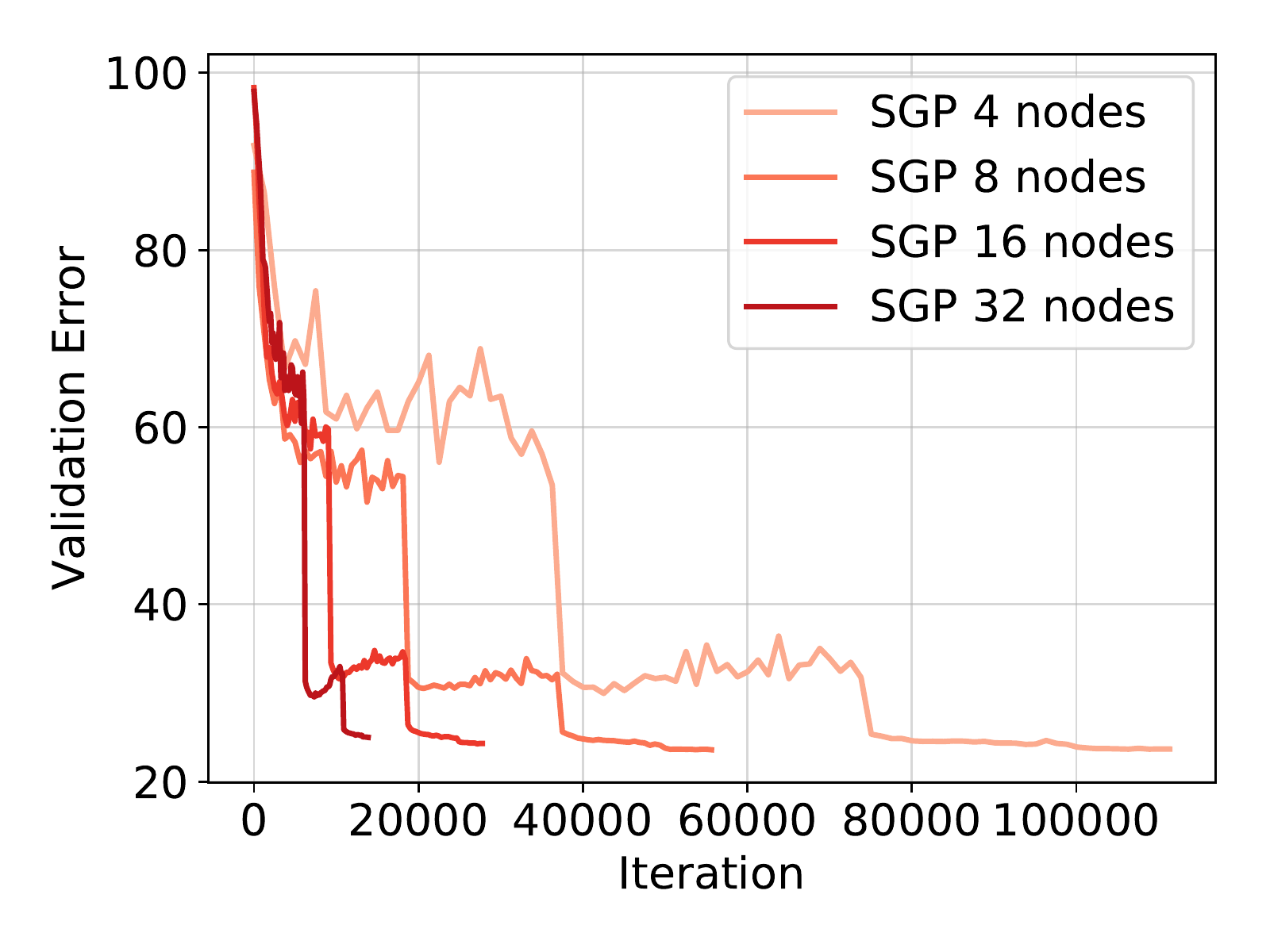}}
	\caption{Training/Validation accuracy per iteration for SGP (Ethernet 10Gbit/s). Each time we double the number of node in the network, we half the total number of iterations.}
	\label{fig:eth_iterations}
\end{figure}
The curves in Figure~\ref{fig:eth_train} show the time-wise train- and validation-accuracies for the different runs performed on Ethernet 10Gbit/s.
Figure~\ref{fig:eth_iterations} reports the iteration-wise training and validation accuracy of SGP when using 10Gbits/s Ethernet. 

\subsection{Discrepancy across different nodes}
\label{subsec:discrepancy}
\begin{figure}[!ht]
 \centering
 \subfloat[Discrepancy on 4 nodes]{\includegraphics[width=0.3\textwidth]{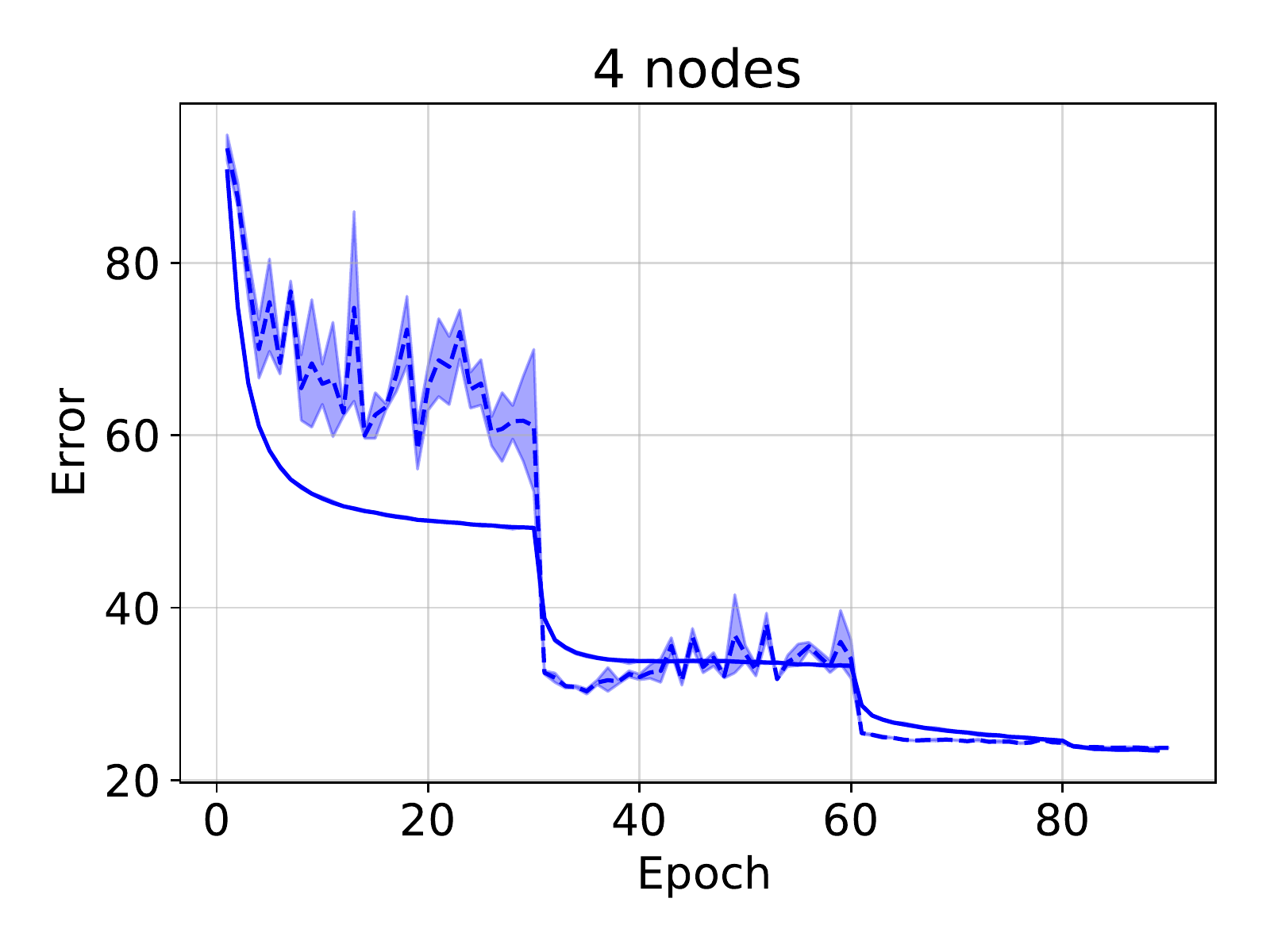}}
 \subfloat[Discrepancy on 32 nodes]{\includegraphics[width=0.3\textwidth]{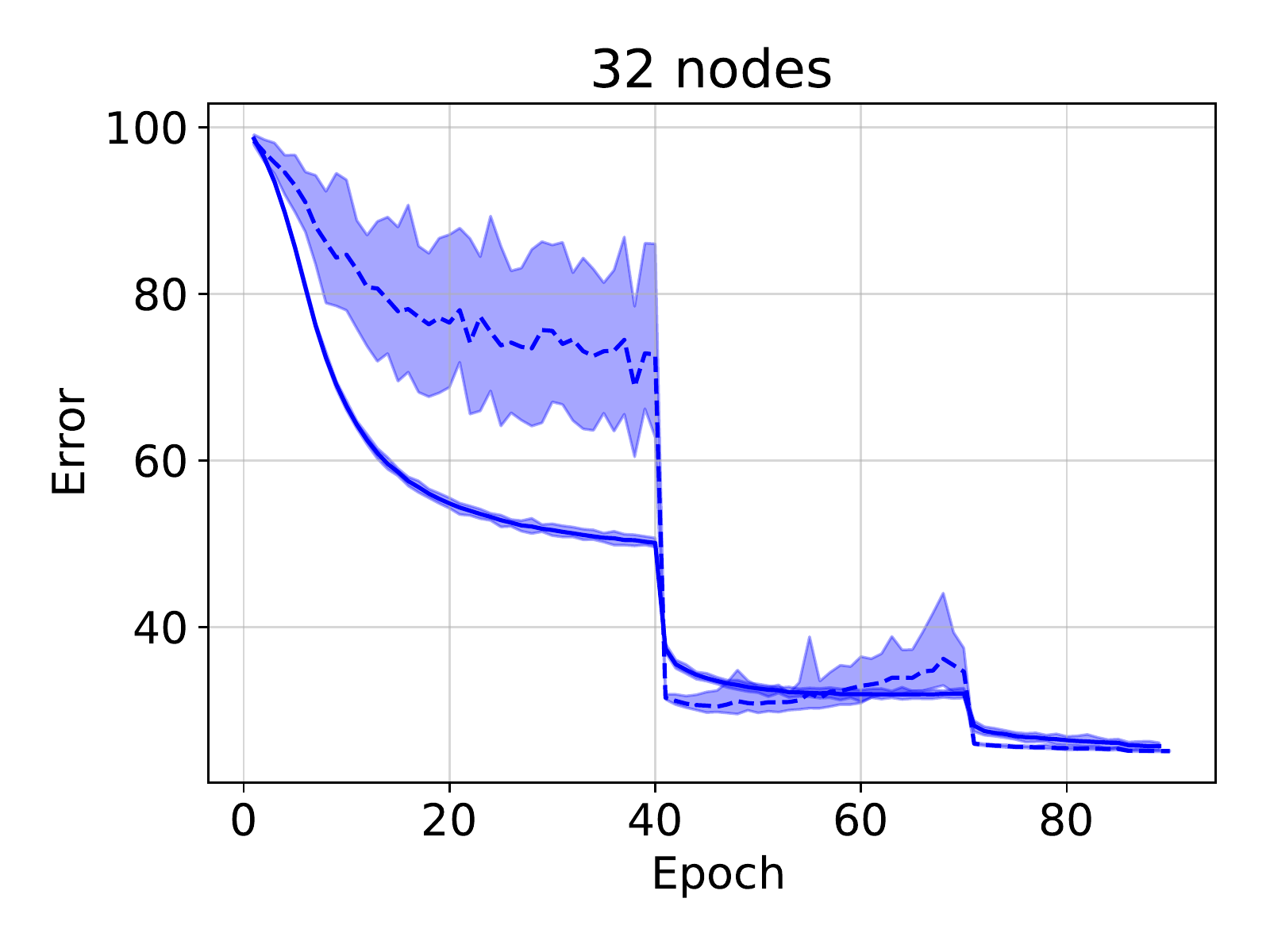}}
 \caption{Resnet50, trained with SGP, training and validation errors for 4 and 32 nodes experiments. The solid and dashed lines in each figure show the mean training and validation error, respectively, over all nodes. The shaded region shows the maximum and minimum error attained at different nodes in the same experiment. Although there is non-trivial variability across nodes early in training, all nodes eventually converge to similar validation errors, achieving consensus in the sense that they represent the same function.}
 \label{fig:eth_range}
\end{figure}
Here, we investigate the performance variability across nodes during training for SGP. In figure~\ref{fig:eth_range}, we report the minimum, maximum and mean error across the different nodes for training and validation. In an initial training phase, we observe that nodes have different validation errors; their local copies of the Resnet-50 model diverge. As we decrease the learning, the variability between the different nodes diminish and the nodes eventually converging to similar errors. This suggests that all models ultimately represent the same function, achieving consensus.

\subsection{SGP Scaling Analysis}
\begin{figure}[!ht]
	\centering
	\subfloat[ETH 10Gbit/s]{\includegraphics[width=0.3\textwidth]{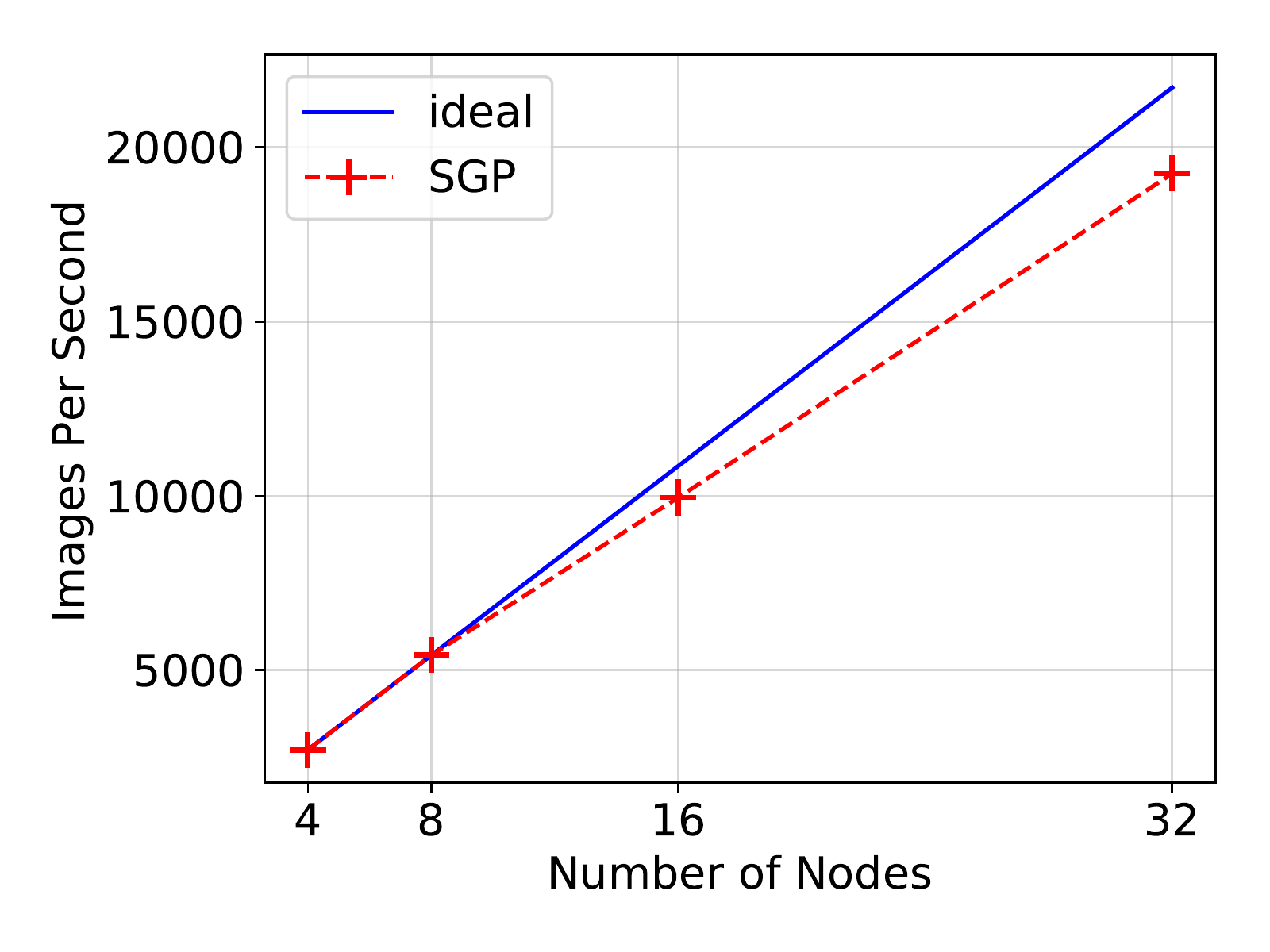}}
	\subfloat[InfiniBand 100Gbit/s]{\includegraphics[width=0.3\textwidth]{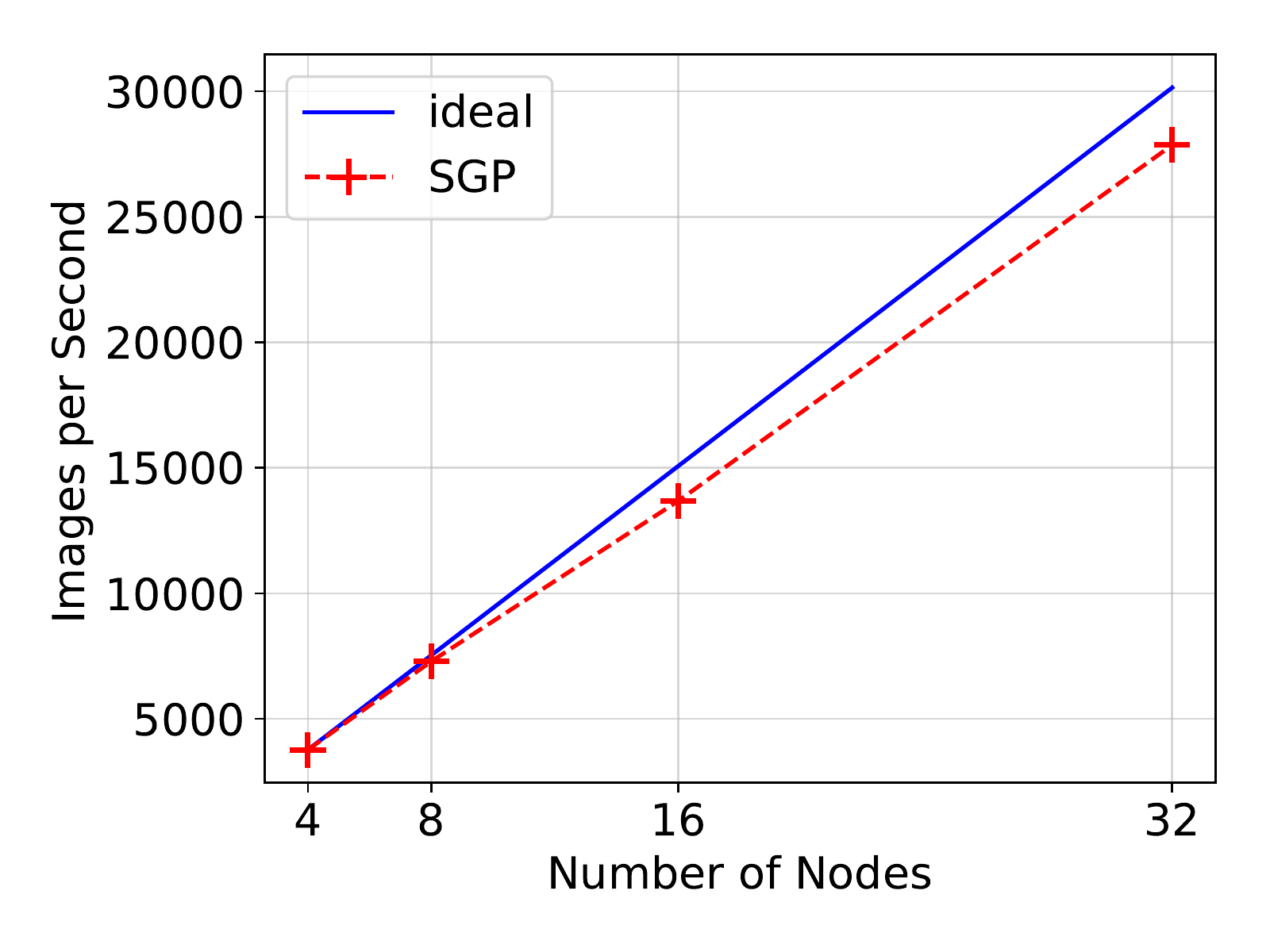}}
	\subfloat[Scaling of SGP and SGP]{\includegraphics[width=0.3\textwidth]{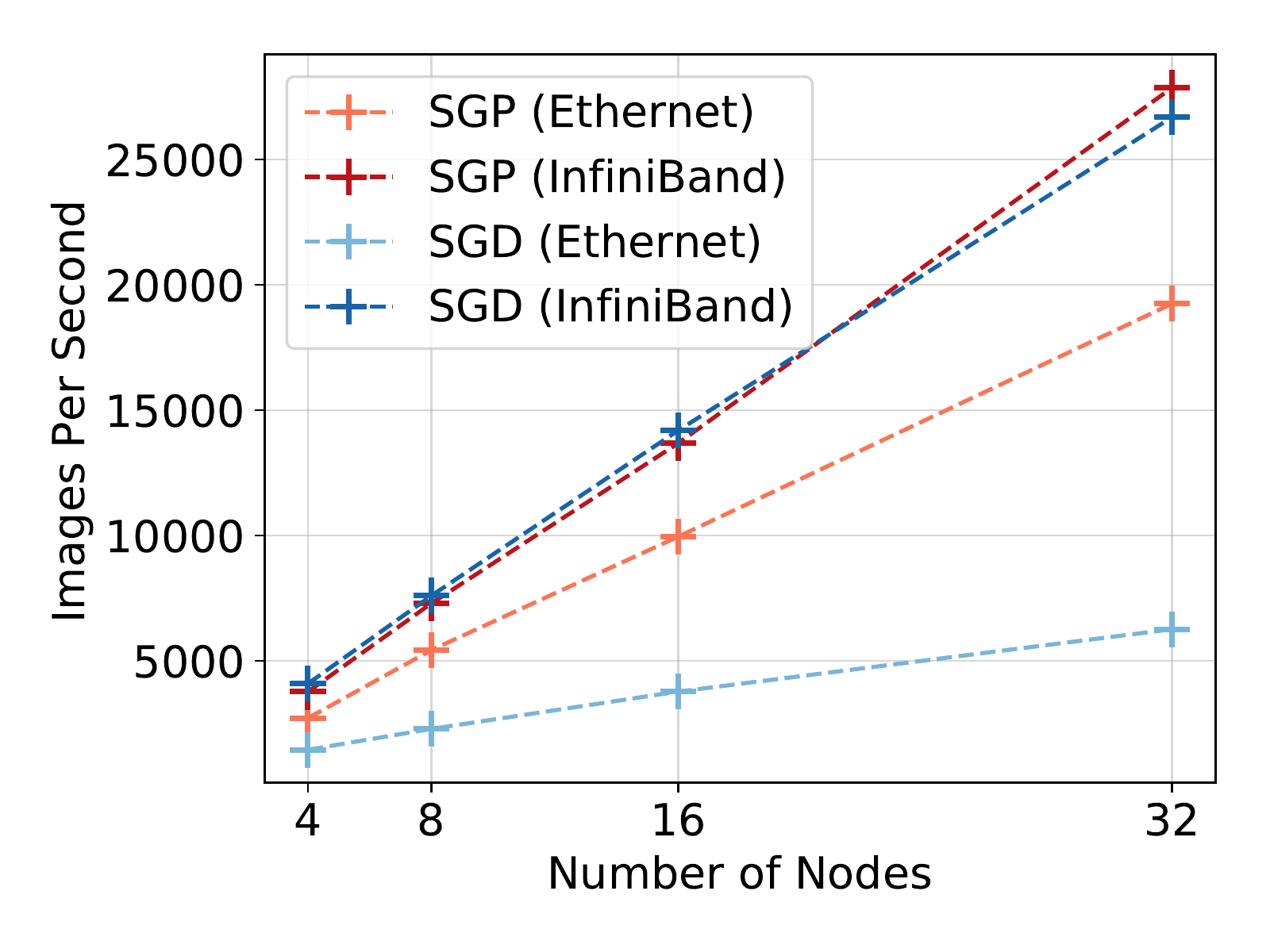}}
	\caption{SGP throughput on Ethernet (a) and InfiniBand (b). SGP  exhibits 88.6\% scaling efficiency on Ethernet 10Gbit/s and 92.4\% on InfiniBand. Comparison of SGD vs SGP throughput in Figure (c) shows that SGP exhibit better scaling and is more robust to high-latency interconnect.}
\label{appendix:scaling}
\end{figure}
Figure~\ref{appendix:scaling} highlights SGP input images throughput as we scale up the number of cluster node on both Ethernet 10Gbit/s and Infiniband 100Gbit/s. SGP  exhibits 88.6\% scaling efficiency on Ethernet 10Gbit/s and 92.4\% on InfiniBand and stay close to the ideal scaling in both cases. In addition Figure (c) shows that SGP exhibit better scaling as we increase the network size and is more robust to high-latency interconnect.

\section{Proofs of Theoretical Guarantees}
\label{sec:proofs}

Our convergence rate analysis is divided into three main parts. In the first one (subsection \ref{ImportantUpperBounds}) we present upper bounds for three important expressions that appear in our computations. In subsection \ref{ImportLemma} we focus on proving the important for our analysis Lemma~\ref{MainLemma} based on which we later build the proofs of our main Theorems. Finally in the third part (subsection \ref{ProofMain}) we provide the proofs for Theorems~\ref{thm:avg-convergence} and \ref{secondTheorem}.

\paragraph{Preliminary results.}
In our analysis, two preliminary results are extensively used. We state them here for future reference.
\begin{itemize}
\item Let $a,b \in \R$. Since $(a-b)^2 \geq 0$, it holds that 
\begin{equation}
\label{formula}
2ab \leq a^2+b^2.
\end{equation}
Thus, $\norm{\bm x} \norm{\bm y} \leq (\norm{\vx}^2 + \norm{\vy}^2) / 2$.
\item Let $r \in (0,1)$ then from the summation of geometric sequence and for any $K\leq \infty$ it holds that
\begin{equation}
\label{geometric}
\sum_{k=0}^K r^k  \leq \sum_{k=0}^\infty r^k=\frac{1}{1-r}.
\end{equation}
\end{itemize}

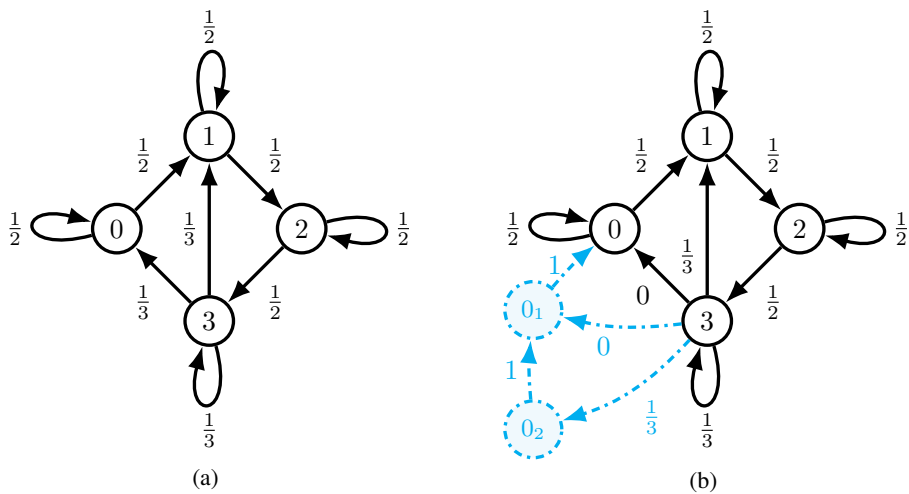
\begin{figure}[t]
\centering

\tikzstyle{edge} = [very thick, -{Latex}]
\tikzstyle{vedge} = [very thick, cyan, dashdotted, -{Latex}]
\tikzstyle{non-virtualnode} = [draw, circle, fill=white, very thick]
\tikzstyle{virtualnode} = [draw, circle, cyan, fill=cyan!5, very thick, dashdotted]

\subfloat[]{
\resizebox{0.34\textwidth}{!}{
\begin{tikzpicture}[baseline]
\node[non-virtualnode] (topcircle) {$1$};
\node[non-virtualnode] (leftcircle) [below left = 1cm of topcircle] {$0$};
\node[non-virtualnode] (rightcircle) [below right = 1cm of topcircle] {$2$};
\node[non-virtualnode] (bottomcircle) [below right = 1cm of leftcircle] {$3$};
\draw[edge] (leftcircle) -- node[above left]{$\frac{1}{2}$} (topcircle);
\draw[edge] (bottomcircle) -- node[below left]{$\frac{1}{3}$} (leftcircle);
\draw[edge] (rightcircle) -- node[below right]{$\frac{1}{2}$} (bottomcircle);
\draw[edge] (topcircle) -- node[above right]{$\frac{1}{2}$}(rightcircle);
\draw[edge] (bottomcircle) -- node[left]{$\frac{1}{3}$} (topcircle);
\draw[edge] (topcircle) to[loop above, looseness=15] node[above]{$\frac{1}{2}$} (topcircle);
\draw[edge] (leftcircle) to[loop left, looseness=15] node[left]{$\frac{1}{2}$} (leftcircle);
\draw[edge] (rightcircle) to[loop right, looseness=15] node[right]{$\frac{1}{2}$} (rightcircle);
\draw[edge] (bottomcircle) to[loop below, looseness=15] node[below]{$\frac{1}{3}$} (bottomcircle);
\end{tikzpicture}
}} \quad \quad
\subfloat[]{
\resizebox{0.34\textwidth}{!}{
\begin{tikzpicture}[baseline]
\node[non-virtualnode] (topcircle) {$1$};
\node[non-virtualnode] (leftcircle) [below left = 1cm of topcircle] {$0$};
\node[non-virtualnode] (rightcircle) [below right = 1cm of topcircle] {$2$};
\node[non-virtualnode] (bottomcircle) [below right = 1cm of leftcircle] {$3$};
\node[virtualnode] (v_leftcircle1) [below left = 0.75cm of leftcircle] {\footnotesize{$0_1$}};
\node[virtualnode] (v_leftcircle2) [below = 0.75cm of v_leftcircle1] {\footnotesize{$0_2$}};
\draw[edge] (leftcircle) -- node[above left]{$\frac{1}{2}$} (topcircle);
\draw[edge] (bottomcircle) -- node[below left]{$0$} (leftcircle);
\draw[edge] (rightcircle) -- node[below right]{$\frac{1}{2}$} (bottomcircle);
\draw[edge] (topcircle) -- node[above right]{$\frac{1}{2}$} (rightcircle);
\draw[edge] (bottomcircle) -- node[below left]{$\frac{1}{3}$} (topcircle);

\draw[edge] (topcircle) to[loop above, looseness=15] node[above]{$\frac{1}{2}$} (topcircle);
\draw[edge] (leftcircle) to[loop left, looseness=15] node[left]{$\frac{1}{2}$} (leftcircle);
\draw[edge] (rightcircle) to[loop right, looseness=15] node[right]{$\frac{1}{2}$} (rightcircle);
\draw[edge] (bottomcircle) to[loop below, looseness=15] node[below]{$\frac{1}{3}$} (bottomcircle);

\draw[vedge] (bottomcircle) to[bend left=10] node[below left]{$0$} (v_leftcircle1);
\draw[vedge] (bottomcircle) to[bend left=15] node[below right]{$\frac{1}{3}$} (v_leftcircle2);
\draw[vedge] (v_leftcircle1) to[bend left=5] node[left]{$1$} (leftcircle);
\draw[vedge] (v_leftcircle2) to[bend left=5] node[left]{$1$} (v_leftcircle1);
\end{tikzpicture}
}}
\caption{(a) Example of a $4$-node network, with mixing-weights drawn on edges. (b) Example of a $4$-node network, augmented with virtual nodes and edges, with mixing-weights draw on edges. The virtual nodes/edges are used to model the fact that messages from node $3$ to node $0$ can experience a delay of at most $2$ iterations. In this particular example, we model the fact that node $3$ sends a message to node $0$ with a delay of $2$ iterations. All virtual nodes always forward all of their messages to their out-neighbor.}
\label{fig:aug_graph}
\end{figure}

\paragraph{Modeling message delays.} To model message delays we follow the procedure used in~\citet{assran2018asynchronous} (which we will reiterate here).
In essence, we augment the communication topology (and the mixing matrices) with virtual nodes that store messages that were transmitted, but not yet received. Similar graph augmentations have been used in~\citet{charalambous2015distributed} and~\citet{hadjicostis2014average}.

We commence by presenting a brief example of the delay-model before formalizing the discussion.
Figure~\ref{fig:aug_graph}~(a) shows an example of a 4-node network at some arbitrary iteration $k$. Suppose each node communicates to each of its out-neighbors with uniform mixing weights.
These mixing weights are labeled on the corresponding edges in Figure~\ref{fig:aug_graph}~(a).
Then, the mixing matrix $\bm{P}^{(k)} \in \R^{4 \times 4}$ is given by
\[
\bm{P}^{(k)} =
\bordermatrix{
    & & & & \cr
    & 1/2 & 0 & 0 & 1/3 \cr
    & 1/2 & 1/2 & 0 & 1/3 \cr
    & 0 & 1/2 & 1/2 & 0 \cr
    & 0 & 0 & 1/2 & 1/3
}.
\]
Column indices correspond to sending nodes, and row indices correspond to receiving nodes.
Recall that sending nodes choose the mixing weights (columns of $\bm{P}^{(k)}$) used to pre-weight outgoing messages.
Note that the matrix $\bm{P}^{(k)}$ is column stochastic (all columns sum to $1$) --- the crucial requirement of our analysis.
Thus at time $k + 1$, we have the following parameter updates
\begin{align*}
    \itr{\bm{x}}{k + 1}_0 &= \frac{1}{2} \itr{\bm{x}}{k}_0 + \frac{1}{3} \itr{\bm{x}}{k}_3 \\
    \itr{\bm{x}}{k + 1}_1 &= \frac{1}{2} \itr{\bm{x}}{k}_0 + \frac{1}{2} \itr{\bm{x}}{k}_1 +  \frac{1}{3} \itr{\bm{x}}{k}_3 \\
    \itr{\bm{x}}{k + 1}_2 &= \frac{1}{2} \itr{\bm{x}}{k}_1 + \frac{1}{2} \itr{\bm{x}}{k}_2 \\
    \itr{\bm{x}}{k + 1}_3 &= \frac{1}{2} \itr{\bm{x}}{k}_2 + \frac{1}{3} \itr{\bm{x}}{k}_3.
\end{align*}
In particular, each node updates its variables with the most recent information from its in-neighbours.
Similar equations can be written for the push-sum weights $w$.

Now suppose that node $3$ sends messages to its neighbors, nodes $0$ and $1$, at iteration $k$, but the message to node $0$ doesn't arrive until iteration $k + 2$.
To model this delay, we augment the graph topology with virtual nodes $0_1$, $0_2$ (cf. Figure~\ref{fig:aug_graph}~(b)).
The virtual nodes are initialized with parameters $\itr{\bm{x}}{0} = \bm{0}$ and push-sum weight $\itr{w}{0} = 0$.
Given this model, node $3$ can send its pre-weighted message to virtual node $0_2$ (instead of node $0$) at iteration $k$, while the rest of communication proceeds business as usual. At the subsequent iteration, $k + 1$, node $0_2$ forwards this message to node $0_1$. Subsequently, at iteration, $k+2$, node $0_1$ forwards this message to node $0$, thereby modeling a $2$-iteration message delay.
The corresponding mixing matrix at iteration $k$ is given by
\[
\bm{P}^{(k)} =
\begin{blockarray}{c cccccc}
    & & & & & 0_1 & 0_2 \\
\begin{block}{c (cccccc)}
    \begin{block*}{c cccc|cc}
        & 1/2 & 0 & 0 & 1/2 & 1 & 0 \\
        & 1/2 & 1/2 & 0 & 0 & 0 & 0 \\
        & 0 & 1/2 & 1/3 & 0 & 0 & 0 \\
        & 0 & 0 & 1/3 & 1/2 & 0 & 0 \\ \cline{2-5}
    \end{block*}
    \begin{block*}{c cccccc}
    0_1 & 0 & 0 & 0 & 0 & 0 & 1 \\
    0_2 & 0 & 0 & 1/3 & 0 & 0 & 0 \\
    \end{block*}
\end{block}
\end{blockarray}.
\]
Note that we have added two extra rows and columns corresponding to the virtual nodes $0_1$ and $0_2$.
As intended, node $2$ sends a message to node $0_2$ (instead of node $0$) at iteration $k$. Node $0_2$ always forwards any and all information it receives to node $0_1$, and node $0_1$ always forwards any and all information it receives to node $0$.
Since all virtual nodes are initialized with parameters $\itr{\bm{x}}{k} = \bm{0}$ and push-sum weight $\itr{w}{0} = 0$, they do not have any impact on the final consensus value.
The sole purpose of the virtual nodes is to store messages that are in-transit (transmitted but not yet received).

If the message delays at every node are upper-bounded by $\tau$, then we can generalize this procedure, and add $\tau$ virtual nodes for every (non-virtual) node in the network.
Thus, the augmented graph has $n(\tau + 1)$ nodes in total.
The corresponding augmented mixing matrix, $\bm{P}^{(k)} \in \R^{n(\tau + 1) \times n(\tau + 1)}$, in block matrix form is written as
\[
\bm{P}^{(k)} = 
\begin{blockarray}{r ccccc}
    & & (0_1, 1_1 \ldots) & (0_2, 1_2, \ldots) & & (0_\tau, 1_\tau, \ldots) \\
\begin{block}{r (ccccc)}
    & \bm{\widetilde{P}}^{(k)}_0 & \bm{I} & \bm{0} & \hdots & \bm{0} \\
    & \bm{\widetilde{P}}^{(k)}_1 & \bm{0} & \bm{I} & & \vdots \\
    & \vdots & \vdots  & & \ddots & \bm{0} \\
    & \bm{\widetilde{P}}^{(k)}_{\tau - 1} & \bm{0} & \hdots & \bm{0} & \bm{I} \\
    & \bm{\widetilde{P}}^{(k)}_{\tau} & \bm{0} & \hdots & \bm{0} & \bm{0} \\
\end{block}
\end{blockarray}
\]
where each block is of size $n \times n$.
In particular, if node $i$ sends a message to node $j$ with weight $p^{(k)}_{j, i}$ at iteration $k$, and that message is received with delay $r$ (\ie, received at iteration $k + r$), then
\[
    [\bm{P}^{(k)}_r]_{j,i} = p^{(k)}_{j, i},
\]
otherwise
\[
    [\bm{P}^{(k)}_r]_{j,i} = 0.
\]
The off-diagonal of block identity matrices $\bm{I}$ denote the fact that the virtual nodes always forward all of their messages to the next node in the delay daisy-chain.
It is straightforward to verify that these augmented mixing matrices are still column stochastic at all iterations $k$.
We refer the curious reader to~\citet{assran2018asynchronous, charalambous2015distributed, hadjicostis2014average} for a deeper discussion of the augmented delay model.

\paragraph{Matrix Representation.}
In Algorithm~\ref{SGPalg}, SGP was presented from node $i$'s perspective (for all $i \in [n]$).
However, we can actually write the SGP update at each iteration from a global viewpoint.
To see this, first define the following matrices, for all $r = 1, 2, \ldots \tau$,
\[
    \bX^{(k)}_r = \left[\vx_{1_r}^{(k)},\vx_{2_r}^{(k)},\dots, \vx_{n_r}^{(k)} \right] \in \R^{d \times n}.
\]
The matrix $\bX^{(k)}_r$ denotes a concatenation of all the delay-$r$ nodes' parameters at iteration $k$. For the purpose of notational consistency, we let the matrix $\bX^{(k)}_0$ denote the concatenation of all the non-virtual nodes' parameters.
We generalize this notation to other variables as well.
In block-matrix form, we can define the augmented parameter matrix
\[
    \bX^{(k)} = [\bX^{(k)}_0, \bX^{(k)}_1, \ldots, \bX^{(k)}_\tau] \in \R^{d \times n(\tau + 1)},
\]
which denotes a concatenation of \emph{all} (virtual and non-virtual) nodes' parameters at iteration $k$.
Recall that the we initialize all virtual nodes with parameters $\itr{\bm{x}}{k} = \bm{0}$ and push-sum weight $\itr{w}{0} = 0$.
Additionally, since the virtual nodes are only used to model delays, and do not compute any gradient updates, we use the convention that $\itr{\bm{z}}{k} = 0$, $\itr{\xi}{k} = 0$, and $\nabla F(\itr{\bm{z}}{k}; \itr{\xi}{k}) = \bm{0}$ for all virtual nodes at all times $k$.
Therefore, we define the augmented de-biased parameter matrix and stochastic-seed matrix as follows
\[
    \bZ^{(k)} = [\bZ^{(k)}_0, \bm{0}, \ldots, \bm{0}] \in \R^{d \times n(\tau + 1)}; \quad \itr{{\xi}}{k} = [\bm{\xi}^{(k)}_0, \bm{0}, \ldots, \bm{0}] \in \R^{n(\tau + 1)}.
\]
Similarly, we define the augmented stochastic-gradient matrix as
\[
    \nabla \bm{F}(\bZ^{(k)}; \itr{\bm{\xi}}{k}) = [\nabla \bm{F}_0(\bZ^{(k)}_0; \itr{\bm{\xi}}{k}_0), \bm{0}, \ldots, \bm{0}] \in \R^{d \times n(\tau + 1)},
\]
where the block matrix $\nabla \bm{F}_0(\bZ^{(k)}_0; \itr{\bm{\xi}}{k}_0)$ denotes the concatenation of all non-virtual nodes' stochastic gradients at iteration $k$. Precisely
\[
    \nabla \bm{F}_0(\bZ^{(k)}_0, \bm{\xi}^{(k)}_0) = \left[\nabla F_1(\vz_{1}^{(k)} ; \xi_{1}^{(k)}),\nabla F_2(\vz_{2}^{(k)} ; \xi_{2}^{(k)}),\dots,\nabla F_n(\vz_{n}^{(k)} ; \xi_{n}^{(k)})\right] \in \R^{d \times n}.
\]
We also define the augmented expected gradient matrix (with respect to local node data distributions) as
\[
    \nabla \bm{F}(\bZ^{(k)}) = [\nabla \bm{F}_0(\bZ^{(k)}_0), \bm{0}, \ldots, \bm{0}] \in \R^{d \times n(\tau + 1)},
\]
where the block matrix $\nabla \bm{F}_0(\bZ^{(k)}_0)$ denotes the concatenation of all non-virtual nodes' expected stochastic gradients at iteration $k$. Precisely
\[
    \nabla \bm{F}_0(\bZ^{(k)}_0) = \left[\Exp_{\xi_1^{(k)} \sim \cD_1}[\nabla F_1(\vz_{1}^{(k)} ; \xi_{1}^{(k)})], \Exp_{\xi_2^{(k)} \sim \cD_2}[\nabla F_2(\vz_{2}^{(k)} ; \xi_{2}^{(k)})],\dots, \Exp_{\xi_n^{(k)} \sim \cD_n}[\nabla F_n(\vz_{n}^{(k)} ; \xi_{n}^{(k)})] \right] \in \R^{d \times n}.
\]
For notational convenience, we simply write $\nabla f_i(\vz_i^{(k)}) \defeq \Exp_{\xi_i^{(k)} \sim \cD_i}[\nabla F_i(\vz_{i}^{(k)} ; \xi_{i}^{(k)})]$.
Using the above matrices, the $6^{th}$ step of SGP in Algorithm~\ref{SGPalg} (lines 19 to 24 in OSGP Algorithm~\ref{alg:osgp}) can be expressed from a global perspective as follows
\begin{equation}
\label{UpdateRule}
    \bX^{(k+1)}=\left( \bX^{(k)} - \gamma \nabla \bm{F}(\bZ^{(k)}, \bm{\xi}^{(k)}) \right) [\bm{P}^{(k)}]^T,
\end{equation}
where $[\bm{P}^{(k)}]^T \in \R^{n(\tau + 1) \times n(\tau + 1)}$ is the transpose of the augmented mixing matrix.

Lastly, let $\nbar \defeq n (\tau + 1)$, and let $\itr{\xbar}{k} = (1/n) \bX^{(k)} \1_\mathrm{\nbar}$ denote the average of all nodes' parameters at iteration $k$. Note that this definition incorporates parameters that are in-transit.

\paragraph{Bound for the mixing matrices.} Next we state a known result from the control literature studying gossip-based optimization which allows us to bound the distance between the de-biased parameters at each node and the node-wise average.

\newcommand{\bmP}{\bm{P}}
\newcommand{\bmA}{\bm{A}}
Recall that we have assumed that the sequence of communication topologies is $B$-strongly connected.
A directed graph is called \emph{strongly connected} if every pair of vertices is connected with a directed path (\ie, following the direction of edges).
A sequence of directed graphs is called $B$-strongly connected if the graph with edge set $\bigcup_{k=lB}^{(l+1)B - 1} E^{(k)}$ is strongly connected, for every $l \ge 0$.
Recall that we have also assumed that the upper bound on the message delays is $\tau$ iterations. In particular, we assume all messages reach their destination within $\tau$-iterations from transmission. \ie, a message in-transit does not get dropped when the communication topology changes.

If the maximum message delay is $\tau$, and all non-zero mixing weights are at least $\epsilon$ large, and the diameter of the graph with edge set $\bigcup_{k=lB}^{(l+1)B - 1} E^{(k)}$ has diameter at most $\Delta$, then the product
\[
    \bm{A}^{(k)} := \bm{P}^{(k + (\tau + 1) \Delta B - 1)} \cdots \bmP^{(k + 1)} \bmP^{(k)}
\]
has no non-zero entries in the first $n$-rows (corresponding to non-virtual agents). Moreover, every entry in the first $n$-rows of $\bmA^{(k)}$ is at least $\epsilon^{(\tau + 1) \Delta B}$.

If we further assume that all nodes have at most $D$ out-neighbors in any iteration, and that all nodes always assign mixing weights uniformly, then $\epsilon = D^{-1}$, and every entry in the first $n$-rows of $\bmA^{(k)}$ is at least $D^{- (\tau + 1) \Delta B}$.

\begin{lem}
\label{MidosTheorem}
Suppose that Assumption~3 (mixing connectivity) holds. Let $\lambda = 1 - n D^{-(\tau + 1) \Delta B}$ and let $q = \lambda^{1/((\tau + 1) \Delta B + 1)}$. Then there exists a constant
\[
C < \frac{2 \sqrt{d} D^{(\tau + 1) \Delta B}}{\lambda^{\frac{(\tau + 1) \Delta B + 2}{(\tau + 1) \Delta B + 1}}},
\]
where $d$ is the dimension of $\xbar^{(k)}$, $\itr{z_i}{k}$, and $\itr{x_i}{0}$, such that, for all $i=1,2,\dots,n$ (non-virtual nodes) and $k\geq0$, 
$$\norm{ \itr{\xbar}{k} - \itr{z_i}{k} }_2 \leq C q^k \norm{ \itr{x_i}{0} }_2  + \gamma C \sum^k_{s=0} q^{k-s} \norm{ \ssgrad{i}{z}{\xi}{s} }_2.$$
\end{lem}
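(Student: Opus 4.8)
The plan is to unroll the global recursion~\eqref{UpdateRule}, write the consensus error $\itr{\xbar}{k}-\itr{z_i}{k}$ as a single linear functional of the initial parameters and the past stochastic gradients with scalar coefficients built from products of the mixing matrices, and then bound those coefficients geometrically using the ergodicity estimate for products of column-stochastic matrices quoted immediately above the lemma. The hardest part will be that ergodicity estimate; everything else is bookkeeping with column-stochasticity and the triangle inequality.

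\textbf{Step 1: unroll and reduce to transition coefficients.} Write $\bm{P}^{(k-1:s)}:=\bm{P}^{(k-1)}\bm{P}^{(k-2)}\cdots\bm{P}^{(s)}$ (with $\bm{P}^{(s-1:s)}:=\bm{I}$). Iterating~\eqref{UpdateRule} gives $\bX^{(k)}=\bX^{(0)}[\bm{P}^{(k-1:0)}]^{\T}-\gamma\sum_{s=0}^{k-1}\nabla\bm{F}(\bZ^{(s)},\bm{\xi}^{(s)})[\bm{P}^{(k-1:s)}]^{\T}$ and $\bm{w}^{(k)}=\bm{P}^{(k-1:0)}\bm{w}^{(0)}$. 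Right-multiplying the first identity by $\tfrac1n\1_{\nbar}$ and using column-stochasticity (so $[\bm{P}^{(k)}]^{\T}\1_{\nbar}=\1_{\nbar}$) collapses the mixing matrices and yields $\itr{\xbar}{k}=\tfrac1n\bX^{(0)}\1_{\nbar}-\tfrac{\gamma}{n}\sum_{s=0}^{k-1}\nabla\bm{F}(\bZ^{(s)},\bm{\xi}^{(s)})\1_{\nbar}$. Subtracting $\itr{z_i}{k}=\bX^{(k)}\bm{e}_i/\itr{w_i}{k}$, the consensus error becomes a source-wise sum
\[
\itr{\xbar}{k}-\itr{z_i}{k}=\bX^{(0)}\itr{\bm{c}_i}{k,0}-\gamma\sum_{s=0}^{k-1}\nabla\bm{F}(\bZ^{(s)},\bm{\xi}^{(s)})\itr{\bm{c}_i}{k,s},\qquad \big[\itr{\bm{c}_i}{k,s}\big]_j:=\frac1n-\frac{[\bm{P}^{(k-1:s)}]_{ij}}{\itr{w_i}{k}} .
\]
Because the columns of $\bX^{(0)}$ and of every $\nabla\bm{F}(\bZ^{(s)},\bm{\xi}^{(s)})$ that are indexed by virtual nodes vanish, only the entries $\big[\itr{\bm{c}_i}{k,s}\big]_j$ with $j\in\{1,\dots,n\}$ matter, so it suffices to bound those coefficients and then take triangle inequalities.

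\textbf{Step 2: bound the coefficients (the crux).} Put $T:=(\tau+1)\Delta B$ and $\mu:=D^{-T}$. The scrambling fact recalled above the lemma --- every entry in the first $n$ rows of $\bm{A}^{(m)}=\bm{P}^{(m+T-1:m)}$ is at least $\mu$ --- gives two things. First, a uniform lower bound $\itr{w_i}{k}\ge\delta:=\mu$ on the push-sum weights of real nodes: for $k\ge T$, factoring $\bm{A}^{(k-T)}$ off $\bm{P}^{(k-1:0)}$ and using column-stochasticity of the remainder shows $[\bm{P}^{(k-1:0)}]_{ij}\ge\mu$ for every $j$, whence $\itr{w_i}{k}=\sum_{j=1}^{n}[\bm{P}^{(k-1:0)}]_{ij}\ge n\mu\ge\mu$; for $k<T$ the self-loop weights $p^{(k)}_{ii}\ge D^{-1}$ already give $\itr{w_i}{k}\ge D^{-(T-1)}\ge\mu$. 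Second --- and this is the main obstacle --- the scrambling property drives the geometric ergodicity of the delay-augmented column-stochastic products: a coefficient of ergodicity of the block $\bm{A}^{(m)}$ contracts by the factor $\lambda=1-n\mu$ every $T$ iterations, which, spread across the intermediate iterations via column-stochasticity, produces the per-iteration rate $q=\lambda^{1/(T+1)}$ and an absolute probability sequence $\{\itr{\bm{\phi}}{k}\}$ with $\big|[\bm{P}^{(k-1:s)}]_{ij}-\itr{\phi_i}{k}\big|\le C_0\,q^{\,k-s}$ for all real $i$, all $j$, and $0\le s\le k$, where $C_0=\mathcal{O}(\sqrt{d}\,D^{T}/\lambda^{(T+2)/(T+1)})$ (the $\sqrt d$ enters on applying this scalar estimate coordinatewise to the $d$-dimensional iterates and converting back to $\|\cdot\|_2$). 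This is precisely the statement established in the cited control-systems literature~\citep{Nedic2016stochastic,assran2018asynchronous,charalambous2015distributed,hadjicostis2014average}, which also pins down the stated forms of $C$ and $q$; I would invoke it rather than reprove it. Since $\itr{w_i}{k}=\sum_{j'=1}^{n}[\bm{P}^{(k-1:0)}]_{ij'}$, the same estimate gives $|\itr{w_i}{k}-n\itr{\phi_i}{k}|\le nC_0 q^{k}$, and therefore for all real $i,j$
\[
\big|\big[\itr{\bm{c}_i}{k,s}\big]_j\big| = \frac{1}{n\itr{w_i}{k}}\big|\itr{w_i}{k}-n[\bm{P}^{(k-1:s)}]_{ij}\big| \le \frac{nC_0 q^{k}+nC_0 q^{k-s}}{n\delta} \le \frac{2C_0}{\delta}\,q^{\,k-s}.
\]

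\textbf{Step 3: assemble.} Substituting the coefficient bound into the identity of Step 1 and taking triangle inequalities over the real source nodes yields $\|\itr{\xbar}{k}-\itr{z_i}{k}\|_2\le \tfrac{2C_0}{\delta}q^{k}\sum_{j=1}^{n}\|\itr{x_j}{0}\|_2+\tfrac{2\gamma C_0}{\delta}\sum_{s=0}^{k-1}q^{\,k-s}\sum_{j=1}^{n}\|\nabla F_j(\itr{z_j}{s};\itr{\xi_j}{s})\|_2$. Folding $2C_0/\delta$ (note $1/\delta=D^{T}$, while $C_0$ already carries the claimed $\sqrt d\,D^{T}/\lambda^{(T+2)/(T+1)}$ dependence) together with the bounded per-time node-count and $\ell_\infty$-to-$\ell_2$ conversion factors into a single constant $C$ of the size asserted, and extending the inner sum harmlessly to $s=k$, gives the stated inequality. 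Everything outside Step 2's ergodicity estimate is routine.
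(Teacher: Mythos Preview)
The paper does not actually prove this lemma: immediately after stating it, the authors write that it ``follows after a small adaptation to Theorem~1 in \citet{assran2018asynchronous} and its proof is based on \citet{Wolfowitz1963products},'' with similar bounds in \citet{Nedic2016stochastic}. So there is no in-paper proof to compare against; your proposal is already considerably more detailed than what the paper supplies, and your Steps~1--3 (unroll the column-stochastic recursion, invoke the scrambling/ergodicity estimate for the augmented product $\bm P^{(k-1:s)}$ to bound the transition coefficients and the push-sum weights, then reassemble via triangle inequalities) are exactly the skeleton of the cited argument.

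One point you should not paper over in Step~3. Your derivation naturally produces a bound of the form
\[
\norm{\itr{\xbar}{k}-\itr{z_i}{k}}_2 \;\le\; C' q^{k}\sum_{j=1}^{n}\norm{\itr{x_j}{0}}_2 \;+\; \gamma C' \sum_{s=0}^{k} q^{k-s}\sum_{j=1}^{n}\norm{\nabla F_j(\itr{z_j}{s};\itr{\xi_j}{s})}_2,
\]
i.e., with a sum over \emph{all} real source nodes on the right, which is the standard shape in the references. The lemma as printed has only the single index $i$ on the right, and that cannot be obtained by ``folding the node count into $C$'': take $n=2$, no gradients, $\itr{x_1}{0}=0$, $\itr{x_2}{0}\neq 0$; then $\norm{\itr{\xbar}{0}-\itr{z_1}{0}}>0$ while the stated right-hand side for $i=1$ is zero. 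So your final folding does not deliver the statement verbatim. This is an imprecision in the lemma's stated form rather than a defect in your argument; downstream (Lemmas~\ref{BoundOfQ} and~\ref{BoundMk}) everything is immediately summed/averaged over $i$, so the all-nodes version you derived suffices for the paper's purposes with only constant changes. If you want to match the printed inequality exactly, you would need to replace $\norm{\itr{x_i}{0}}$ and $\norm{\nabla F_i(\cdot)}$ by their maxima or sums over nodes (as in the cited works), or else add the common-initialization caveat.

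A smaller bookkeeping remark: your constant tracking in Step~2 gives $2C_0/\delta$ with $\delta=D^{-T}$, which is an extra $D^{T}$ on top of the $C_0=\mathcal O(\sqrt d\,D^{T}/\lambda^{(T+2)/(T+1)})$ you quote; that is larger than the bound on $C$ asserted in the lemma. Since the paper imports $C$ and $q$ wholesale from \citet{assran2018asynchronous}, the precise exponent on $D$ comes from the sharper accounting there rather than from the looser split you use; this does not affect the $q^{k}$ structure or any downstream rate, only the size of the universal constant.
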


This particular lemma follows after a small adaptation to Theorem~1 in \citet{assran2018asynchronous} and its proof is based on \citet{Wolfowitz1963products}. Similar bounds appear in a variety of other papers, including \citet{Nedic2016stochastic}.

\subsection{Important Upper Bounds}
\label{ImportantUpperBounds}
\begin{lem}[Bound of stochastic gradient]
\label{boundonnormGradient}
We have the following inequality under Assumptions 1 and 2:
$$\E \norm{ \nabla f_i(\itr{\vz}{k}_i) }^2 \leq 3 L^2 \E \norm{ \itr{\vz}{k}_i - \itr{\xbar}{k} }^2  + 3 \zeta^2 + 3 \E \norm{ \nabla f(\itr{\xbar}{k}) }^2 $$
\end{lem}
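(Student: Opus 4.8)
The plan is to prove the bound by a standard add-and-subtract decomposition followed by the elementary inequality \eqref{formula}. Write
\[
\nabla f_i(\itr{\vz}{k}_i) = \underbrace{\big(\nabla f_i(\itr{\vz}{k}_i) - \nabla f_i(\itr{\xbar}{k})\big)}_{T_1} + \underbrace{\big(\nabla f_i(\itr{\xbar}{k}) - \nabla f(\itr{\xbar}{k})\big)}_{T_2} + \underbrace{\nabla f(\itr{\xbar}{k})}_{T_3}.
\]
Applying $\norm{a+b+c}^2 \le 3\big(\norm{a}^2 + \norm{b}^2 + \norm{c}^2\big)$ --- which follows from using \eqref{formula} twice, or equivalently from convexity of $\norm{\cdot}^2$ --- reduces the claim to bounding $\E\norm{T_1}^2$, $\E\norm{T_2}^2$ and $\E\norm{T_3}^2$ separately, each picking up the common factor $3$.

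For $T_1$, Assumption~1 ($L$-smoothness of $f_i$) gives $\norm{T_1}^2 \le L^2 \norm{\itr{\vz}{k}_i - \itr{\xbar}{k}}^2$, hence $3\,\E\norm{T_1}^2 \le 3L^2\,\E\norm{\itr{\vz}{k}_i - \itr{\xbar}{k}}^2$, which is the first term on the right-hand side. For $T_2$, the data-dissimilarity part of Assumption~2 yields $\norm{T_2}^2 \le \zeta^2$, contributing $3\zeta^2$; here one uses Assumption~2 in the form $\frac{1}{n}\sum_{i=1}^n \norm{\nabla f_i(\bm{x}) - \nabla f(\bm{x})}^2 \le \zeta^2$, either treating it as a per-node bound (consistent with the way this lemma is later summed over $i$ and normalized by $n$), or stating and applying the lemma in its node-averaged form $\frac{1}{n}\sum_{i=1}^n \E\norm{\nabla f_i(\itr{\vz}{k}_i)}^2 \le \cdots$, in which case the $\zeta^2$ term is exactly this average. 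The term $T_3$ contributes $3\,\E\norm{\nabla f(\itr{\xbar}{k})}^2$ verbatim. Adding the three contributions, after taking expectations throughout, gives the stated inequality.

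I do not anticipate any real difficulty: the lemma is a one-step consequence of the assumptions. The only point deserving a line of care is the treatment of $T_2$, i.e., invoking the $\zeta^2$ bound from Assumption~2 in the same form (per node versus averaged over nodes) in which Lemma~\ref{boundonnormGradient} is used downstream, so that no spurious factor of $n$ appears.
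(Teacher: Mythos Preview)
Your proposal is correct and follows exactly the same approach as the paper: the same three-term add-and-subtract decomposition, the same use of $\norm{a+b+c}^2 \le 3(\norm{a}^2+\norm{b}^2+\norm{c}^2)$, $L$-smoothness for the first term, the $\zeta^2$ bound for the second, and the third kept as is. Your caveat about the $\zeta^2$ bound (per-node versus node-averaged) is well taken; the paper silently applies it per-node, and since the lemma is only ever used after summing over $i$ and dividing by $n$, the distinction is immaterial downstream.
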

\begin{proof}
\begin{eqnarray*}
	\E \norm{ \nabla f_i(\itr{\vz}{k}_i) }^2 & \leq &\ 3 \E \norm{ \nabla f_i(\itr{\vz}{k}_i) - \nabla f_i(\itr{\xbar}{k}) }^2 + 3 \E \norm{ \nabla f_i(\itr{\xbar}{k}) - \nabla f(\itr{\xbar}{k}) }^2  + 3 \E \norm{ \nabla f(\itr{\xbar}{k}) }^2 \notag\\
&\overset{\text{L-smooth}}{\leq} & 3 L^2 \E \norm{ \itr{\vz}{k}_i - \itr{\xbar}{k} }^2  + 3 \E \norm{ \nabla f_i(\itr{\xbar}{k}) - \nabla f(\itr{\xbar}{k}) }^2 + 3 \E \norm{ \nabla f(\itr{\xbar}{k}) }^2 \notag\\
&\overset{\text{\text{Bounded Variance}}}{\leq} & 3 L^2 \E \norm{ \itr{\vz}{k}_i - \itr{\xbar}{k} }^2  + 3 \zeta^2 + 3 \E \norm{ \nabla f(\itr{\xbar}{k}) }^2
\end{eqnarray*}
\end{proof}

\begin{lem}
\label{BoundOfQ}
Let Assumptions 1-3 hold. Then,
\begin{eqnarray}
Q^{(k)}_i = \E \norm{ \itr{\xbar}{k} - \itr{z_i}{k} }^2 &\leq & \left( \gamma^2 \frac{4 C^2}{(1-q)^2} + \gamma \frac{q^k C^2}{1-q} \right) \sigma^2 + \left( \gamma^2 \frac{12 C^2}{(1 - q)^2} + \gamma \frac{q^k 3 C^2}{1-q} \right) \zeta^2  \notag\\
	& +& \left( \gamma^2 \frac{12 L^2 C^2}{1 - q} + \gamma q^k 3 L^2 C^2 \right) \sum^k_{j=0} q^{k-j} \itr{Q}{j}_i  \notag\\
	& + &\left( \gamma^2 \frac{12 C^2}{1 - q} + \gamma q^k 3 C^2 \right) \sum^k_{j=0} q^{k-j} \E \norm{ \nabla f(\itr{\xbar}{j}) }^2  \notag\\
	& + & \left(q^{2k} C^2 + \gamma q^k \frac{2 C^2}{1-q} \right)  \norm{ \itr{\vx_i}{0} }^2.
\end{eqnarray}
\end{lem}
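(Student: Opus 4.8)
The plan is to obtain the bound directly from the per-node deviation estimate in Lemma~\ref{MidosTheorem}, by squaring that estimate, taking expectations, and sorting the resulting terms into the five groups on the right-hand side. Write the estimate of Lemma~\ref{MidosTheorem} as $\norm{\itr{\xbar}{k} - \itr{z_i}{k}} \le A_k + B_k$ with $A_k = C q^k \norm{\itr{\bm{x}_i}{0}}$ and $B_k = \gamma C \sum_{s=0}^{k} q^{k-s} \norm{\nabla F_i(\bm{z}_i^{(s)};\xi_i^{(s)})}$, so that $\norm{\itr{\xbar}{k} - \itr{z_i}{k}}^2 \le A_k^2 + 2 A_k B_k + B_k^2$. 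The term $A_k^2 = C^2 q^{2k}\norm{\itr{\bm{x}_i}{0}}^2$ supplies the leading $q^{2k}C^2$ contribution to the $\norm{\itr{\bm{x}_i}{0}}^2$ coefficient; the remaining contributions come from $B_k^2$ (yielding the $\gamma^2$ terms) and from $2A_kB_k$ (yielding the $\gamma q^k$ terms).

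For $B_k^2$, apply the weighted Cauchy--Schwarz inequality $\big(\sum_{s=0}^{k} q^{k-s} a_s\big)^2 \le \big(\sum_{s=0}^{k} q^{k-s}\big)\big(\sum_{s=0}^{k} q^{k-s} a_s^2\big)$ followed by \eqref{geometric} with $r=q$, giving $B_k^2 \le \frac{\gamma^2 C^2}{1-q}\sum_{s=0}^{k} q^{k-s}\norm{\nabla F_i(\bm{z}_i^{(s)};\xi_i^{(s)})}^2$. For the cross term, bound $\norm{\itr{\bm{x}_i}{0}}\norm{\nabla F_i(\bm{z}_i^{(s)};\xi_i^{(s)})} \le \frac12\big(\norm{\itr{\bm{x}_i}{0}}^2 + \norm{\nabla F_i(\bm{z}_i^{(s)};\xi_i^{(s)})}^2\big)$ termwise via \eqref{formula}, and use \eqref{geometric} once more, to get $2A_kB_k \le \gamma q^k C^2\big(\frac{1}{1-q}\norm{\itr{\bm{x}_i}{0}}^2 + \sum_{s=0}^{k} q^{k-s}\norm{\nabla F_i(\bm{z}_i^{(s)};\xi_i^{(s)})}^2\big)$. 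Adding the three pieces bounds $\norm{\itr{\xbar}{k}-\itr{z_i}{k}}^2$ by $\big(q^{2k}C^2 + \frac{\gamma q^k C^2}{1-q}\big)\norm{\itr{\bm{x}_i}{0}}^2 + \big(\frac{\gamma^2 C^2}{1-q} + \gamma q^k C^2\big)\sum_{s=0}^{k} q^{k-s}\norm{\nabla F_i(\bm{z}_i^{(s)};\xi_i^{(s)})}^2$.

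Now take expectations and bound $\E\norm{\nabla F_i(\bm{z}_i^{(s)};\xi_i^{(s)})}^2$. Conditioning on the history through iteration $s-1$, so that $\bm{z}_i^{(s)}$ is measurable and the stochastic gradient is unbiased, the cross term in the bias-variance split vanishes and Assumption~2 gives $\E\norm{\nabla F_i(\bm{z}_i^{(s)};\xi_i^{(s)})}^2 \le \sigma^2 + \E\norm{\nabla f_i(\bm{z}_i^{(s)})}^2$; Lemma~\ref{boundonnormGradient} then gives $\E\norm{\nabla f_i(\bm{z}_i^{(s)})}^2 \le 3L^2 Q_i^{(s)} + 3\zeta^2 + 3\E\norm{\nabla f(\itr{\xbar}{s})}^2$, hence $\E\norm{\nabla F_i(\bm{z}_i^{(s)};\xi_i^{(s)})}^2 \le \sigma^2 + 3\zeta^2 + 3L^2 Q_i^{(s)} + 3\E\norm{\nabla f(\itr{\xbar}{s})}^2$. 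Substituting this into the two weighted sums above: the $s$-independent pieces $\sigma^2 + 3\zeta^2$ acquire an extra factor $\sum_{s=0}^{k} q^{k-s} \le \frac{1}{1-q}$, producing the coefficients $\gamma^2\frac{C^2}{(1-q)^2} + \gamma\frac{q^k C^2}{1-q}$ on $\sigma^2$ and $\gamma^2\frac{3C^2}{(1-q)^2} + \gamma\frac{3q^k C^2}{1-q}$ on $\zeta^2$, while the $Q_i^{(s)}$ and $\E\norm{\nabla f(\itr{\xbar}{s})}^2$ pieces remain as the weighted sums $\sum_{j=0}^{k} q^{k-j}Q_i^{(j)}$ and $\sum_{j=0}^{k} q^{k-j}\E\norm{\nabla f(\itr{\xbar}{j})}^2$ with coefficients $\gamma^2\frac{3L^2C^2}{1-q} + \gamma q^k 3L^2C^2$ and $\gamma^2\frac{3C^2}{1-q} + \gamma q^k 3C^2$ respectively. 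Collecting everything, and enlarging the numerical constants where convenient (from $1$ to $4$ on $\sigma^2$, from $3$ to $12$ on $\zeta^2$ and on the two sums, and from $1$ to $2$ in the $\gamma q^k$ part of the $\norm{\itr{\bm{x}_i}{0}}^2$ coefficient), gives precisely the claimed inequality.

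I do not expect a genuine obstacle: the content is essentially bookkeeping, and the two points needing care are (i) keeping the $A_k^2$, $2A_kB_k$, $B_k^2$ contributions in their correct coefficient classes --- $q^{2k}$, $\gamma q^k$, $\gamma^2$ --- so the final grouping matches the stated form, and (ii) the conditioning argument that lets $\E\norm{\nabla F_i(\bm{z}_i^{(s)};\xi_i^{(s)})}^2$ split into a pure variance term plus $\E\norm{\nabla f_i(\bm{z}_i^{(s)})}^2$. One should note that $Q_i^{(k)}$ appears on both sides, so this lemma is a recursive inequality rather than a closed consensus bound; converting it into an explicit $\order{\gamma^2}$ estimate, as needed for Theorem~\ref{secondTheorem}, requires unrolling this recursion --- the role of the subsequent lemma --- and again uses the summability of $q^k$.
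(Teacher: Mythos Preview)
Your proposal is correct and follows the same high-level strategy as the paper: start from the pointwise bound of Lemma~\ref{MidosTheorem}, square it, take expectations, and then feed in Lemma~\ref{boundonnormGradient}. The difference lies in how the stochastic-gradient contribution is decomposed. The paper applies the triangle inequality \emph{before} squaring, writing
\[
\norm{\ssgrad{i}{z}{\xi}{s}} \le \norm{\ssgrad{i}{z}{\xi}{s} - \nabla f_i(\itr{z}{s}_i)} + \norm{\nabla f_i(\itr{z}{s}_i)},
\]
which produces three terms $a,b,c$ and six squared/cross terms to estimate separately; summing these is what generates the constants $4$ and $12$ in the $\gamma^2$ coefficients and the factor $2$ in the $\gamma q^k$ part of the $\norm{\itr{\bm{x}_i}{0}}^2$ coefficient. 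You instead keep the stochastic gradient intact inside $B_k$, apply the weighted Cauchy--Schwarz inequality once to $B_k^2$, and only split variance from mean \emph{after} taking expectations via the bias--variance identity. This avoids the slack from the pre-square triangle inequality and yields the tighter constants $1,3,3,3,1$ in place of $4,12,12,12,2$, which you then legitimately relax to match the stated bound. Both arguments are valid; yours is somewhat cleaner and sharper, while the paper's three-term split makes the role of the noise versus the expected gradient explicit from the outset.
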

\begin{proof}
\begin{eqnarray}
	Q^{(k)}_i &= & \E \norm{ \itr{\xbar}{k} - \itr{\vz_i}{k} }^2 \notag\\
&\overset{Lemma~\ref{MidosTheorem}}{\leq}& \E  \left( C q^k \norm{ \itr{\vx_i}{0} }  + \gamma C \sum^k_{s=0} q^{k-s} \norm{ \ssgrad{i}{\vz}{\xi}{s} } \right)^2 \notag\\
	&=& \E \left( C q^k \norm{ \itr{\vx_i}{0} }  + \gamma C \sum^k_{s=0} q^{k-s} \norm{ \ssgrad{i}{\vz}{\xi}{s} - \nabla  f_i(\itr{\vz}{s}_i) + \nabla f_i(\itr{\vz}{s}_i) } \right)^2 \notag\\
	&\leq& \E \left( \underbrace{C q^k \norm{ \itr{\vx_i}{0} } }_{a}  + \underbrace{\gamma C \sum^k_{s=0} q^{k-s} \norm{ \ssgrad{i}{\vz}{\xi}{s} - \nabla f_i(\itr{\vz}{s})}}_{b} + \underbrace{\gamma C \sum^k_{s=0} q^{k-s} \norm{\nabla  f_i(\itr{\vz}{s}_i) }}_{c} \right)^2 \notag\\
\end{eqnarray}
\allowdisplaybreaks
Thus, using the above expressions of $a$, $b$ and $c$ we have that $Q^{(k)}_i \leq \E(a^2 +b^2 + c^2 + 2ab + 2bc + 2ac)$. 
Let us now obtain bounds for all of these quantities:
\begin{align*}
	a^2 &= C^2 \norm{ \itr{\vx_i}{0} }^2 q^{2k} \\
	b^2 &= \gamma^2 C^2 \sum^k_{j=0} q^{2(k - j)} \norm{ \ssgrad{i}{\vz}{\xi}{j} - \nabla f_i(\itr{\vz}{j}_i) }^2 \notag\\
	&\quad +  \underbrace{2 \gamma^2 C^2 \sum^k_{j=0} \sum^k_{s=j + 1} q^{2k -j - s} \norm{ \ssgrad{i}{\vz}{\xi}{j} - \nabla f_i(\itr{\vz}{j}_i) } \norm{ \ssgrad{i}{\vz}{\xi}{s} - \nabla f_i(\itr{\vz}{s}_i) } }_{b_1} \\
	c^2 &= \gamma^2 C^2 \sum^k_{j=0} q^{2(k - j)} \norm{ \nabla  f_i(\itr{\vz}{j}_i) }^2 +  \underbrace{2 \gamma^2 C^2 \sum^k_{j=0} \sum^k_{s=j + 1} q^{2k -j - s} \norm{ \nabla  f_i(\itr{\vz}{j}_i) } \norm{ \nabla  f_i(\itr{\vz}{s}_i) }}_{c_1} \\
	2ab &= 2 \gamma C^2 q^k \norm{ \itr{\vx_i}{0} } \sum^k_{s=0} q^{k-s} \norm{ \ssgrad{i}{\vz}{\xi}{s} - \nabla  f_i(\itr{\vz}{s}_i) } \\
	2ac &= 2 \gamma C^2 q^k \norm{ \itr{\vx_i}{0} } \sum^k_{s=0} q^{k-s} \norm{ \nabla f_i(\itr{\vz}{s}_i) } \\
	2bc &= 2 \gamma^2 C^2 \sum^k_{j= 0}\sum^k_{s= 0} q^{2k-j-s} \norm{ \ssgrad{i}{\vz}{\xi}{j} - \nabla  f_i(\itr{\vz}{j}_i) } \norm{ \nabla f_i(\itr{\vz}{s}_i) }.
\end{align*}

\allowdisplaybreaks
The expression $b_1$ is bounded as follows:
\begin{align}
\label{boundb1}
	b_1 &= \gamma^2 C^2 \sum^k_{j=0} \sum^k_{s=j + 1} q^{2k -j - s} 2 \norm{ \ssgrad{i}{\vz}{\xi}{j} - \nabla f_i(\itr{\vz}{j}_i) } \norm{ \ssgrad{i}{\vz}{\xi}{s} - \nabla f_i(\itr{\vz}{s}_i) }\notag\\
	&\overset{(\ref{formula})}{\leq} \gamma^2 C^2 \sum^k_{j=0} \sum^k_{s=j+1} q^{2k - s - j}  \norm{ \ssgrad{i}{\vz}{\xi}{j} - \nabla f_i(\itr{\vz}{j}_i) }^2 \notag\\
	&\quad + \gamma^2 C^2 \sum^k_{j=0} \sum^k_{s=j+1} q^{2k - s - j}  \norm{  \ssgrad{i}{\vz}{\xi}{s} - \nabla  f_i(\itr{\vz}{s}_i) }^2 \notag\\
	&\leq  \gamma^2 C^2 \sum^k_{j=0} \sum^k_{s=0} q^{2k - s - j}  \norm{ \ssgrad{i}{\vz}{\xi}{j} - \nabla  f_i(\itr{\vz}{j}_i) }^2 \notag\\
	&\quad +  \gamma^2 C^2 \sum^k_{j=0} \sum^k_{s=0} q^{2k - s - j}  \norm{  \ssgrad{i}{\vz}{\xi}{s} - \nabla  f_i(\itr{\vz}{s}_i) }^2 \notag\\
	&= \gamma^2 C^2 \sum^k_{j=0} q^{k - j} \norm{ \ssgrad{i}{\vz}{\xi}{j} - \nabla  f_i(\itr{\vz}{j}_i) }^2 \sum^k_{s=0} q^{k - s} \notag\\
	&\quad + \gamma^2 C^2 \sum^k_{s=0} q^{k - s}  \norm{  \ssgrad{i}{\vz}{\xi}{s} - \nabla  f_i(\itr{\vz}{s}_i) }^2 \sum^k_{j=0} q^{k - j} \notag\\
	&\overset{(\ref{geometric})}{\leq} \frac{1}{1 - q} \gamma^2 C^2 \sum^k_{j=0} q^{k - j}  \norm{ \ssgrad{i}{\vz}{\xi}{j} - \nabla  f_i(\itr{\vz}{j}_i) }^2 \notag\\
	&\quad +  \frac{1}{1-q} \gamma^2 C^2 \sum^k_{s=0} q^{k -s} \norm{  \ssgrad{i}{\vz}{\xi}{s} - \nabla  f_i(\itr{\vz}{s}_i) }^2 \notag\\
	&=  \frac{2}{1 - q} \gamma^2 C^2 \sum^k_{j=0} q^{k - j}  \norm{ \ssgrad{i}{\vz}{\xi}{j} - \nabla  f_i(\itr{\vz}{j}_i) }^2. 
\end{align}
Thus,
\begin{eqnarray}
b^2&=&\gamma^2 C^2 \sum^k_{j=0} q^{2(k - j)} \norm{ \ssgrad{i}{\vz}{\xi}{j} - \nabla f_i(\itr{\vz}{j}_i) }^2 + b_1 \notag\\
& \leq &\frac{ \gamma^ 2 C^2 }{1- q} \sum^k_{j=0} q^{k-j} \norm{ \ssgrad{i}{\vz}{\xi}{j} - \nabla f_i(\itr{\vz}{j}_i) }^2 + b_1\notag\\
&\overset{(\ref{boundb1})}{\leq} & \frac{3 \gamma^2 C^2}{1 - q} \sum^k_{j=0} q^{k-j} \norm{ \ssgrad{i}{\vz}{\xi}{j} - \nabla f_i({\itr{\vz}{j}_i}) }^2
\end{eqnarray}
where in the first inequality above we use the fact that for  $q \in (0, 1)$, we have $q^k< \frac{1}{1-q}, \forall k>0$.

By identical construction we have
\begin{align*}
	c^2 \leq \frac{3 \gamma^2 C^2}{1-q} \sum^k_{j=0} q^{k-j} \norm{ \nabla f_i(\itr{\vz}{j}_i) }^2.
\end{align*}

Now let us bound the products $2ab$, $2ac$ and $2bc$.

\begin{eqnarray}
	2ab &=& \gamma C^2 q^k \sum^k_{s=0} q^{k-s} 2\norm{ \itr{\vx_i}{0} } \norm{ \ssgrad{i}{\vz}{\xi}{s} - \nabla  f_i(\itr{\vz}{s}_i) } \notag\\
	&\overset{(\ref{formula})}{\leq}&\gamma C^2 q^k \sum^k_{j=0} q^{k-j} \norm{ \ssgrad{i}{\vz}{\xi}{j} - \nabla f_i(\itr{\vz}{j}_i) }^2 + \gamma C^2 q^k \sum^k_{j=0} q^{k-j} \norm{ \itr{\vx_i}{0} }^2 \notag\\
	&\overset{(\ref{geometric})}{\leq}& \gamma C^2 q^k \sum^k_{j=0} q^{k-j} \norm{ \ssgrad{i}{\vz}{\xi}{j} - \nabla f_i(\itr{\vz}{j}_i) }^2 + \frac{\gamma C^2 \norm{ \itr{x_i}{0} }^2}{1 - q} q^{k}
\end{eqnarray}
By similar procedure,
\begin{eqnarray}
	2ac \leq& \gamma C^2 q^k \sum^k_{s=0} q^{k-s} \norm{ \nabla f_i(\itr{z}{s}_i) }^2+\frac{\gamma C^2 \norm{ \itr{x_i}{0} }^2}{1-q} q^k 
	\end{eqnarray}
	Finally,
	\begin{eqnarray}
		2bc &=& \gamma^2 C^2 \sum^k_{j= 0}\sum^k_{s= 0} q^{2k-j-s} 2\norm{ \ssgrad{i}{z}{\xi}{j} - \nabla  f_i(\itr{z}{j}_i) } \norm{ \nabla f_i(\itr{z}{s}_i) }\notag\\
		&\overset{(\ref{formula})}{\leq}& \gamma^2 C^2 \sum^k_{j=0} \sum^k_{s=0} q^{2k-j-s} \norm{ \ssgrad{i}{z}{\xi}{j} - \nabla  f_i(\itr{z}{j}_i) }^2 + \gamma^2 C^2 \sum^k_{j=0} \sum^k_{s=0} q^{2k-j-s} \norm{ \nabla  f_i(\itr{z}{s}_i) }^2, \notag\\
	&=& \gamma^2 C^2 \sum^k_{j=0} q^{k-j} \norm{ \ssgrad{i}{z}{\xi}{j} - \nabla  f_i(\itr{z}{j}_i) }^2 \sum^k_{s=0}q^{k-s}+ \gamma^2 C^2 \sum^k_{s=0} q^{k-s} \norm{ \nabla  f_i(\itr{z}{s}_i) }^2 \sum^k_{j=0} q^{k-j} , \notag\\
	&\overset{(\ref{geometric})}{\leq}& \frac{\gamma^2 C^2}{1-q} \sum^k_{j=0} q^{k-j} \norm{ \ssgrad{i}{z}{\xi}{j} - \nabla f_i(\itr{z}{j}_i) }^2 + \frac{\gamma^2 C^2}{1-q} \sum^k_{s=0} q^{k-s} \norm{ \nabla f_i(\itr{z}{s}_i) }^2
	\end{eqnarray}

By combining all of the above bounds together we obtain:
\begin{eqnarray}
	Q^{(k)}_i &\leq& \E(a^2 +b^2 + c^2 + 2ab + 2bc + 2ac)\notag\\
&\leq& \E \frac{4 \gamma^2 C^2}{1 -q} \sum^k_{j=0} q^{k-j} \norm{ \ssgrad{i}{z}{\xi}{j} - \nabla f_i(\itr{z}{j}_i) }^2 \notag\\
	& + &\E \frac{4 \gamma^2 C^2}{1 - q} \sum^k_{j=0} q^{k-j} \norm{ \nabla f_i(\itr{z}{j}_i) }^2 \notag\\
	& +& C^2 \norm{ \itr{x_i}{0} }^2 q^{2k} \notag\\
	& +& \frac{2 \gamma C^2 \norm{ \itr{x_i}{0} }^2}{1-q} q^k \notag\\
	& + &\E \gamma C^2 q^k \sum^k_{j=0} q^{k-j} \norm{ \nabla f_i(\itr{z}{j}_i) }^2 \notag\\
	& +  &\E \gamma C^2 q^k \sum^k_{j=0} q^{k-j} \norm{ \ssgrad{i}{z}{\xi}{j} - \nabla  f_i(\itr{z}{j}_i) }^2.
\end{eqnarray}

After grouping terms together and using the upper bound of Lemma~\ref{boundonnormGradient}, we obtain
\begin{eqnarray}
	Q^{(k)}_i &\leq & \left( \gamma^2 \frac{4 C^2}{(1-q)^2} + \gamma \frac{q^k C^2}{1-q} \right) \sigma^2  +  \left(q^{2k} C^2 + \gamma q^k \frac{2 C^2}{1-q} \right)  \norm{ \itr{x_i}{0} }^2.\notag\\
	&+& \left( \gamma^2 \frac{4 C^2}{1 - q} + \gamma q^k C^2 \right) \sum^k_{j=0} q^{k-j} \E \norm{ \nabla  f_i(\itr{z}{j}_i) }^2 \notag\\
&\overset{Lemma~\ref{boundonnormGradient}}{\leq} & \left( \gamma^2 \frac{4 C^2}{(1-q)^2} + \gamma \frac{q^k C^2}{1-q} \right) \sigma^2 	 + \left(q^{2k} C^2 + \gamma q^k \frac{2 C^2}{1-q} \right)  \norm{ \itr{x_i}{0} }^2 \notag\\
&+&\left( \gamma^2 \frac{12 C^2}{(1 - q)^2} + \frac{\gamma q^k 3 C^2}{1 - q} \right) \zeta^2 \notag\\
	& + &\left( \gamma^2 \frac{12 L^2 C^2}{1 - q} + \gamma q^k 3 L^2 C^2 \right) \sum^k_{j=0} q^{k-j} \itr{Q}{j}_i \notag\\
	& + &\left( \gamma^2 \frac{12 C^2}{1 - q} + \gamma q^k 3 C^2 \right) \sum^k_{j=0} q^{k-j} \E \norm{ \nabla f(\itr{\xbar}{j}) }^2
\end{eqnarray}
This completes the proof.
\end{proof}

Having found a bound for the quantity $Q^{(k)}_i $, let us now present a lemma for bounding the quantity $\sum_{k=0}^{K-1} M^{(k)}$ where $K>1$ is a constant and $M^{(k)}$ is the average $Q^{(k)}_i $across all (non-virtual) nodes $i \in [n]$. That is, $M^{(k)}=\frac{1}{n} \sum^{n}_{i=1} \itr{Q}{k}_i $.

\begin{lem}
\label{BoundMk}
Let Assumptions 1-3 hold and let us define $D_2=1 - \dfrac{\gamma^2 12 L^2 C^2}{(1-q)^2} - \dfrac{\gamma 3 L^2 C^2}{(1-q)^2}$ .
Then,
\begin{eqnarray}
\sum^{K-1}_{k=0} \itr{M}{k} & \leq & \left( \gamma^2 \frac{4C^2}{(1-q)^2 D_2} \right) \sigma^2 K  + \left( \gamma \frac{C^2}{(1-q)^2D_2} \right) \sigma^2 \notag\\
	&& + \left( \gamma^2 \frac{12 C^2}{(1 - q)^2D_2} \right) \zeta^2 K + \left( \frac{\gamma 3 C^2}{(1 - q)^2 D_2} \right) \zeta^2 \notag\\
	&& + \left( \frac{ C^2}{(1-q)^2D_2} + \gamma \frac{2C^2}{(1-q)^2D_2} \right) \frac{\sum^n_{i=1} \norm{ \itr{x_i}{0} }^2}{n} \notag\\
	&& + \left( \gamma^2 \frac{12 C^2}{(1 - q)^2 D_2} + \gamma \frac{3 C^2}{(1-q)^2 D_2} \right) \sum^{K-1}_{k=0} \E \norm{ \nabla f(\itr{\xbar}{k}) }^2  
\end{eqnarray}
\end{lem}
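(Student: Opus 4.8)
The plan is to obtain the bound on $\sum_{k=0}^{K-1} M^{(k)}$ directly from the per-node estimate of Lemma~\ref{BoundOfQ} by averaging over nodes and summing over iterations, after which a single copy of $\sum_k M^{(k)}$ reappears on the right-hand side and can be absorbed into the left. First I would average the inequality in Lemma~\ref{BoundOfQ} over the non-virtual nodes $i \in [n]$. The only $i$-dependent quantities are $\sum_{j=0}^k q^{k-j} \itr{Q}{j}_i$ and $\norm{\itr{x_i}{0}}^2$; averaging replaces the former by $\sum_{j=0}^k q^{k-j} M^{(j)}$ (since $M^{(j)} = \tfrac1n\sum_i \itr{Q}{j}_i$) and the latter by $\tfrac1n \sum_i \norm{\itr{x_i}{0}}^2$. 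This produces a recursive inequality for $M^{(k)}$ involving $\sum_{j\le k} q^{k-j} M^{(j)}$, $\sum_{j\le k} q^{k-j}\E\norm{\nabla f(\itr{\xbar}{j})}^2$, and terms that are either constant multiples of $\sigma^2$, $\zeta^2$, $\tfrac1n\sum_i\norm{\itr{x_i}{0}}^2$, or the same multiplied by a decaying factor $q^k$ or $q^{2k}$.

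Next I would sum this recursive inequality over $k = 0,1,\dots,K-1$ and simplify term by term. Terms with a constant ($\gamma^2$) coefficient and no $q^k$ factor, such as $\gamma^2 \tfrac{4C^2}{(1-q)^2}\sigma^2$, simply acquire a factor $K$. Terms carrying a $q^k$ or $q^{2k}$ factor are bounded using the geometric series estimate~\eqref{geometric}: $\sum_{k=0}^{K-1} q^k \le \tfrac1{1-q}$ and $\sum_{k=0}^{K-1} q^{2k} \le \tfrac1{1-q^2} \le \tfrac1{1-q}\le\tfrac1{(1-q)^2}$, which is where the ``constant in $K$'' contributions come from. The double sums $\sum_{k=0}^{K-1}\sum_{j=0}^k q^{k-j} a_j$, with $a_j = M^{(j)}$ or $a_j = \E\norm{\nabla f(\itr{\xbar}{j})}^2$, are handled by swapping the order of summation, $\sum_{k=0}^{K-1}\sum_{j=0}^k = \sum_{j=0}^{K-1}\sum_{k=j}^{K-1}$, and then using $\sum_{k=j}^{K-1} q^{k-j}\le\tfrac1{1-q}$ to get $\tfrac1{1-q}\sum_{k=0}^{K-1} a_k$; for the mixed terms $\gamma q^k\sum_{j=0}^k q^{k-j} a_j$ the same swap gives $\sum_{j=0}^{K-1} a_j \sum_{k=j}^{K-1} q^{2k-j}$, and $q^{2k-j}\le q^{k-j}$ for $k\ge j$ again yields a $\tfrac1{1-q}\sum_k a_k$ bound.

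After these steps the right-hand side contains $\bigl(\gamma^2\tfrac{12L^2C^2}{(1-q)^2} + \gamma\tfrac{3L^2C^2}{1-q}\bigr)\sum_{k=0}^{K-1} M^{(k)}$, which I would upper-bound (via $\tfrac1{1-q}\le\tfrac1{(1-q)^2}$) by $\bigl(\gamma^2\tfrac{12L^2C^2}{(1-q)^2}+\gamma\tfrac{3L^2C^2}{(1-q)^2}\bigr)\sum_k M^{(k)}$ and move to the left-hand side. The coefficient of $\sum_k M^{(k)}$ then becomes exactly $D_2 = 1 - \tfrac{\gamma^2 12 L^2 C^2}{(1-q)^2} - \tfrac{\gamma 3 L^2 C^2}{(1-q)^2}$; dividing through by $D_2$ and collecting the remaining $\sigma^2$, $\zeta^2$, $\tfrac1n\sum_i\norm{\itr{x_i}{0}}^2$, and $\sum_k\E\norm{\nabla f(\itr{\xbar}{k})}^2$ terms gives precisely the claimed inequality. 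There is no conceptual obstacle here beyond Lemma~\ref{BoundOfQ} itself; the only delicate part is the bookkeeping of the order-of-summation swaps and of which terms scale with $K$ versus which are absorbed into $\tfrac1{(1-q)^2}$-type constants. Positivity of $D_2$, which is needed in order to divide by it, is not required for the stated inequality and is ensured separately by the step-size and iteration-count conditions used in Theorem~\ref{thm:avg-convergence}.
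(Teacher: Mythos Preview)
Your proposal is correct and follows essentially the same route as the paper: average Lemma~\ref{BoundOfQ} over nodes, sum over $k$, collapse the resulting geometric and double sums, move the $\sum_k M^{(k)}$ term to the left, and divide by $D_2$. The only cosmetic difference is that the paper bounds $\sum_k q^k\sum_{j\le k} q^{k-j}a_j$ by $\tfrac{1}{1-q^2}\sum_j a_j$ before relaxing to $\tfrac{1}{(1-q)^2}$, whereas you use $q^{2k-j}\le q^{k-j}$ to get $\tfrac{1}{1-q}$ first; both land at the same final bound.
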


\begin{proof}
Using the bound for $\itr{Q}{k}_i$ let us first bound its average across all nodes $\itr{M}{k}$
\begin{eqnarray}
	\itr{M}{k} &=& \frac{1}{n} \sum^n_{i=1} \itr{Q}{k}_i \notag\\
	&\overset{Lemma~\ref{BoundOfQ}}{\leq}& \left( \gamma^2 \frac{4 C^2}{(1-q)^2} + \gamma \frac{q^k C^2}{1-q} \right) \sigma^2 + \left( \gamma^2 \frac{12 C^2}{(1 - q)^2} + \frac{\gamma q^k 3 C^2}{1 - q}  \right) \zeta^2 \notag\\
	& + &\left( \gamma^2 \frac{12 C^2}{1 - q} + \gamma q^k 3 C^2 \right) \sum^k_{j=0} q^{k-j} \E \norm{ \nabla f(\itr{\xbar}{j}) }^2 \notag\\
	& +& \left( \gamma^2 \frac{12 L^2 C^2}{1 - q} + \gamma q^k 3 L^2 C^2 \right) \sum^k_{j=0} q^{k-j} \itr{M}{j} \notag\\
	& + &\left(q^{2k} C^2 + \gamma q^k \frac{2 C^2}{1-q} \right)  \frac{\sum^n_{i=1} \norm{ \itr{x_i}{0} }^2}{n}.
\end{eqnarray}

At this point note that for any $\lambda \in (0,1)$, non-negative integer $K \in \mathbb{N}$, and non-negative sequence $\{ \itr{\beta}{j} \}_{j=0}^k$, it holds that
\begin{eqnarray}
\label{naosnao}
\sum^K_{k=0} \sum^k_{j=0} \lambda^{k-j} \itr{\beta}{j}  &=& \itr{\beta}{0} \left( \lambda^K + \lambda^{K-1} + \cdots + \lambda^0 \right) +\itr{\beta}{1} \left( \lambda^{K-1} + \lambda^{K - 2} + \cdots + \lambda^{0} \right) +\cdots+ \itr{\beta}{K} \left( \lambda^0 \right) \notag\\
&\leq & \frac{1}{1-\lambda} \sum^K_{j=0} \itr{\beta}{j}. 
\end{eqnarray}
Similarly,
\begin{eqnarray}
\label{ansjka}
\sum^K_{k=0} \lambda^k \sum^k_{j=0} \lambda^{k-j} \itr{\beta}{j} = \sum^K_{k=0} \sum^k_{j=0} \lambda^{2k-j} \itr{\beta}{j} \leq \sum^K_{k=0} \sum^k_{j=0} \lambda^{2(k-j)} \itr{\beta}{j} &\overset{(\ref{naosnao})}{\leq}& \frac{1}{1-\lambda^2} \sum^K_{j=0} \itr{\beta}{j}
\end{eqnarray}

Now by summing from $k=0$ to $K-1$ and using the bounds of (\ref{naosnao}) and (\ref{ansjka}) we obtain:
\begin{align*}
	\sum^{K-1}_{k=0} \itr{M}{k} \leq& \left( \gamma^2 \frac{4C^2}{(1-q)^2} \right) \sigma^2 K  + \left( \gamma \frac{C^2}{(1-q)^2} \right) \sigma^2 \\
	& + \left( \gamma^2 \frac{12 C^2}{(1 - q)^2} \right) \zeta^2 K + \left( \frac{\gamma 3 C^2}{1 - q} \right) \zeta^2 \\
	& + \left( \frac{ C^2}{1-q^2} + \gamma \frac{2C^2}{(1-q)^2} \right) \frac{\sum^n_{i=1} \norm{ \itr{x_i}{0} }^2}{n} \\
	& + \left( \gamma^2 \frac{12 C^2}{(1 - q)^2} + \gamma \frac{3 C^2}{1-q^2} \right) \sum^{K-1}_{k=0} \E \norm{ \nabla f(\itr{\xbar}{k}) }^2  \\
	& + \left( \gamma^2 \frac{12 L^2 C^2}{(1 - q)^2} + \gamma \frac{3 L^2 C^2}{1-q^2} \right) \sum^{K-1}_{k=0} \itr{M}{k}.
\end{align*}

By rearranging:
\begin{align*}
	\left( 1-\gamma^2 \frac{12 L^2 C^2}{(1 - q)^2} - \gamma \frac{3 L^2 C^2}{1-q^2} \right) \sum^{K-1}_{k=0} \itr{M}{k} \leq& \left( \gamma^2 \frac{4C^2}{(1-q)^2} \right) \sigma^2 K  + \left( \gamma \frac{C^2}{(1-q)^2} \right) \sigma^2 \\
	& + \left( \gamma^2 \frac{12 C^2}{(1 - q)^2} \right) \zeta^2 K + \left( \frac{\gamma 3 C^2}{(1 - q)^2} \right) \zeta^2 \\
	& + \left( \frac{ C^2}{1-q^2} + \gamma \frac{2C^2}{(1-q)^2} \right) \frac{\sum^n_{i=1} \norm{ \itr{x_i}{0} }^2}{n} \\
	& + \left( \gamma^2 \frac{12 C^2}{(1 - q)^2} + \gamma \frac{3 C^2}{1-q^2} \right) \sum^{K-1}_{k=0} \E \norm{ \nabla f(\itr{\xbar}{k}) }^2  \\
\end{align*}

Note that since $q\in (0,1)$ it holds that $\frac{1}{1-q^2}\leq \frac{1}{(1-q)^2}$.\footnote{This step is used to simplified the expressions involve the parameter $q$. One can still obtain similar results by keeping the expression $\frac{1}{1-q^2}$ in the definition of $D_2$.}
Thus,

\begin{align*}
	\left( 1-\gamma^2 \frac{12 L^2 C^2}{(1 - q)^2} - \gamma \frac{3 L^2 C^2}{(1-q)^2} \right) \sum^{K-1}_{k=0} \itr{M}{k} \leq& \left( \gamma^2 \frac{4C^2}{(1-q)^2} \right) \sigma^2 K  + \left( \gamma \frac{C^2}{(1-q)^2} \right) \sigma^2 \\
	& + \left( \gamma^2 \frac{12 C^2}{(1 - q)^2} \right) \zeta^2 K + \left( \frac{\gamma 3 C^2}{(1 - q)^2} \right) \zeta^2 \\
	& + \left( \frac{ C^2}{(1-q)^2} + \gamma \frac{2C^2}{(1-q)^2} \right) \frac{\sum^n_{i=1} \norm{ \itr{x_i}{0} }^2}{n} \\
	& + \left( \gamma^2 \frac{12 C^2}{(1 - q)^2} + \gamma \frac{3 C^2}{(1-q)^2} \right) \sum^{K-1}_{k=0} \E \norm{ \nabla f(\itr{\xbar}{k}) }^2  \\
\end{align*}

Dividing both sides with $D_2=1 - \dfrac{\gamma^2 12 L^2 C^2}{(1-q)^2} - \dfrac{\gamma 3 L^2 C^2}{(1-q)^2}$ completes the proof.
\end{proof}

\subsection{Towards the proof of the main Theorems}
\label{ImportLemma}
The goal of this section is the presentation of Lemma~\ref{MainLemma}. It is the main lemma of our convergence analysis and based on which we build the proofs of Theorems~\ref{thm:avg-convergence} and \ref{secondTheorem}.

Let us first state a preliminary lemma that simplifies some of the expressions that involve expectations with respect to the random variable ${\xi_i^{(t)}}$.

\begin{lem}
\label{FirstLemma}
Under the definition of our problem and the Assumptions 1-3 we have that:
\begin{itemize}
\item[(i)] 
$$\Exp_{\xi_i^{(k)}} \norm{ \frac{ \sum^n_{i=1} \ssgrad{i}{z}{\xi}{k} }{n} }^2 =\Exp_{\xi_i^{(k)}}\norm{ \frac{\sum^n_{i=1} \ssgrad{i}{z}{\xi}{k} - \nabla f_i(\itr{z}{k}_i) }{n} }^2 + \Exp_{\xi_i^{(k)}}  \norm{\frac{\sum^n_{i=1} \nabla f_i(\itr{z}{k}_i)}{n} }^2 $$
\item[(ii)] 
$$\Exp_{\xi_i^{(k)}}\norm{ \frac{\sum^n_{i=1} \left[\ssgrad{i}{z}{\xi}{k} - \nabla f_i(\itr{z}{k}_i) \right]}{n} }^2 \leq \frac{\sigma^2}{n} $$
\end{itemize}
\end{lem}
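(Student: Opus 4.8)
The plan is to read both parts as instances of the elementary bias--variance decomposition for a sum of conditionally independent, conditionally mean-zero random vectors, together with the unbiasedness of the stochastic gradients. Throughout, the expectation $\Exp_{\xi_i^{(k)}}$ is understood as a conditional expectation given all randomness generated before iteration $k$; in particular each $\itr{\bm z}{k}_i$ (and hence each $\nabla f_i(\itr{\bm z}{k}_i)$) is measurable with respect to that history and therefore behaves as a constant. Recall also the convention that the virtual (delay) nodes contribute $\nabla F(\bm z^{(k)};\bm\xi^{(k)}) = \bm 0$, so the sums below effectively range only over the $n$ real nodes, and that $\Exp_{\xi_i^{(k)}\sim\cD_i}[\nabla F_i(\itr{\bm z}{k}_i;\xi_i^{(k)})] = \nabla f_i(\itr{\bm z}{k}_i)$ by the definition $f_i(\bm x)=\Exp_{\xi\sim D_i}F_i(\bm x;\xi)$.

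For part~(i), write $\bm g_i \defeq \nabla F_i(\itr{\bm z}{k}_i;\xi_i^{(k)})$ and split
$\tfrac1n\sum_{i=1}^n \bm g_i = \tfrac1n\sum_{i=1}^n\bigl(\bm g_i - \nabla f_i(\itr{\bm z}{k}_i)\bigr) + \tfrac1n\sum_{i=1}^n \nabla f_i(\itr{\bm z}{k}_i)$,
where the second term is deterministic with respect to the $\xi_i^{(k)}$. Expanding the squared norm and taking $\Exp_{\xi_i^{(k)}}$, the identity follows as soon as the cross term vanishes; but the cross term equals $\tfrac{2}{n^2}\bigl\langle \Exp_{\xi_i^{(k)}}\bigl[\sum_{i}(\bm g_i - \nabla f_i(\itr{\bm z}{k}_i))\bigr],\ \sum_{j}\nabla f_j(\itr{\bm z}{k}_j)\bigr\rangle = 0$, since $\Exp_{\xi_i^{(k)}}[\bm g_i - \nabla f_i(\itr{\bm z}{k}_i)] = \bm 0$ for every $i$ by unbiasedness. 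This yields exactly the claimed decomposition.

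For part~(ii), expand
$\norm{\tfrac1n\sum_{i=1}^n(\bm g_i - \nabla f_i(\itr{\bm z}{k}_i))}^2 = \tfrac1{n^2}\sum_{i=1}^n\norm{\bm g_i - \nabla f_i(\itr{\bm z}{k}_i)}^2 + \tfrac1{n^2}\sum_{i\neq j}\bigl\langle \bm g_i - \nabla f_i(\itr{\bm z}{k}_i),\ \bm g_j - \nabla f_j(\itr{\bm z}{k}_j)\bigr\rangle$.
Because the mini-batch samples $\xi_i^{(k)}$ are drawn independently across the nodes $i$ at iteration $k$ (conditionally on the past), and each $\bm g_i - \nabla f_i(\itr{\bm z}{k}_i)$ has conditional mean zero, every off-diagonal term has zero conditional expectation. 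Applying the first bound in Assumption~2, $\Exp_{\xi_i^{(k)}}\norm{\bm g_i - \nabla f_i(\itr{\bm z}{k}_i)}^2 \le \sigma^2$, to each of the $n$ diagonal terms gives $\tfrac1{n^2}\cdot n\sigma^2 = \tfrac{\sigma^2}{n}$, as required.

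There is no substantive obstacle here: the lemma is a bookkeeping device. The only point needing care is to state precisely which $\sigma$-algebra $\Exp_{\xi_i^{(k)}}$ conditions on, so that the $\nabla f_i(\itr{\bm z}{k}_i)$ are constants and the zero-mean/independence arguments that kill the cross terms are rigorous; once that is pinned down, both identities are immediate.
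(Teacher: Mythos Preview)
Your proposal is correct and follows essentially the same argument as the paper: both parts are proved by the bias--variance split, with the cross terms killed by unbiasedness $\Exp_{\xi_i^{(k)}}[\nabla F_i(\itr{\bm z}{k}_i;\xi_i^{(k)})]=\nabla f_i(\itr{\bm z}{k}_i)$ and, for (ii), by the conditional independence of the $\xi_i^{(k)}$ across nodes together with Assumption~2. Your added remarks about the conditioning $\sigma$-algebra and the virtual-node convention are helpful clarifications but do not change the route.
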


\begin{proof}
\begin{eqnarray}
	\Exp_{\xi_i^{(k)}} \norm{ \frac{ \sum^n_{i=1} \ssgrad{i}{z}{\xi}{k} }{n} }^2 &=&\ \Exp_{\xi_i^{(k)}} \norm{ \frac{\sum^n_{i=1} \ssgrad{i}{z}{\xi}{k} - \nabla  f_i(\itr{z}{k}_i) }{n} + \frac{\sum^n_{i=1} \nabla f_i(\itr{z}{k}_i )}{n} }^2 \notag\\
	&=&\  \Exp_{\xi_i^{(k)}} \norm{ \frac{\sum^n_{i=1} \ssgrad{i}{z}{\xi}{k} - \nabla f_i(\itr{z}{k}_i) }{n} }^2 \notag\\
	&&+ \Exp_{\xi_i^{(k)}} \norm{\frac{\sum^n_{i=1} \nabla  f_i(\itr{z}{k}_i)}{n} }^2 \notag\\
	&&+ 2 \left \langle \frac{\sum^n_{i=1} \E_{\xi^{(k)}_i} \ssgrad{i}{z}{\xi}{k} - \nabla  f_i(\itr{z}{k}_i)}{n} \ , \frac{\sum^n_{i=1} \nabla f_i(\itr{z}{k}_i)}{n} \right \rangle \notag\\
	&=&\  \Exp_{\xi_i^{(k)}} \norm{ \frac{\sum^n_{i=1} \ssgrad{i}{z}{\xi}{k} - \nabla  f_i(\itr{z}{k}_i) }{n} }^2 \notag\\
	&&+ \Exp_{\xi_i^{(k)}} \norm{\frac{\sum^n_{i=1} \nabla  f_i(\itr{z}{k}_i)}{n} }^2.
\end{eqnarray}
where in the last equality the inner product becomes zero from the fact that $\E_{\xi^{(k)}_i} \ssgrad{i}{z}{\xi}{k}=\nabla  f_i(\itr{z}{k}_i).$

\begin{align}
\Exp_{\xi_i^{(k)}} &\norm{ \frac{\sum_{i=1}^n \ssgrad{i}{z}{\xi}{k}-\sum_{i=1}^n \nabla f_i(\itr{z}{k}_i)}{n}}^2 \notag\\
&=\frac{1}{n^2} \Exp_{\xi_i^{(k)}} \norm{\sum_{i=1}^n \left[ \ssgrad{i}{z}{\xi}{k}-\nabla f_i(\itr{z}{k}_i)\right]}^2 \notag\\
&= \frac{1}{n^2} \sum_{i=1}^n \Exp_{\xi_i^{(k)}} \norm{ \ssgrad{i}{z}{\xi}{k}-\nabla f_i(\itr{z}{k}_i)}^2 \notag\\ 
&\quad+ \frac{2}{n^2}\sum_{i\neq j} \left\langle \Exp_{\xi_i^{(k)}}\ssgrad{i}{z}{\xi}{k}- \nabla f_i(\itr{z}{k}_i), \Exp_{\xi_j^{(k)}}\ssgrad{j}{z}{\xi}{k}-\nabla f_j(\itr{z}{k}_j) \right\rangle  \notag\\
&= \frac{1}{n^2} \sum_{i=1}^n \Exp_{\xi_i^{(k)}} \norm{ \ssgrad{i}{z}{\xi}{k}-\nabla f_i(\itr{z}{k}_i)}^2 \notag\\ 
&\overset{\text{\text{Bounded Variance}}}{\leq}  \frac{1}{n^2} \sum_{i=1}^n \sigma^2 = \frac{\sigma^2 }{n},
\end{align}
\end{proof}

Before presenting the proof of next lemma let us define the conditional expectation
\[
    \Exp[\cdot | \cF_k] \defeq \Exp_{\xi_i^{(k)} \sim \cD_i \forall i \in [n]}[\cdot] = \Exp_{\xi_i^{(k)} \forall i \in [n]}[\cdot].
\]
The expectation in this expression is \emph{only} with respect to the random choices $\xi_{i}^{(k)}$ for all nodes $i \in [n]$ at the $k^{th}$ iteration.
In addition, we should highlight that the choices of random variables $\xi_i^k \sim \cD_i $, $\xi_j^k \sim \cD_j$ at the step $t$ of the algorithm, are independent for any two nodes $i \neq j \in [n]$. 
This is also true in the case that the two nodes follow the same distribution $\cD=\cD_i =\cD_j$.

\begin{lem}
\label{MainLemma}
Let Assumptions 1-3 hold and let $$D_1=\frac{1}{2}- \frac{L^2}{2}  \left( \frac{12\gamma^2 C^2 +3 \gamma C^2}{(1-q)^2 D_2} \right) \quad \text{and} \quad D_2=1 - \dfrac{\gamma^2 12 L^2 C^2}{(1-q)^2} - \dfrac{\gamma 3 L^2 C^2}{(1-q)^2}.$$ Here $C>0$ and $q\in (0,1)$ are the two non-negative constants defined in Lemma~\ref{MidosTheorem}. Let $\{\bX_k\}_{k=0}^\infty$ be the random sequence produced by (\ref{UpdateRule}) (Matrix representation of Algorithm~\ref{SGPalg}). Then,
\begin{align*}
	  &\frac{1}{K} \left( D_1 \sum^{K-1}_{k=0} \E \norm{ \nabla f(\itr{\xbar}{k}) }^2  + \frac{1-L \gamma}{2} \sum_{k=0}^{K-1} \Exp \left\|  \frac{\nabla \bm{F}(\bZ^{(k)}) \1_{\nbar}  }{n} \right\|^2 \right)\\
	\leq& \frac{f( \itr{\xbar}{0} ) - f^*}{\gamma K}+ \frac{L \gamma \sigma^2}{2 n} + \frac{4  L^2\gamma^2 C^2 \sigma^2+12L^2\gamma^2 C^2 \zeta^2}{2 (1-q)^2 D_2} +\frac{\gamma L^2 C^2 \sigma^2+3  L^2\gamma C^2\zeta^2}{2K (1-q)^2 D_2}\\
	& + \left( \frac{ L^2C + 2 L^2\gamma C^2}{2 (1-q)^2 D_2K} \right) \frac{\sum^n_{i=1} \norm{ \itr{\vx_i}{0} }^2}{n}  .
\end{align*}
\end{lem}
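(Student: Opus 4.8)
The plan is to apply the standard non-convex descent argument to the sequence of (augmented) averages $\itr{\xbar}{k}$ and then close the recursion using the consensus bound already proved in Lemma~\ref{BoundMk}. The first observation is that, because every augmented mixing matrix $\bm{P}^{(k)}$ is column-stochastic and the virtual nodes carry zero gradient, seed, and $\bm{z}$-value, right-multiplying the update rule \eqref{UpdateRule} by $\1_{\nbar}$ collapses the gossip step: $[\bm{P}^{(k)}]^\T\1_{\nbar} = \1_{\nbar}$, so $\itr{\xbar}{k+1} = \itr{\xbar}{k} - \tfrac{\gamma}{n}\nabla\bm{F}(\bZ^{(k)},\bm{\xi}^{(k)})\1_{\nbar}$, where $\nabla\bm{F}(\bZ^{(k)},\bm{\xi}^{(k)})\1_{\nbar} = \sum_{i=1}^n\nabla F_i(\itr{\bm{z}_i}{k};\itr{\xi_i}{k})$. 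Thus $\itr{\xbar}{k}$ evolves exactly like an SGD iterate on $f$, except that the stochastic gradients are evaluated at the local de-biased points $\itr{\bm{z}_i}{k}$ instead of at $\itr{\xbar}{k}$. Since $f=\tfrac1n\sum_i f_i$ is $L$-smooth under Assumption~1, I would write the descent inequality $f(\itr{\xbar}{k+1}) \le f(\itr{\xbar}{k}) + \langle\nabla f(\itr{\xbar}{k}),\itr{\xbar}{k+1}-\itr{\xbar}{k}\rangle + \tfrac{L}{2}\norm{\itr{\xbar}{k+1}-\itr{\xbar}{k}}^2$, take the conditional expectation $\Exp[\cdot\mid\cF_k]$ (using $\Exp[\nabla\bm{F}(\bZ^{(k)},\bm{\xi}^{(k)})\1_{\nbar}\mid\cF_k] = \nabla\bm{F}(\bZ^{(k)})\1_{\nbar}$ since the $\itr{\bm{z}_i}{k}$ are $\cF_k$-measurable), and apply Lemma~\ref{FirstLemma}(i)--(ii) to the squared-norm term to peel off the variance contribution $\tfrac{L\gamma^2\sigma^2}{2n}$, leaving $\tfrac{L\gamma^2}{2}\norm{\tfrac1n\nabla\bm{F}(\bZ^{(k)})\1_{\nbar}}^2$.

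Next I would treat the remaining cross term $-\gamma\langle\nabla f(\itr{\xbar}{k}),\tfrac1n\nabla\bm{F}(\bZ^{(k)})\1_{\nbar}\rangle$ with the polarization identity $-\gamma\langle u,v\rangle = -\tfrac{\gamma}{2}\norm{u}^2 - \tfrac{\gamma}{2}\norm{v}^2 + \tfrac{\gamma}{2}\norm{u-v}^2$. Writing $\nabla f(\itr{\xbar}{k}) = \tfrac1n\sum_i\nabla f_i(\itr{\xbar}{k})$ and $\tfrac1n\nabla\bm{F}(\bZ^{(k)})\1_{\nbar} = \tfrac1n\sum_i\nabla f_i(\itr{\bm{z}_i}{k})$, Jensen's inequality followed by $L$-smoothness gives $\Exp\norm{u-v}^2 \le \tfrac{L^2}{n}\sum_i\Exp\norm{\itr{\xbar}{k}-\itr{\bm{z}_i}{k}}^2 = L^2\itr{M}{k}$. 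Collecting the two $\norm{\tfrac1n\nabla\bm{F}(\bZ^{(k)})\1_{\nbar}}^2$ contributions produces the coefficient $\tfrac{\gamma(1-L\gamma)}{2}$; taking full expectations, rearranging, and telescoping $\sum_{k=0}^{K-1}\big(\Exp f(\itr{\xbar}{k}) - \Exp f(\itr{\xbar}{k+1})\big) \le f(\itr{\xbar}{0}) - f^*$ yields
\[
\tfrac{\gamma}{2}\sum_{k=0}^{K-1}\Exp\norm{\nabla f(\itr{\xbar}{k})}^2 + \tfrac{\gamma(1-L\gamma)}{2}\sum_{k=0}^{K-1}\Exp\norm{\tfrac1n\nabla\bm{F}(\bZ^{(k)})\1_{\nbar}}^2 \;\le\; f(\itr{\xbar}{0})-f^* + \tfrac{KL\gamma^2\sigma^2}{2n} + \tfrac{\gamma L^2}{2}\sum_{k=0}^{K-1}\itr{M}{k}.
\]

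I would then substitute the bound on $\sum_{k=0}^{K-1}\itr{M}{k}$ from Lemma~\ref{BoundMk}. The delicate feature is that this bound itself contains a term proportional to $\sum_{k=0}^{K-1}\Exp\norm{\nabla f(\itr{\xbar}{k})}^2$, with coefficient $\tfrac{12\gamma^2C^2+3\gamma C^2}{(1-q)^2 D_2}$; after multiplying by $\tfrac{\gamma L^2}{2}$ and moving it to the left-hand side, the coefficient $\tfrac{\gamma}{2}$ of $\sum_k\Exp\norm{\nabla f(\itr{\xbar}{k})}^2$ becomes exactly $\gamma\big(\tfrac12 - \tfrac{L^2}{2}\tfrac{12\gamma^2C^2+3\gamma C^2}{(1-q)^2 D_2}\big) = \gamma D_1$, which is precisely why $D_1$ is defined as it is (and implicitly why $D_2>0$, i.e.\ a sufficiently small step-size, is what makes the bound meaningful). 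The surviving $\sigma^2$-, $\zeta^2$-, and initial-condition terms of Lemma~\ref{BoundMk}, each scaled by $\tfrac{\gamma L^2}{2}$ and then divided by $\gamma K$, match term-by-term the right-hand side of the claimed inequality; dividing the whole inequality through by $\gamma K$ completes the proof.

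I expect the main obstacle to be bookkeeping rather than any isolated hard estimate: one must verify that the virtual-node augmentation does not corrupt the averaging recursion (which hinges on column-stochasticity and on the convention that gradients, seeds and $\bm{z}$-values vanish at virtual nodes), and that the self-referential appearance of $\sum_k\Exp\norm{\nabla f(\itr{\xbar}{k})}^2$ inside the consensus bound is absorbed with the correct sign and exactly the constant that defines $D_1$. Note that the sign condition $1-L\gamma\ge 0$, which is what makes the second left-hand-side term useful downstream, is not needed to derive the inequality as stated and can be deferred to the step-size choice in Theorems~\ref{thm:avg-convergence} and~\ref{secondTheorem}.
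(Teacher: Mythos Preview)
Your proposal is correct and follows essentially the same approach as the paper's own proof: collapse the gossip step via column-stochasticity, apply the $L$-smooth descent inequality to $\itr{\xbar}{k}$, use Lemma~\ref{FirstLemma} to split off the $\sigma^2/n$ variance term, handle the cross term with the polarization identity and bound $\norm{u-v}^2$ by $L^2\itr{M}{k}$ via Jensen plus smoothness, telescope, substitute Lemma~\ref{BoundMk}, and absorb the self-referential gradient-norm sum into the left-hand side to produce the coefficient $D_1$. Your commentary on why $D_1$ arises and on deferring the $1-L\gamma\ge 0$ condition is accurate as well.
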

\begin{proof}
\begin{eqnarray}\label{main_equality}
f\left(\itr{\xbar}{k+1}\right) =f\left(\frac{\bX^{(k+1)}\1_{\nbar} }{n}\right) &\overset{(\ref{UpdateRule})}{=}& f\left(\frac{\bX^{(k)} [\bP^{(k)}]^\top \1_{\nbar}  - \gamma \nabla \bm{F}(\bZ^{(k)}, \bm{\xi}^{(k)}) [\bP^{(k)}]^\top \1_{\nbar}  }{n}\right) \notag\\
&=& f\left(\frac{\bX^{(k)} \1_{\nbar}}{n}  - \frac{\gamma \nabla \bm{F}(\bZ^{(k)}, \bm{\xi}^k) \1_{\nbar}  }{n}\right)\notag\\
&\overset{L-smooth}{\leq}& f\left(\frac{\bX^{(k)}\1_{\nbar}  }{n}\right)  - \gamma \left\langle \nabla  f\left(\frac{\bX^{(k)}\1_{\nbar}  }{n}\right),  \frac{\nabla \bm{F}(\bZ^{(k)}, \bm{\xi}^{(k)}) \1_{\nbar}  }{n} \right\rangle \notag\\
&&+ \frac{L \gamma^2}{2} \norm{\frac{\nabla \bm{F}(\bZ^{(k)}, \bm{\xi}^{(k)}) \1_{\nbar} }{n}}^2 
\end{eqnarray}
Taking expectations of both sides conditioned on $\cF_k$:
\begin{eqnarray}
\Exp \left[ f\left(\frac{\bX^{(k+1)}\1_{\nbar}  }{n}\right) | \cF_k \right] &\leq & f\left(\frac{\bX^{(k)}\1_{\nbar}  }{n}\right)  - \gamma \left\langle \nabla  f\left(\frac{\bX^{(k)}\1_{\nbar}  }{n}\right),  \frac{\nabla \bm{F}(\bZ^{(k)}) \1_{\nbar}  }{n} \right\rangle \notag\\
&&+ \frac{L \gamma^2}{2} \Exp \left[\norm{ \frac{\nabla \bm{F}(\bZ^{(k)}, \bm{\xi}^{(k)}) \1_{\nbar} }{n}}^2 | \cF_k \right]\notag\\
&\overset{Lemma~\ref{FirstLemma}[i]}{=} & f\left(\frac{\bX^{(k)}\1_{\nbar}  }{n}\right)  - \gamma \left\langle \nabla  f\left(\frac{\bX^{(k)}\1_{\nbar}  }{n}\right),  \frac{\nabla \bm{F}(\bZ^{(k)}) \1_{\nbar}  }{n} \right\rangle  \notag\\
&& + \frac{L \gamma^2}{2}  \Exp \left[\norm{ \frac{\sum_{i=1}^n \ssgrad{i}{z}{\xi}{k}-\sum_{i=1}^n \nabla f_i(\itr{z}{k}_i)}{n}}^2 | \cF_k \right] \notag\\
&&+ \frac{L \gamma^2}{2}  \Exp \left[\norm{ \frac{\sum_{i=1}^n \nabla f_i(\itr{\vz}{k}_i)}{n}}^2 | \cF_k \right]\notag\\
&\overset{Lemma~\ref{FirstLemma}[ii]}{\leq} & f\left(\frac{\bX^{(k)}\1_{\nbar}  }{n}\right)  - \gamma \left\langle \nabla  f\left(\frac{\bX^{(k)}\1_{\nbar}  }{n}\right),  \frac{\nabla \bm{F}(\bZ^{(k)}) \1_{\nbar}  }{n} \right\rangle  \notag\\
&&+  \frac{L \gamma^2 \sigma}{2n} +  \frac{L \gamma^2}{2}  \Exp\left[\norm{ \frac{\sum_{i=1}^n \nabla f_i(\itr{\vz}{k}_i)}{n}}^2 | \cF_k \right]\notag\\
&= & f\left(\frac{\bX^{(k)}\1_{\nbar}  }{n}\right)  - \frac{\gamma}{2} \left\| \nabla  f\left(\frac{\bX^{(k)}\1_{\nbar}  }{n}\right) \right\|^2 - \frac{\gamma}{2}\norm{\frac{\nabla \bm{F}(\bZ^{(k)}) \1_{\nbar}  }{n}}^2 ,  \notag\\
&&+\frac{\gamma}{2}  \norm{ \nabla  f\left(\frac{\bX^{(k)}\1_{\nbar}  }{n}\right) - \frac{\nabla \bm{F}(\bZ^{(k)}) \1_{\nbar}  }{n}}^2 + \frac{L \gamma^2 \sigma^2}{2 n} \notag\\
&&+ \frac{L \gamma^2}{2}  \Exp\left[\norm{\frac{\sum_{i=1}^n \nabla f_i(\itr{z}{k}_i)}{n}}^2 | \cF_k \right]
\end{eqnarray}
where in the last step above we simply expand the inner product.

Taking expectations with respect to $\cF_k$ and using the tower property, we get

\begin{eqnarray}
\Exp\left[ f\left(\frac{\bX^{(k+1)}\1_{\nbar}  }{n}\right)\right] &\leq & \Exp\left[f\left(\frac{\bX^{(k)}\1_{\nbar}  }{n}\right)\right]  - \frac{\gamma}{2} \Exp\left[\norm{\nabla  f\left(\frac{\bX^{(k)}\1_{\nbar}  }{n}\right)}^2\right] - \frac{\gamma}{2} \Exp\left[\left\|  \frac{\nabla \bm{F}(\bZ^{(k)}) \1_{\nbar}  }{n} \right\|^2\right] ,  \notag\\
&+ &\frac{\gamma}{2}  \Exp\left[\norm{\nabla  f\left(\frac{\bX^{(k)}\1_{\nbar}  }{n}\right) - \frac{\nabla \bm{F}(\bZ^{(k)}) \1_{\nbar}  }{n}}^2\right] + \frac{L \gamma^2 \sigma^2}{2 n} \notag\\
&+ &  \frac{L \gamma^2}{2}  \Exp \left[\norm{ \frac{\sum_{i=1}^n \nabla f_i(\itr{\vz}{k}_i)}{n}}^2 \right]\notag\\
&=& \Exp\left[f\left(\frac{\bX^{(k)}\1_{\nbar}  }{n}\right)\right]  - \frac{\gamma}{2} \Exp\left[\left\| \nabla  f\left(\frac{\bX^{(k)}\1_{\nbar}  }{n}\right) \right\|^2\right] - \frac{\gamma-L \gamma^2}{2} \Exp \left[\norm{ \frac{\nabla \bm{F}(\bZ^{(k)}) \1_{\nbar}  }{n}}^2 \right] ,  \notag\\
&+ &\frac{\gamma}{2}  \Exp\left[\norm{\nabla  f\left(\frac{\bX^{(k)}\1_{\nbar}  }{n}\right) - \frac{\nabla \bm{F}(\bZ^{(k)}) \1_{\nbar}  }{n}}^2\right] + \frac{L \gamma^2 \sigma^2}{2 n}
\end{eqnarray}

Let us now focus on find an upper bound for the quantity $\Exp\left[\left\|  \nabla  f\left(\frac{\bX^{(k)}\1_{\nbar}  }{n}\right) - \frac{\nabla \bm{F}(\bZ^{(k)}) \1_{\nbar}  }{n} \right\|^2\right] $.

\begin{eqnarray}
\Exp\left[\left\|  \nabla  f\left(\frac{\bX^{(k)}\1_{\nbar}  }{n}\right) - \frac{\nabla \bm{F}(\bZ^{(k)}) \1_{\nbar}  }{n} \right\|^2\right] &= & \Exp\left[\left\|  \nabla  f\left(\xbar \right) - \frac{\sum_{i=1}^n \nabla f_i(\itr{\vz}{k}_i) }{n} \right\|^2\right] \notag\\
&=& \Exp\left[\left\|  \frac{1}{n}\sum_i^n  \nabla f_i \left(\xbar \right) - \frac{\sum_{i=1}^n \nabla f_i(\itr{\vz}{k}_i)  }{n} \right\|^2\right] \notag\\
&=& \Exp\left[\left\|  \frac{\sum_i^n  \nabla f_i \left(\xbar \right) -\sum_{i=1}^n \nabla f_i(\itr{\vz}{k}_i) }{n} \right\|^2\right] \notag\\
&=&\Exp\left[\left\|\frac{1}{n} \sum_i^n  \left[\nabla f_i \left(\xbar \right) - \nabla f_i(\itr{\vz}{k}_i) \right] \right\|^2\right] \notag\\
&\overset{Jensen}{\leq} & \frac{1}{n} \sum_i^n  \Exp\left[\left\| \nabla f_i \left(\xbar \right) - \nabla f_i(\itr{\vz}{k}_i)\right\|^2\right] \notag\\
&\overset{L-smooth}{\leq} & \frac{L^2}{n} \sum_{i=1}^n  \Exp\left[\left\| \xbar - \itr{\vz}{k}_i \right\|^2\right] \notag\\
&= & \frac{L^2}{n} \sum_{i=1}^n  Q^{(k)}_i  
\end{eqnarray}

Thus we have that:
\begin{eqnarray}
\Exp\left[ f\left(\frac{\bX^{(k+1)}\1_{\nbar}  }{n}\right)\right] &\leq & \Exp\left[f\left(\frac{\bX^{(k)}\1_{\nbar}  }{n}\right)\right]  - \frac{\gamma}{2} \Exp\left[\left\| \nabla  f\left(\frac{\bX^{(k)}\1_{\nbar}  }{n}\right) \right\|^2\right] - \frac{\gamma-L \gamma^2}{2} \Exp\left[\left\|  \frac{\nabla \bm{F}(\bZ^{(k)}) \1_{\nbar}  }{n} \right\|^2\right] ,  \notag\\
&+ &\frac{\gamma L^2}{2 n} \sum_{i=1}^n  Q^{(k)}_i  + \frac{L \gamma^2 \sigma^2}{2 n} 
\end{eqnarray}

By rearranging:
\begin{eqnarray}
\frac{\gamma}{2} \Exp \left[\left\| \nabla  f\left(\frac{\bX^{(k)}\1_{\nbar}  }{n}\right) \right\|^2 \right] + \frac{\gamma-L \gamma^2}{2} \Exp \left[\left\|  \frac{\nabla \bm{F}(\bZ^{(k)}) \1_{\nbar}  }{n} \right\|^2 \right] &\leq & \Exp \left[f\left(\frac{\bX^{(k)}\1_{\nbar}  }{n}\right) \right] - \Exp \left[ f\left(\frac{\bX^{(k+1)}\1_{\nbar}  }{n}\right) \right]  \notag\\
&+ &\frac{L \gamma^2 \sigma^2}{2 n}+\frac{\gamma L^2}{2 n} \sum_{i=1}^n Q^{(k)}_i 
\end{eqnarray}

Let us now sum from $k=0 $ to $k=K-1$:

\begin{eqnarray}
\frac{\gamma}{2} \sum_{k=0}^{K-1}\Exp \left[\left\| \nabla  f\left(\frac{\bX^{(k)}\1_{\nbar}  }{n}\right) \right\|^2 \right] + \frac{\gamma-L \gamma^2}{2} \sum_{k=0}^{K-1} \Exp \left[\left\|  \frac{\nabla \bm{F}(\bZ^{(k)}) \1_{\nbar}  }{n} \right\|^2 \right] &\leq & \sum_{k=0}^{K-1} \left[ \Exp \left[f\left(\frac{\bX^{(k)}\1_{\nbar}  }{n}\right) \right] - \Exp \left[ f\left(\frac{\bX^{(k+1)}\1_{\nbar} }{n}\right) \right] \right] \notag\\
&+ & \sum_{k=0}^{K-1} \frac{L \gamma^2 \sigma^2}{2 n}+\frac{\gamma L^2}{2 n} \sum_{k=0}^{K-1} \sum_{i=1}^n  Q^{(k)}_i  \notag\\
&\leq & \Exp \left[f\left(\frac{\bX^{(0)}\1_{\nbar}  }{n}\right) \right] - \Exp \left[ f\left(\frac{\bX^{(k)}\1_{\nbar} }{n}\right) \right] \notag\\
&+ &\frac{L K \gamma^2 \sigma^2}{2 n} +\frac{\gamma L^2}{2} \sum_{k=0}^{K-1}\frac{1}{n}\sum_{i=1}^n Q^{(k)}_i \notag\\
&\leq & f( \itr{\xbar}{0} ) - f^* \notag\\
&+ &\frac{L K \gamma^2 \sigma^2}{2 n} +\frac{\gamma L^2}{2} \sum_{k=0}^{K-1}  \underbrace{\frac{1}{n}\sum_{i=1}^n Q^{(k)}_i}_{M_k}
\end{eqnarray}
For the last inequality above, recall that we let $f^*$ denote the global infimum of our problem.

Using the bound for the expression $\sum_{k=0}^{K-1}M_k$ from Lemma~\ref{BoundMk} we obtain:

\begin{align*}
	  &\frac{\gamma}{2} \sum_{k=0}^{K-1}\Exp \left[\left\| \nabla  f\left(\frac{\bX^{(k)}\1_{\nbar}  }{n}\right) \right\|^2 \right] + \frac{\gamma-L \gamma^2}{2} \sum_{k=0}^{K-1} \Exp \left[\left\|  \frac{\nabla \bm{F}(\bZ^{(k)}) \1_{\nbar}  }{n} \right\|^2 \right] \\
	\leq& f( \itr{\xbar}{0} ) - f^*+ \frac{L K \gamma^2 \sigma^2}{2 n} \\
	& + \frac{\gamma L^2}{2} \frac{4 \gamma^2 C^2 \sigma^2 K+\gamma C^2 \sigma^2}{(1-q)^2 D_2} + \frac{\gamma L^2}{2} \frac{12\gamma^2 C^2 \zeta^2 K+3 \gamma C^2\zeta^2}{(1-q)^2 D_2}\\
	& + \frac{\gamma L^2}{2}  \left( \frac{12\gamma^2 C^2 +3 \gamma C^2}{(1-q)^2 D_2} \right) \sum^K_{k=0} \E \norm{ \nabla f(\itr{\xbar}{k}) }^2  \\
	& + \frac{\gamma L^2}{2}  \left(\frac{ C^2 + 2\gamma C^2}{(1-q)^2 D_2} \right) \frac{\sum^n_{i=1} \norm{ \itr{\vx_i}{0} }^2}{n}  .
\end{align*}

By rearranging and dividing all terms by $\gamma K$ we obtain:
\begin{align*}
	  &\frac{1}{K} \left( \left[\frac{1}{2}- \frac{L^2}{2}  \left( \frac{12\gamma^2 C^2 +3 \gamma C^2}{(1-q)^2 D_2} \right) \right] \sum^{K-1}_{k=0} \E \norm{ \nabla f(\itr{\xbar}{k}) }^2  + \frac{1-L \gamma}{2} \sum_{k=0}^{K-1} \Exp \left\|  \frac{\nabla \bm{F}(\bZ^{(k)}) \1_{\nbar}  }{n} \right\|^2 \right)\\
	\leq& \frac{f( \itr{\xbar}{0} ) - f^*}{\gamma K}+ \frac{L \gamma \sigma^2}{2 n} + \frac{4  L^2\gamma^2 C^2 \sigma^2+12L^2\gamma^2 C^2 \zeta^2}{2 (1-q)^2 D_2} +\frac{\gamma L^2 C^2 \sigma^2+3  L^2\gamma C^2\zeta^2}{2K (1-q)^2 D_2}\\
	& + \left( \frac{ L^2C^2 + 2 L^2\gamma C^2}{2 (1-q)^2 D_2K} \right) \frac{\sum^n_{i=1} \norm{ \itr{\vx_i}{0} }^2}{n}  .
\end{align*}
By defining $D_1=\left[\frac{1}{2}- \frac{L^2}{2}  \left( \frac{12\gamma^2 C^2 +3 \gamma C^2}{(1-q)^2 D_2} \right) \right]$ the proof is complete.
\end{proof}

\subsection{Proofs of Main Theorems}
\label{ProofMain}
Having present all of the above Lemmas we are now ready to provide the proofs of main Theorems~\ref{thm:avg-convergence} and \ref{secondTheorem}.
\subsubsection{Proof of Theorem~\ref{thm:avg-convergence}}
Let $\gamma \leq \min\left\{\dfrac{(1-q)^2}{60L^2C^2},1 \right\}$. Then:
$$D_2=1 - \dfrac {\gamma^2 12 L^2 C^2}{(1-q)^2} - \dfrac{\gamma 3 L^2 C^2}{(1-q)^2}\overset{(\gamma^2<\gamma)}{\geq} 1 - \dfrac{\gamma 15 L^2 C^2}{(1-q)^2}\geq 1-\frac{1}{4}\geq\frac{1}{2}$$
and 
$$D_1=\frac{1}{2}- \frac{L^2}{2}  \left( \frac{12\gamma^2 C^2 +3 \gamma C^2}{(1-q)^2 D_2} \right)\overset{(\gamma^2<\gamma)}{\geq} \frac{1}{2}- \frac{15\gamma C^2L^2}{2(1-q)^2D_2}\geq \frac{1}{2}-\frac{1}{8 D_2}\geq \frac{1}{4}$$
By substituting the above bounds into the result of Lemma~\ref{MainLemma} and by removing the second term of left hand side we obtain:
\begin{eqnarray}
\frac{1}{4 }\frac{\sum^{K-1}_{k=0} \E \norm{ \nabla f(\itr{\xbar}{k}) }^2 }{K} &=&\frac{1}{K} \left( \frac{1}{4} \sum^{K-1}_{k=0} \E \norm{ \nabla f(\itr{\xbar}{k}) }^2  + \frac{1-L \gamma}{2} \sum_{k=0}^{K-1} \Exp \left\|  \frac{\nabla \bm{F}(\bZ_k) \1_{\nbar}  }{n} \right\|^2 \right)\notag\\
	&\leq& \frac{f( \itr{\xbar}{0} ) - f^*}{\gamma K}+ \frac{L \gamma \sigma^2}{2 n} + \frac{4  L^2\gamma^2 C^2 \sigma^2+12L^2\gamma^2 C^2 \zeta^2}{(1-q)^2 } +\frac{\gamma L^2 C^2 \sigma^2+3  L^2\gamma C^2\zeta^2}{K (1-q)^2}\notag\\
	& +& \left( \frac{ L^2C + 2 L^2\gamma C^2}{(1-q)^2 K} \right) \frac{\sum^n_{i=1} \norm{ \itr{\vx_i}{0} }^2}{n}
\end{eqnarray}
Let us now substitute in the above expression $\gamma=\sqrt{\frac{n}{K}}$. This can be done due to the lower bound (see \eqref{BoundK}) on the total number of iterations $K$ where guarantees that $\sqrt{\frac{n}{K}} \leq \min\left\{\dfrac{(1-q)^2}{60L^2C^2},1 \right\}$.

\begin{eqnarray}
\frac{1}{4 }\frac{\sum^{K-1}_{k=0} \E \norm{ \nabla f(\itr{\xbar}{k}) }^2 }{K} &\leq& \frac{f( \itr{\xbar}{0} ) - f^*}{\gamma K}+ \frac{L \gamma \sigma^2}{2 n} + \gamma^2 \frac{4  L^2 C^2 \sigma^2+12L^2 C^2 \zeta^2}{(1-q)^2 } +\gamma \frac{ L^2 C^2 \sigma^2+3  L^2 C^2\zeta^2}{K (1-q)^2}\notag\\
	& +& \frac{ L^2C}{(1-q)^2 K} \frac{\sum^n_{i=1} \norm{ \itr{\vx_i}{0} }^2}{n}+ \gamma \frac{2 L^2 C^2}{(1-q)^2 K} \frac{\sum^n_{i=1} \norm{ \itr{\vx_i}{0} }^2}{n}\notag\\
&\overset{\gamma=\sqrt{\frac{n}{K}}}{=}& \frac{f( \itr{\xbar}{0} ) - f^*}{\sqrt{n K}}+ \frac{L \sigma^2}{2 \sqrt{nK}} + \frac{n}{K} \frac{4  L^2 C^2 \sigma^2+12L^2 C^2 \zeta^2}{(1-q)^2 } +\sqrt{\frac{n}{K}} \frac{ L^2 C^2 \sigma^2+3  L^2 C^2\zeta^2}{K (1-q)^2}\notag\\
	& +& \frac{ L^2C^2}{(1-q)^2 K} \frac{\sum^n_{i=1} \norm{ \itr{\vx_i}{0} }^2}{n}+ \sqrt{\frac{n}{K}} \frac{2 L^2 C^2}{(1-q)^2 K} \frac{\sum^n_{i=1} \norm{ \itr{\vx_i}{0} }^2}{n}\notag\\
&=& \frac{f( \itr{\xbar}{0} ) - f^*+\frac{L}{2} \sigma^2}{\sqrt{n K}}+ \frac{L^2 C^2 }{K (1-q)^2} \left[(4\sigma^2+12\zeta^2) n+\frac{\sum^n_{i=1} \norm{ \itr{\vx_i}{0} }^2}{n}\right] \notag\\
&+&\frac{\sqrt{n}L^2 C^2}{\sqrt{K} (1-q)^2 K} \left[\sigma^2+3  L^2 C^2\zeta^2 + 2\frac{\sum^n_{i=1} \norm{ \itr{\vx_i}{0} }^2}{n}\right]
\end{eqnarray}
Using again the assumption on the lower bound (\ref{BoundK}) of the total number of iterations $K$, the last two terms of the above expression are bounded by the first term.
Thus,
\begin{eqnarray}
\label{bound2}
\frac{1}{4 }\frac{\sum^{K-1}_{k=0} \E \norm{ \nabla f(\itr{\xbar}{k}) }^2 }{K}  &\leq& 3\frac{f( \itr{\xbar}{0} ) - f^*+\frac{L}{2} \sigma^2}{\sqrt{n K}}
\end{eqnarray}

\subsubsection{Proof of Theorem~\ref{secondTheorem}}
\begin{proof}
From Lemma~\ref{BoundMk} we have that:
\begin{eqnarray}
\frac{1}{K} \sum^{K-1}_{k=0} \itr{M}{k} & \leq & \left( \gamma^2 \frac{4C^2}{(1-q)^2 D_2} \right) \sigma^2   + \left( \gamma \frac{C^2}{(1-q)^2D_2} \right) \frac{\sigma^2 }{K}\notag\\
	&& + \left( \gamma^2 \frac{12 C^2}{(1 - q)^2D_2} \right) \zeta^2  + \left( \frac{\gamma 3 C^2}{(1 - q)^2 D_2} \right) \frac{\zeta^2}{K} \notag\\
	&& + \left( \frac{ C^2}{(1-q)^2D_2 K} + \gamma \frac{2C^2}{(1-q)^2D_2 K} \right) \frac{\sum^n_{i=1} \norm{ \itr{\vx_i}{0} }^2}{n} \notag\\
	&& + \left( \gamma^2 \frac{12 C^2}{(1 - q)^2 D_2} + \gamma \frac{3 C^2}{(1-q)^2 D_2} \right) \frac{\sum^{K-1}_{k=0} \E \norm{ \nabla f(\itr{\xbar}{k}) }^2 }{K} 
\end{eqnarray}
Using the assumptions of Theorem~\ref{thm:avg-convergence} and stepsize $\gamma=\sqrt{\frac{n}{K}}$:
\begin{eqnarray}
\label{bound1}
\frac{1}{K} \sum^{K-1}_{k=0} \itr{M}{k} & \leq & \left( \frac{n}{K} \frac{4C^2}{(1-q)^2 D_2} \right) \sigma^2   + \left(\sqrt{\frac{n}{K}}  \frac{C^2}{(1-q)^2D_2} \right) \frac{\sigma^2 }{K}\notag\\
	&& + \left( \frac{n}{K}  \frac{12 C^2}{(1 - q)^2D_2} \right) \zeta^2  + \left( \frac{\sqrt{\frac{n}{K}} 3 C^2}{(1 - q)^2 D_2} \right) \frac{\zeta^2}{K} \notag\\
	&& + \left( \frac{ C^2}{(1-q)^2D_2 K} + \sqrt{\frac{n}{K}} \frac{2C^2}{(1-q)^2D_2 K} \right) \frac{\sum^n_{i=1} \norm{ \itr{\vx_i}{0} }^2}{n} \notag\\
	&& + \left( \frac{n}{K} \frac{12 C^2}{(1 - q)^2 D_2} + \sqrt{\frac{n}{K}} \frac{3 C^2}{(1-q)^2 D_2} \right) \frac{12\left[f( \itr{\xbar}{0} ) - f^*+\frac{L}{2} \sigma^2\right]}{\sqrt{n K}} \notag\\
	& = & \frac{1}{K}\left[  \frac{4n C^2  \sigma^2}{(1-q)^2 D_2} + \frac{12 n C^2  \zeta^2 }{(1 - q)^2D_2} + \frac{ C^2 \sum^n_{i=1} \norm{ \itr{\vx_i}{0} }^2}{n (1-q)^2D_2 } + \frac{3 \sqrt{n}  C^212\left[f( \itr{\xbar}{0} ) - f^*+\frac{L}{2} \sigma^2\right]}{\sqrt{n}(1-q)^2 D_2}\right]\notag\\
	&+&\frac{1}{K \sqrt{K}} \left[\frac{n \sigma^2 C^2}{(1-q)^2D_2}+\frac{\frac{n} 3 C^2 \zeta^2 }{(1 - q)^2 D_2} +\frac{2 C^2 \sum^n_{i=1} \norm{ \itr{\vx_i}{0} }^2}{(1-q)^2D_2 \sqrt{n}}+\frac{144 \sqrt{n} C^2 \left[f( \itr{\xbar}{0} ) - f^*+\frac{L}{2} \sigma^2\right] }{(1 - q)^2 D_2}\right]\notag\\
	&= &O \left(\frac{1}{K} +\frac{1}{K \sqrt{K}} \right)
\end{eqnarray}
where the Big O notation swallows all constants of our setting $\left (n,L, \sigma, \zeta, C , q, \sum^n_{i=1} \norm{ \itr{x_i}{0} }^2 \text{and} f( \itr{\xbar}{0} ) - f^* \right)$.

Now using the above upper bound \eqref{bound1} and result of Theorem~\ref{thm:avg-convergence} we obtain:
\begin{eqnarray}
\frac{1}{K}\sum_{k=0}^{K-1}\frac{1}{n} \sum^n_{i=1} \Exp \norm{\nabla f(\vz_i^k)}^2 & = & \frac{1}{K} \sum_{k=0}^{K-1}\frac{1}{n} \sum^n_{i=1} \Exp \norm{\nabla f(\vz_i^k)+\nabla f(\itr{\xbar}{k})-\nabla f(\itr{\xbar}{k})}^2 \notag\\
	&\leq &  \frac{1}{K} \sum_{k=0}^{K-1}\frac{1}{n} \sum^n_{i=1} 2 \Exp \norm{\nabla f(\vz_i^k) - \nabla f(\itr{\xbar}{k})}^2 + 2 \Exp \norm{\nabla f(\itr{\xbar}{k})}^2\notag\\
	&=&  \frac{1}{K} \sum_{k=0}^{K-1}\frac{1}{n} \sum^n_{i=1} 2 \Exp \norm{\nabla f(\vz_i^k) - \nabla f(\itr{\xbar}{k})}^2 + \frac{1}{K} \sum_{k=0}^{K-1}\frac{1}{n} \sum^n_{i=1} 2 \Exp \norm{\nabla f(\itr{\xbar}{k})}^2\notag\\
	&=&  2\frac{1}{K} \sum_{k=0}^{K-1}\frac{1}{n} \sum^n_{i=1} \Exp \norm{\nabla f(\vz_i^k) - \nabla f(\itr{\xbar}{k})}^2 + 2\frac{1}{K} \sum_{k=0}^{K-1} \Exp \norm{\nabla f(\itr{\xbar}{k})}^2\notag\\
	&\overset{L-smooth}{=}& 2 L^2 \frac{1}{K} \sum_{k=0}^{K-1}\frac{1}{n} \sum^n_{i=1} \Exp \norm{\vz_i^k - \itr{\xbar}{k}}^2 + 2\frac{1}{K} \sum_{k=0}^{K-1} \Exp \norm{\nabla f(\itr{\xbar}{k})}^2\notag\\
	&\overset{(\ref{bound1})+(\ref{bound2})}{\leq}&  O\left(\frac{1}{\sqrt{nK}} + \frac{1}{K}+\frac{1}{K^{3/2}}\right)
\end{eqnarray}
where again the Big O notation swallows all constants of our setting $\left (n,L, \sigma, \zeta, C , q, \sum^n_{i=1} \norm{ \itr{x_i}{0} }^2 \text{and} f( \itr{\xbar}{0} ) - f^* \right)$.
\end{proof}

\end{document}